\newcommand{\bbE}{\mathbb{E}}
\newcommand{\softmax}{\mathrm{Softmax}}
\newcommand{\gelu}{\mathrm{GELU}}
\newcommand{\sigmoid}{\mathrm{sigmoid}}
\newcommand{\dint}{\mathrm{d}}
\newcommand{\dboijn}{\Delta \beta_{1ij}^{n}}
\newcommand{\dtn}{\Delta \tau^n}
\newcommand{\daijn}{\Delta a_{ij}^{n}}
\newcommand{\dbijn}{\Delta b_{ij}^{n}}
\newcommand{\dvijn}{\Delta \nu_{ij}^{n}}
\newcommand{\dboin}{\Delta \beta_{1i}^{n}}
\newcommand{\dain}{\Delta a_{i}^{n}}
\newcommand{\dbin}{\Delta b_{i}^{n}}
\newcommand{\dvin}{\Delta \nu_{i}^{n}}
\newcommand{\norm}[1]{\|#1\|}
\newcommand{\boin}{\beta_{1i}^n}
\newcommand{\bzin}{\beta_{0i}^n}
\newcommand{\ain}{a_i^n}
\newcommand{\bin}{b_i^n}
\newcommand{\vin}{\nu_i^n}
\newcommand{\boj}{\beta_{1j}^*}
\newcommand{\bzj}{\beta_{0j}^*}
\newcommand{\aj}{a_j^*}
\newcommand{\bj}{b_j^*}
\newcommand{\vj}{\nu_j^*}
\newcommand{\boi}{\beta_{1i}^*}
\newcommand{\bzi}{\beta_{0i}^*}
\newcommand{\ai}{a_i^*}
\newcommand{\bi}{b_i^*}
\newcommand{\vi}{\nu_i^*}
\newcommand{\zerod}{\mathbf{0}_d}
\newcommand{\brj}{\bar{r}_j}
\newcommand{\brone}{\bar{r}_1}
\newcommand{\dbione}{\Delta b_{i1}^{n}}
\newcommand{\dvione}{\Delta \nu_{i1}^{n}}
\newcommand{\bsigma}{\bar{\sigma}}
\DeclareMathOperator*{\argmax}{arg\,max}
\DeclareMathOperator*{\argmin}{arg\,min}
\theoremstyle{plain}
\newtheorem{theorem}{Theorem}[section]
\newtheorem{proposition}[theorem]{Proposition}
\newtheorem{lemma}[theorem]{Lemma}
\theoremstyle{definition}
\newtheorem{definition}[theorem]{Definition}
\theoremstyle{remark}
\icmltitlerunning{Is Temperature Sample Efficient for Softmax Gaussian Mixture of Experts?}
\begin{document}

\twocolumn[
\icmltitle{Is Temperature Sample Efficient for Softmax Gaussian Mixture of Experts?}



\icmlsetsymbol{equal}{*}

\begin{icmlauthorlist}
\icmlauthor{Huy Nguyen}{sds}
\icmlauthor{Pedram Akbarian}{ece}
\icmlauthor{Nhat Ho}{sds}
\end{icmlauthorlist}

\icmlaffiliation{sds}{Department of Statistics and Data Sciences}
\icmlaffiliation{ece}{Department of Electrical and Computer Engineering, The University of Texas at Austin}
\icmlcorrespondingauthor{Huy Nguyen}{huynm@utexas.edu}

\icmlkeywords{Machine Learning, ICML}

\vskip 0.3in
]



\printAffiliationsAndNotice{} 

\begin{abstract}
    Dense-to-sparse gating mixture of experts (MoE) has recently become an effective alternative to a well-known sparse MoE. Rather than fixing the number of activated experts as in the latter model, which could limit the investigation of potential experts, the former model utilizes the temperature to control the softmax weight distribution and the sparsity of the MoE during training in order to stabilize the expert specialization. Nevertheless, while there are previous attempts to theoretically comprehend the sparse MoE, a comprehensive analysis of the dense-to-sparse gating MoE has remained elusive. Therefore, we aim to explore the impacts of the dense-to-sparse gate on the maximum likelihood estimation under the Gaussian MoE in this paper. We demonstrate that due to interactions between the temperature and other model parameters via some partial differential equations, the convergence rates of parameter estimations are slower than any polynomial rates, and could be as slow as $\mathcal{O}(1/\log(n))$, where $n$ denotes the sample size. To address this issue, we propose using a novel activation dense-to-sparse gate, which routes the output of a linear layer to an activation function before delivering them to the softmax function. By imposing linearly independence conditions on the activation function and its derivatives, we show that the parameter estimation rates are significantly improved to polynomial rates. Finally, we conduct a simulation study to empirically validate our theoretical results.
\end{abstract}

\vspace{-0.5em}
\section{Introduction}
\label{sec:introduction}
Mixture of experts (MoE) \cite{Jacob_Jordan-1991,Jordan-1994} is a statistical machine learning framework that aggregates the power of multiple expert networks using softmax as a gating function (weight function) to create a more sophisticated model than a single network. To scale up the model capacity (the number of model parameters) given a fixed computational cost, \cite{shazeer2017topk} have introduced a sparse variant of the MoE model, which turns on only one or a few experts for each input. Thanks to its scalability, sparse MoE models have been widely used in several applications, namely large language models \cite{lepikhin_gshard_2021,Du_Glam_MoE,fedus2021switch,zhou2023brainformers,jiang2024mixtral}, computer vision \cite{dosovitskiy_image_2021,liang_m3vit_2022,Riquelme2021scalingvision}, multi-task learning \cite{hazimeh_dselect_k_2021} and speech recognition \cite{gulati_conformer_2020,You_Speech_MoE}. 


In sparse MoE models, gating functions are concurrently trained with expert networks to route inputs effectively. However, early in training, gating networks may exhibit instability in expert selection due to their inexperience. Additionally, pre-determining the number of activated experts per input can limit exploration of potential experts. To address this, \cite{nie2022evomoe} introduced a dense-to-sparse gate, initially routing inputs to all experts and gradually becoming sparser. This approach strategically controls the temperature of a softmax-based gating function to adjust weight distribution among experts and regulate sparsity in MoE models, promoting stability in expert specialization.

From a theoretical perspective, there have been attempts to comprehend the properties of MoE models. Firstly, \cite{chen2022theory} studied how sparse MoE
layers enhanced the efficacy of neural network learning and explained why they would not collapse into a single model. Another line of work tried to understand the effects of gating functions on the convergence rates of maximum likelihood estimation under Gaussian MoE models. In particular, when the gating function was independent of input, \cite{ho2022gaussian} showed an interaction among expert parameters, which made those rates inversely proportional to the number of over-specified experts. Next, \cite{nguyen2023demystifying} considered a dense softmax gating function, and demonstrated that parameter estimation rates were determined by the solvability of an intricate system of polynomial equations due to another interaction between gating and expert parameters. Subsequently, \cite{nguyen2024statistical} explored a general Top-K sparse softmax gating function. They revealed that activating only one expert, i.e., $K=1$, makes the previous interaction between expert and gating parameters disappear, and thus, improved the parameter estimation rates significantly. However, a comprehensive theoretical analysis of the dense-to-sparse gate has remained missing in the literature.

In this work, we focus on investigating whether the temperature in the dense-to-sparse gate is sample efficient or not under the parameter estimation problem of the Gaussian MoE model. For that purpose, we now present the formulation of that model formally.

\textbf{Problem setup.} Suppose that the data $\{(X_i,Y_i)\}_{i=1}^{n}\subset\mathbb{R}^d\times\mathbb{R}$ are i.i.d sampled from the dense-to-sparse gating Gaussian mixture of experts, which is associated with the conditional density function $g_{G_*}(Y|X)$ defined as:
\begin{align}
    \label{eq:standard_density}
    \sum_{i=1}^{k_*}\softmax\Big(\dfrac{(\beta^*_{1i})^{\top}X+\beta^*_{0i}}{\tau^*}\Big)\cdot f(Y|(a^*_i)^{\top}X+b^*_i,\nu^*_i),
\end{align}
where $G_*:=\sum_{i=1}^{k_*}\exp(\bzi/\tau^*)\delta_{(\boi,\tau^*,\ai,\bi,\vi)}$ is a true but unknown \emph{mixing measure} (i.e., a weighted sum of Dirac measures $\delta$) associated with true parameters $(\bzi,\boi,\tau^*,\ai,\bi,\vi)$ for $i\in\{1,2,\ldots,k_*\}$. Here, $\tau^*$ is the softmax temperature which adjusts the sparsity of the MoE models. When $\tau^*$ increases, the weight distribution becomes more uniform. On the other hand, when $\tau^*$ approaches zero, that distribution turns into one-hot. Meanwhile, $f(\cdot|\mu,\nu)$ stands for a univariate Gaussian density function with mean $\mu$ and variance $\nu$. For ease of presentation, we consider $k_*$ linear experts of the form $a^{\top}X+b$ as the results for general expert settings, including deep neural network, can be achieved in a similar fashion. Additionally, we define for any vector $v=(v_1,\ldots,v_{k_*})\in\mathbb{R}^{k_*}$ that $\softmax(v_i):={\exp(v_i)}/{\sum_{j=1}^{k_*}\exp(v_j)}$.

\textbf{Maximum likelihood estimation.} To estimate the parameters of model~\eqref{eq:standard_density}, we propose using the maximum likelihood method as follows:
\begin{align}
    \label{eq:MLE}
    \widehat{G}_n:=\argmax_{G}\frac{1}{n}\sum_{i=1}^{n}\log(g_{G}(Y_i|X_i)).
\end{align}
When the true number of expert $k_*$ is known (\emph{exact-specified settings}), the above maximum is taken over the set of all mixing measures of order $k_*$ denoted by $\mathcal{E}_{k_*}(\Theta):=\{G=\sum_{i=1}^{k_*}\exp(\beta_{0i}/\tau)\delta_{(\beta_{1i},\tau,a_i,b_i,\nu_i)}:(\beta_{0i},\beta_{1i},\tau,a_i,b_i,\nu_i)\in\Theta\}$. 
Conversely, when $k_*$ is unknown and the true model~\eqref{eq:standard_density} is over-specified by a Gaussian mixture of $k$ experts where $k>k_*$ (\emph{over-specified settings}), the maximum is subject to the set of all mixing measures of order at most $k$, i.e., $\mathcal{G}_{k}(\Theta):=\{G=\sum_{i=1}^{k'}\exp(\beta_{0i}/\tau)\delta_{(\beta_{1i},\tau,a_i,b_i,\nu_i)}: 1\leq k'\leq k, \ (\beta_{0i},\beta_{1i},\tau,a_i,b_i,\nu_i)\in\Theta\}$.

\textbf{Assumptions.} In our analysis, we have four main assumptions on the parameters: 

\textbf{(A.1)} The parameter space $\Theta$ is a compact subset of $\mathbb{R}\times\mathbb{R}^d\times\mathbb{R}_+\times\mathbb{R}^d\times\mathbb{R}\times\mathbb{R}_+$, and the input space $\mathcal{X}$ is bounded; 

\textbf{(A.2)} $\beta^*_{1k_*}=\zerod$ and $\beta^*_{0k_*}=0$; 

\textbf{(A.3)} $(a^*_1,b^*_1,\nu^*_1),\ldots,(a^*_{k_*},b^*_{k_*},\nu^*_{k_*})$ are pairwise distinct; 

\textbf{(A.4)} At least one among $\beta^*_{11},\ldots,\beta^*_{1k_*}$ is non-zero. 

Above, the first assumption helps ensure the convergence of parameter estimation, while the second guarantees that the dense-to-sparse gating Gaussian MoE model is identifiable (see Appendix~\ref{sec:identifiability}). Next, the third one is to make experts in model~\eqref{eq:standard_density} pairwise distinct. Finally, the last assumption makes sure that the gating function hinges on the input $X$.

\textbf{Technical challenges.} The softmax temperature leads to two fundamental challenges in theory: 

\textbf{(C.1) Temperature's interaction with other parameters.} To establish a parameter estimation rate given a density estimation rate, we need to decompose the density discrepancy $g_{\widehat{G}_n}(Y|X)-g_{G_*}(Y|X)$ into a combination of linearly independent terms. This can be done by applying Taylor expansions to the softmax's numerator $F(Y|X,\omega):=\exp(\frac{\beta_{1}^{\top}X}{\tau})f(Y|a^{\top}X+b,\nu)$, where $\omega:=(\beta_{1},\tau,a,b,\nu)$. However, we realize that the temperature interacts with both gating and expert parameters via two following partial differential equations (PDEs), which induce a number of linearly dependent terms:
\begin{align}
    \label{eq:PDE_1}
    \frac{\partial F}{\partial\tau}&=\frac{1}{\tau}\cdot\beta_{1}^{\top}\frac{\partial F}{\partial\beta_1},\\
    \label{eq:PDE_2}
    \frac{\partial^2 F}{\partial\tau~\partial b}&=\frac{1}{\tau^2}\cdot\beta_{1}^{\top}\frac{\partial F}{\partial a}.
\end{align}
Intuitively, the first PDE reveals that there is an intrinsic interaction between the temperature $\tau$ and the gating parameter $\beta_1$. Meanwhile, the second PDE indicates that the temperature also interacts with the expert parameters $a$ and $b$. Although parameter interactions expressed in the language of PDEs have been observed in \cite{nguyen2023demystifying}, the structures of the above interactions are 
Furthermore, these interactions are
substantially more serious than those in \cite{ho2022gaussian,nguyen2023demystifying,nguyen2024statistical}. More specifically, we will show in Section~\ref{sec:standard_gate} that due to the above PDEs, the parameter estimation rates are slower than any polynomial rates, and thus, could be as slow as $1/\log(n)$, where $n$ denotes the sample size. Such phenomenon has never been observed in previous work. 

\textbf{(C.2) Rate improvement.} From the previous observation, it is essential to propose a method to accelerate the parameter estimation rates. To enhance slow rates caused by the interaction between gating and expert parameters, \cite{nguyen2024general} suggested transforming the inputs using a 'modified' function $M$, e.g. $\log(|\cdot|)$, $\cos(\cdot)$, prior to delivering them to the gating network, i.e. $\softmax\Big(\frac{(\beta^*_{1i})^{\top}M(X)+\beta^*_{0i}}{\tau^*}\Big)$. However, it can be verified that the PDEs~\eqref{eq:PDE_1} and \eqref{eq:PDE_2} still holds true with the corresponding function $F(Y|X,\omega):=\exp(\frac{\beta_{1}^{\top}M(X)}{\tau})f(Y|a^{\top}X+b,\nu)$. Therefore, we have to come up with a novel solution in this work.

\begin{table*}[!ht]
\caption{Summary of density estimation rates and parameter estimation rates under the (activation) dense-to-sparse gating Gaussian MoE. In this table, the function $\bar{r}(\cdot)$ represents for the solvability of the system of polynomial equations~\eqref{eq:system_r_bar} with $\bar{r}(2)=4$ and $\bar{r}(3)=6$. Additionally, $\mathcal{A}_j$ denotes a Voronoi cell given in equation~\eqref{eq:Voronoi_cells}.}
\textbf{}\\
\centering
\begin{tabular}{ | c | c | c |c|c|c|} 
\hline
\multicolumn{6}{|c|}{\textbf{Dense-to-sparse gating Gaussian MoE}}\\
\hline
\textbf{Setting}  & $g_{G_{*}}(Y|X)$ & $\beta^*_{1j},\tau^*$ & $a^*_{j}$& $b_{j}^{*}$ & $\nu_{j}^{*}$\\
\hline 
{Exact-specified}  & $\widetilde{\mathcal{O}}(n^{-1/2})$ & Slower than $\widetilde{\mathcal{O}}(n^{-1/2r}), \forall r\geq 1$ &$\widetilde{\mathcal{O}}(n^{-1/2})$ &$\widetilde{\mathcal{O}}(n^{-1/2})$ &$\widetilde{\mathcal{O}}(n^{-1/2})$ \\
\hline
{Over-specified}  &$\widetilde{\mathcal{O}}(n^{-1/2})$  & \multicolumn{2}{c|}{Slower than $\widetilde{\mathcal{O}}(n^{-1/2r}),\forall r\geq 1$}  & $\widetilde{\mathcal{O}}(n^{-1/2\bar{r}(|\mathcal{A}_{j}|)})$ &$\widetilde{\mathcal{O}}(n^{-1/\bar{r}(|\mathcal{A}_{j}|)})$ \\
\hline
\multicolumn{6}{|c|}{\textbf{Activation Dense-to-sparse gating Gaussian MoE}}\\
\hline
\textbf{Setting}  & $p_{G_{*}}(Y|X)$ & $\beta^*_{1j},\tau^*$ & $a^*_{j}$& $b_{j}^{*}$ & $\nu_{j}^{*}$\\
\hline 
{Exact-specified}  & $\widetilde{\mathcal{O}}(n^{-1/2})$ & \multicolumn{4}{c|}{$\widetilde{\mathcal{O}}(n^{-1/2})$} \\
\hline
{Over-specified}  &$\widetilde{\mathcal{O}}(n^{-1/2})$  & \multicolumn{2}{c|}{$\widetilde{\mathcal{O}}(n^{-1/4})$}  & $\widetilde{\mathcal{O}}(n^{-1/2\bar{r}(|\mathcal{A}_{j}|)})$ &$\widetilde{\mathcal{O}}(n^{-1/\bar{r}(|\mathcal{A}_{j}|)})$ \\
\hline
\end{tabular}
\label{table:parameter_rates}
\end{table*}

\textbf{Main contributions.} In this paper, we conduct a convergence analysis of density and parameter estimations under the dense-to-sparse gating Gaussian MoE. Our contributions are two-fold and can be summarized as follows:

\textbf{1. Dense-to-sparse gating function:} Equipped with this gating function, we first establish the convergence rate of density estimation under the Total Variation distance $\bbE_X[V(g_{\widehat{G}_n}(\cdot|X),g_{G_*}(\cdot|X))]=\widetilde{\mathcal{O}}(n^{-1/2})$, which is parametric on the sample size $n$. Given this result, we then demonstrate that under the exact-specified settings, the estimation rates for $\beta^*_{1i},\tau^*$ are slower than any polynomial rates owing to the PDE~\eqref{eq:PDE_1}, and therefore, could be $1/\log(n)$.  Meanwhile, those for $a^*_{i},b^*_{i},\nu^*_{i}$ are significantly faster, standing at $\widetilde{\mathcal{O}}(n^{-1/2})$. Under the over-specified settings, we show that the rates for estimating $\beta^*_{1i},\tau^*$ remain unchanged, whereas that for $a^*_{i}$ becomes slower than any polynomial rates due to the PDE~\eqref{eq:PDE_2}. Additionally, the estimation rates for $b^*_{i},\nu^*_{i}$ depend on the solvability of a system of polynomial equations.

\textbf{2. Activation dense-to-sparse gating function.} To enhance the previous slow rates, we propose a novel class of gating functions called \emph{activation dense-to-sparse} given by $\softmax\Big(\frac{\sigma((\beta^*_{1i})^{\top}X)+\beta^*_{0i}}{\tau^*}\Big)$. Here, $\sigma(\cdot)$ is an activation function satisfying conditions in Definition~\ref{def:modified_function_exact} (resp. Definition~\ref{def:modified_function_over}), which make the interactions of temperature with other parameters in equations~\eqref{eq:PDE_1} and \eqref{eq:PDE_2} vanish under the exact-specified (resp. over-specified) settings. As a consequence, we rigorously prove that $\beta^*_{1i}$, $\tau^*$ and $a^*_{i}$ share the same considerably improved estimation rates of orders $\widetilde{\mathcal{O}}(n^{-1/2})$ and $\widetilde{\mathcal{O}}(n^{-1/4})$ under those settings, respectively.

\textbf{Outline.} The paper proceeds as follows. In Section~\ref{sec:standard_gate}, we derive the convergence rates of density estimation and parameter estimation under the dense-to-sparse gating Gaussian MoE. Subsequently, we carry out the previous analysis for the Gaussian MoE with the novel activation dense-to-sparse gate in Section~\ref{sec:modified_gate}. Then, we run some numerical experiments in Section~\ref{sec:implications} to empirically verify our theoretical results before concluding the paper in Section~\ref{sec:conclusion}. Finally, rigorous proofs and further details of experiments are provided in the supplementary material.

\textbf{Notations.} We denote $[n]: = \{1, 2, \ldots, n\}$ for any positive integer $n$. Additionally, the notation $|S|$ indicates the cardinality of any set $S$. For any vectors $v:=(v_1,v_2,\ldots,v_d) \in \mathbb{R}^{d}$ and $\alpha:=(\alpha_1,\alpha_2,\ldots,\alpha_d)\in\mathbb{N}^d$, we let $v^{\alpha}=v_{1}^{\alpha_{1}}v_{2}^{\alpha_{2}}\ldots v_{d}^{\alpha_{d}}$, $|v|:=v_1+v_2+\ldots+v_d$ and $\alpha!:=\alpha_{1}!\alpha_{2}!\ldots \alpha_{d}!$, while $\|v\|$ stands for its $2$-norm value. Given any two positive sequences $\{a_n\}_{n\geq 1}$ and $\{b_n\}_{n\geq 1}$, we write $a_n = \mathcal{O}(b_n)$ or $a_{n} \lesssim b_{n}$ if $a_n \leq C b_n$ for all $ n\in\mathbb{N}$, where $C > 0$ is some universal constant. Furthermore, we write $a_n = \widetilde{\mathcal{O}}(b_n)$ to indicate $a_{n} \lesssim b_{n}$ up to some logarithmic factors. Finally, for any two probability density functions $p,q$ dominated by the Lebesgue measure $\mu$, we denote $h(p,q) = \Big(\frac 1 2 \int (\sqrt p - \sqrt q)^2 d\mu\Big)^{1/2}$ as their Hellinger distance and $V(p,q) = \frac 1 2 \int |p-q| d\mu$ as their Total Variation distance.


\vspace{-0.5em}
\section{Dense-to-sparse Gating Function}
\label{sec:standard_gate}
In this section, we characterize the density and parameter estimation rates for the dense-to-sparse gating Gaussian MoE under both the exact-specified and over-specified settings.

We start with providing the convergence rate of the density estimation $g_{\widehat{G}_n}$ to the true density $g_{G_*}$ under the Total Variation distance in the following theorem:

\begin{theorem}
    \label{theorem:density_rate}
    Under the Total Variation distance, the density estimation $g_{\widehat{G}_n}(Y|X)$ converges to the true density $g_{G_*}(Y|X)$ at the following rate:
    \begin{align*}
        \bbE_X[V(g_{\widehat{G}_n}(\cdot|X),g_{G_*}(\cdot|X))]=\widetilde{\mathcal{O}}(n^{-1/2}).
    \end{align*}
\end{theorem}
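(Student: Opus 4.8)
The plan is to follow the by-now standard empirical-process route for establishing parametric-up-to-logarithmic density estimation rates of the MLE in softmax-gated mixture models, in the spirit of van de Geer's theory and its adaptations in \cite{nguyen2023demystifying,nguyen2024statistical}. First I would reduce the claim to a Hellinger bound for joint densities. Let $q$ denote the (fixed) marginal density of $X$ and set $\bar g_G(x,y):=g_G(y|x)\,q(x)$; since the objective in \eqref{eq:MLE} differs from $\frac1n\sum_i\log\bar g_G(X_i,Y_i)$ only by the $G$-free term $\frac1n\sum_i\log q(X_i)$, the estimator $\widehat G_n$ is also the MLE over the class of joint densities $\{\bar g_G:G\in\mathcal{G}_k(\Theta)\}$ (resp.\ $\mathcal{E}_{k_*}(\Theta)$ in the exact-specified case). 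Using $h^2(\bar g_G,\bar g_{G_*})=\bbE_X[h^2(g_G(\cdot|X),g_{G_*}(\cdot|X))]$, the pointwise inequality $V\le\sqrt2\,h$, and the Cauchy--Schwarz inequality, one gets
\begin{align*}
    \bbE_X\big[V(g_{\widehat G_n}(\cdot|X),g_{G_*}(\cdot|X))\big]\ \le\ \sqrt2\,h(\bar g_{\widehat G_n},\bar g_{G_*}),
\end{align*}
so it suffices to prove $h(\bar g_{\widehat G_n},\bar g_{G_*})=\widetilde{\mathcal{O}}(n^{-1/2})$.

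The core step is a bracketing-entropy estimate: I would show that for all small $\epsilon>0$,
\begin{align*}
    H_B\big(\epsilon,\{\bar g_G:G\in\mathcal{G}_k(\Theta)\},h\big)\ \lesssim\ \log(1/\epsilon).
\end{align*}
This uses Assumption (A.1) in two ways: $\Theta$ being compact forces the coordinates $\tau$ and $\nu_i$ to stay bounded away from $0$ and $\infty$, so that $1/\tau$ and the Gaussian normalizing constants are uniformly bounded, and $\mathcal{X}$ being bounded makes $\beta_1^\top x$, $a^\top x$ bounded over $x\in\mathcal{X}$. Consequently the map $\omega=(\beta_1,\tau,a,b,\nu)\mapsto\softmax\big(\frac{\beta_1^\top x+\beta_0}{\tau}\big)f(y|a^\top x+b,\nu)$ is Lipschitz in $\omega$ uniformly over $x\in\mathcal{X}$ and admits an $L^1(\dint y)$-integrable envelope; summing $k$ such components (softmax weights lie in $[0,1]$) and multiplying by $q$ preserves both properties. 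An $\epsilon$-net of $\Theta$ of cardinality $\mathcal{O}(\epsilon^{-D})$, with $D$ the total parameter dimension, then induces an $\mathcal{O}(\epsilon)$-bracketing of $\{\bar g_G\}$ through the usual lower/upper-envelope construction, and taking logarithms yields the displayed bound.

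Given this entropy bound, the entropy integral satisfies, for a universal constant $c_0$,
\begin{align*}
    \int_{\delta^2/2^{13}}^{\delta}H_B^{1/2}\big(u,\{\bar g_G:G\in\mathcal{G}_k(\Theta)\},h\big)\,\dint u\ \lesssim\ \delta\sqrt{\log(1/\delta)}\ \le\ c_0\,\sqrt n\,\delta^2
\end{align*}
once $\delta=\delta_n:=C\sqrt{\log n/n}$ with $C$ large enough. Invoking the standard MLE convergence theorem (Theorem~7.4 of van de Geer, in the mixture-model form used in \cite{nguyen2023demystifying,nguyen2024statistical}) then gives $\bbP\big(h(\bar g_{\widehat G_n},\bar g_{G_*})>\delta_n\big)\lesssim\exp(-c'n\delta_n^2)=n^{-c'C^2}$, hence $h(\bar g_{\widehat G_n},\bar g_{G_*})=\widetilde{\mathcal{O}}(n^{-1/2})$. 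Combined with the reduction above this proves the theorem, and the exact-specified case is identical with $\mathcal{E}_{k_*}(\Theta)$ replacing $\mathcal{G}_k(\Theta)$.

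The only nonroutine point is the uniform Lipschitz/envelope estimate for the temperature coordinate in the second step: the factor $1/\tau$ inside the softmax numerator $F(Y|X,\omega)$ must be kept under control, which is exactly where compactness of $\Theta$ (bounding $\tau$ away from $0$) enters. Once this is in hand, the remainder is a direct adaptation of existing arguments, and I expect no genuine obstacle.
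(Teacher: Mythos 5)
Your proposal is correct and follows essentially the same route as the paper: the paper also invokes van de Geer's Theorem 7.4, establishes the bracketing-entropy bound $H_B(\varepsilon,\mathcal{G}_k(\Theta),h)\lesssim\log(1/\varepsilon)$ via a parametric $\eta$-net of the compact $\Theta$, Lipschitz estimates in the parameters (with $1/\tau$ controlled by compactness), and a Gaussian envelope, and then takes $\delta_n\asymp\sqrt{\log n/n}$ to conclude the Hellinger (hence Total Variation) rate $\widetilde{\mathcal{O}}(n^{-1/2})$. Your reformulation through the joint densities $\bar g_G = g_G(\cdot|x)q(x)$ and Cauchy--Schwarz is only a cosmetic variation on the paper's direct use of $\bbE_X[h(\cdot)]$ and $h\geq V$.
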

We leverage fundamental results on density estimation for M-estimator in \cite{vandeGeer-00} to prove Theorem~\ref{theorem:density_rate} in Appendix~\ref{appendix:density_rate}.  It follows from the above bound that the density estimation rate is parametric on the sample size $n$. This results also indicates that if the Total Variation lower bound $\bbE_X[V(g_{\widehat{G}_n}(\cdot|X),g_{G_*}(\cdot|X))]\gtrsim\mathcal{D}(\widehat{G}_n,G_*)$, where  $\mathcal{D}$ is some loss function among parameters, then we obtain the parameter estimation rate $\mathcal{D}(\widehat{G}_n,G_*)=\widetilde{\mathcal{O}}(n^{-1/2})$. Now, we are ready to precisely capture those rates under the exact-specified and over-specified settings in Section~\ref{sec:standard_exact} and Section~\ref{sec:standard_over}, respectively.

\vspace{-0.5em}
\subsection{Exact-specified Settings}
\vspace{-0.5em}
\label{sec:standard_exact}
Before diving deeper into the parameter estimation problem under the exact-specified settings, let us introduce a notion of Voronoi cells \cite{manole22a}, which are then used to construct our loss functions.

\textbf{Voronoi cells.} Assume that a mixing measure $G$ has $k'$ components $\omega_i:=(\beta_{1i},\tau,a_i,b_i,\nu_i)$. Then, we distribute these components to the Voronoi cells $\mathcal{A}_j\equiv\mathcal{A}_j(G)$ generated by the components $\omega^*_j:=(\beta^*_{1j},\tau^*,a^*_j,b^*_j,\nu^*_j)$ of $G_*$, which are defined as
\begin{align}
    \label{eq:Voronoi_cells}
    \mathcal{A}_j:=\{i\in[k']:\|\omega_i-\omega^*_j\|\leq\|\omega_i-\omega^*_{\ell}\|,\forall \ell\neq j\}.
\end{align}
For instance, since the MLE $\widehat{G}_n$ has $k_*$ components under this setting, each Voronoi cell $\mathcal{A}_{j}(\widehat{G}_n)$ has exactly one element when the sample size $n$ is sufficiently large.

\textbf{Voronoi loss.} Let us define $K_{ij}(\kappa_1,\kappa_2,\kappa_3,\kappa_4,\kappa_5):=\norm{\Delta \beta_{1ij}}^{\kappa_1}+|\Delta\tau|^{\kappa_2}+\norm{\Delta a_{ij}}^{\kappa_3}+|\Delta b_{ij}|^{\kappa_4}+|\Delta\nu_{ij}|^{\kappa_5}$, where $\Delta\beta_{1ij}:=\beta_{1i}-\beta_{1j}$, $\Delta\tau:=\tau-\tau^*$, $\Delta a_{ij}:=a_{i}-a^*_{j}$, $\Delta b_{ij}:=b_{i}-b^*_{j}$ and $\Delta\nu_{ij}:=\nu_{i}-\nu^*_{j}$. Then, the Voronoi loss of interest is given by
\begin{align}
    \label{eq:loss_log_exact}
    \mathcal{D}_{1,r}(G,G_*):=
    \sum_{j=1}^{k_*}\Big|\sum_{i\in\mathcal{A}_j}\exp\Big(\frac{\beta_{0i}}{\tau}\Big)-\exp\Big(\frac{\beta^*_{0j}}{\tau^*}\Big)\Big|\nonumber\\
    +\sum_{j=1}^{k_*}\sum_{i\in\mathcal{A}_j}\exp\Big(\frac{\beta_{0i}}{\tau}\Big)K_{ij}(r,r,r,r,r)
\end{align}
Next, let us recall that when using the dense-to-sparse gate, there are two interactions of the softmax temperature $\tau$ with gating parameter $\beta_{1}$ and expert parameters $a,b$:
\begin{align}
    \label{eq:recall_PDEs}
    \frac{\partial F}{\partial\tau}=\frac{1}{\tau}\cdot\beta_{1}^{\top}\frac{\partial F}{\partial\beta_1};\quad 
    \frac{\partial^2 F}{\partial\tau~\partial b}=\frac{1}{\tau^2}\cdot\beta_{1}^{\top}\frac{\partial F}{\partial a},
\end{align}
where $F(Y|X,\omega):=\exp(\frac{\beta_{1}^{\top}X}{\tau})f(Y|a^{\top}X+b,\nu)$. Unfortunately, such interactions are so serious that the Total Variation lower bound $\bbE_X[V(g_{\widehat{G}_n}(\cdot|X),g_{G_*}(\cdot|X))]\gtrsim\mathcal{D}_{1,r}(\widehat{G}_n,G_*)$ does not hold true, and thus, we cannot achieve the bound $\mathcal{D}_{1,r}(\widehat{G}_n,G_*)=\widetilde{\mathcal{O}}(n^{-1/2})$ as discussed below Theorem~\ref{theorem:density_rate}. Instead, we show in Appendix~\ref{appendix:log_rate_exact} that 
\begin{align*}
    \inf_{G\in\mathcal{E}_{k_*}(\Theta):\mathcal{D}_{1,r}(G,G_*)\leq\varepsilon}\frac{\bbE_X[V(g_{G}(\cdot|X),g_{G_*}(\cdot|X))]}{\mathcal{D}_{1,r}(G,G_*)}\to0,
\end{align*}
as $\varepsilon\to0$, for any $r\geq 1$. This result leads to the following minimax lower bound of parameter estimation:
\begin{theorem}
    \label{theorem:log_rate_exact}
    Under the exact-specified settings, the following minimax lower bound of estimating $G_*$ holds true for any $r\geq 1$:
    \begin{align*}
        \inf_{\overline{G}_n\in\mathcal{E}_{k_*}(\Theta)}\sup_{G\in\mathcal{E}_{k_*}(\Theta)}\bbE_{g_{G}}[\mathcal{D}_{1,r}(\overline{G}_n,G)]\gtrsim n^{-1/2}.
    \end{align*}
    Here, the notation $\bbE_{g_{G}}$ indicates the expectation taken w.r.t the product measure with mixture density $g^n_{G}$.
\end{theorem}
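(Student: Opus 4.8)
The plan is to invoke Le Cam's two‑point method. Fix two mixing measures $G_0,G_1\in\mathcal{E}_{k_*}(\Theta)$ with $G_0\neq G_1$, and let $g^n_{G_0},g^n_{G_1}$ denote the laws of the $n$ i.i.d.\ samples $\{(X_i,Y_i)\}_{i=1}^n$ under $G_0$ and $G_1$. Since $\mathcal{D}_{1,r}$ is not a metric, I would first record the quasi‑triangle inequality $\mathcal{D}_{1,r}(G_0,G_1)\lesssim\mathcal{D}_{1,r}(\overline{G}_n,G_0)+\mathcal{D}_{1,r}(\overline{G}_n,G_1)$, valid for $\overline{G}_n\in\mathcal{E}_{k_*}(\Theta)$ (the cell‑by‑cell argument is the same as in existing Voronoi‑loss analyses, the case where one of the two terms is large being trivial). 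Combining it with the standard Le Cam reduction via a nearest‑hypothesis test yields, for every estimator $\overline{G}_n$,
\[
\max_{G\in\{G_0,G_1\}}\bbE_{g_G}\big[\mathcal{D}_{1,r}(\overline{G}_n,G)\big]\ \gtrsim\ \mathcal{D}_{1,r}(G_0,G_1)\,\Big(1-V\big(g^n_{G_0},g^n_{G_1}\big)\Big).
\]
Because $\sup_{G\in\mathcal{E}_{k_*}(\Theta)}(\cdot)\ge\max_{G\in\{G_0,G_1\}}(\cdot)$, it therefore suffices to exhibit, for each $n$, a pair $(G_0,G_1)$ with (i) $V(g^n_{G_0},g^n_{G_1})\le\tfrac12$ and (ii) $\mathcal{D}_{1,r}(G_0,G_1)\gtrsim n^{-1/2}$.

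For the second point I would take $G_0:=G_*$ and let $G_1$ be the perturbation of $G_*$ obtained by fixing every atom of $G_*$ and replacing one gating bias, say $\beta^*_{01}$, by $\beta^*_{01}+\varepsilon_n$ with $\varepsilon_n:=c\,n^{-1/2}$ for a small constant $c>0$; this stays in $\mathcal{E}_{k_*}(\Theta)$ for $n$ large (possible since $\Theta$ has nonempty interior in the $\beta_0$‑coordinate and $k_*\ge2$ by (A.4)). Then the Voronoi cells $\mathcal{A}_j(G_1)$ are singletons and, since $\beta_0$ enters only the mixture weights and not the atoms, every $K_{ij}$‑term in \eqref{eq:loss_log_exact} vanishes, so $\mathcal{D}_{1,r}(G_*,G_1)$ reduces to the single mass‑mismatch term $\big|\exp((\beta^*_{01}+\varepsilon_n)/\tau^*)-\exp(\beta^*_{01}/\tau^*)\big|\asymp\varepsilon_n$, which gives (ii). For (i), a first‑order Taylor expansion of the softmax weights---legitimate because $\mathcal{X}$ is bounded and $\tau^*>0$---gives $|g_{G_1}(Y|X)-g_{G_*}(Y|X)|\lesssim\varepsilon_n\max_i f(Y|(a^*_i)^\top X+b^*_i,\nu^*_i)$, and the usual bound $h^2\le\tfrac12\int(g_{G_1}-g_{G_*})^2/g_{G_*}$ together with $g_{G_*}(Y|X)\gtrsim\max_i f(Y|(a^*_i)^\top X+b^*_i,\nu^*_i)$ on the bounded domain yields $\bbE_X[h^2(g_{G_1}(\cdot|X),g_{G_*}(\cdot|X))]\lesssim\varepsilon_n^2$. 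Tensorizing Hellinger, $h^2(g^n_{G_*},g^n_{G_1})\le n\,\bbE_X[h^2(\cdots)]\lesssim n\varepsilon_n^2=Cc^2$, hence $V(g^n_{G_*},g^n_{G_1})\le\sqrt2\,h(g^n_{G_*},g^n_{G_1})\le\tfrac12$ once $c$ is chosen small. Plugging (i)--(ii) into the displayed Le Cam bound gives $\inf_{\overline{G}_n}\sup_{G\in\mathcal{E}_{k_*}(\Theta)}\bbE_{g_G}[\mathcal{D}_{1,r}(\overline{G}_n,G)]\gtrsim\varepsilon_n\asymp n^{-1/2}$. Alternatively, one may take for $G_1$ the PDE‑directed family underlying the displayed inequality preceding the theorem: the cancellation induced by \eqref{eq:recall_PDEs} makes $\bbE_X[h^2(g_{G_1},g_{G_*})]$ of even smaller order than $\varepsilon_n^2$ while still $\mathcal{D}_{1,r}(G_*,G_1)\asymp\varepsilon_n$, so (i)--(ii) hold with extra room to spare.

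\textbf{Where the work lies.} Once the two‑point family is identified, the computation is essentially forced, so the effort concentrates in two places. First, one must certify that the chosen perturbation is \emph{genuinely} of size $\varepsilon_n$ in the loss, i.e.\ establish the lower bound $\mathcal{D}_{1,r}(G_*,G_1)\gtrsim\varepsilon_n$: for the gating‑bias perturbation this is immediate, but for the PDE‑directed family one has to verify that the mass‑mismatch term and the weighted $r$‑th powers of the parameter gaps do not accidentally cancel, and that the perturbed components remain in $\Theta$ with singleton Voronoi cells. Second, one must supply the quasi‑triangle inequality for the non‑metric loss $\mathcal{D}_{1,r}$ over competing measures in $\mathcal{E}_{k_*}(\Theta)$, which I expect to be the main obstacle: it follows the familiar pattern of Voronoi‑loss triangle inequalities but still requires a careful cell‑by‑cell argument. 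Finally, note that $n^{-1/2}$ is the natural ceiling for any indistinguishability‑based bound here, since \cref{theorem:density_rate} already caps the density estimation rate at $\widetilde{\mathcal{O}}(n^{-1/2})$; extracting a stronger (e.g.\ $1/\log n$) lower bound would require pinning down the precise super‑polynomial decay of $h(g_{G_1},g_{G_*})$ in $\varepsilon$ along the PDE‑directed family, which is not needed for the stated claim.
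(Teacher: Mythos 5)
Your proposal is correct in substance and its skeleton coincides with the paper's: the paper's Lemma~\ref{lemma:minimax_exact} is exactly your Le Cam reduction (including the asserted-but-not-proved weak triangle inequality for $\mathcal{D}_{1,r}$, so you incur no extra gap there), and the whole proof then amounts to exhibiting a pair/sequence whose total variation is small relative to its $\mathcal{D}_{1,r}$-separation. Where you genuinely differ is in the choice of that pair. The paper takes the direction dictated by the first PDE in \eqref{eq:recall_PDEs}: it rescales $\beta^*_{1i}\mapsto(1+t_n/\tau^*)\beta^*_{1i}$, $\tau^*\mapsto\tau^*+t_n$ and adjusts $\beta_{0i}$ so the weights are unchanged; along this direction $\beta^n_{1i}/\tau^n=\beta^*_{1i}/\tau^*$ exactly, so the conditional density does not move at all (this is the scale non-identifiability of Proposition~\ref{prop:standard_identifiability}), while $\mathcal{D}_{1,r}(G_n,G_*)\asymp t_n^r$ through the $K_{ij}(r,\dots,r)$ terms; hence the ratio vanishes for every $r$ and the two-point bound is driven by the parameter-gap part of the loss. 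Your primary pair perturbs only the bias $\beta^*_{01}$ by $\varepsilon_n\asymp n^{-1/2}$, so the separation is carried entirely by the first (weight-mismatch) term of \eqref{eq:loss_log_exact}, which is not raised to the power $r$. This does prove the displayed inequality, and more cheaply (an elementary Hellinger computation, no cancellation analysis), but it buys strictly less: the bound it yields is insensitive to $r$, never activates the $\|\Delta\beta_{1ij}\|^r,|\Delta\tau|^r,\dots$ terms, and would hold verbatim for the "fast" losses $\mathcal{D}_2,\mathcal{D}_5$ as well; consequently it cannot support the interpretation the theorem is meant to carry, namely that $\beta^*_{1j},\tau^*$ cannot be estimated faster than $n^{-1/(2r)}$ for every $r$. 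Your "alternative" PDE-directed family is essentially the paper's construction, and there the situation is even cleaner than you state: the density is exactly invariant along that direction, so $V(g^n_{G_0},g^n_{G_1})=0$ and conditions (i)--(ii) hold with the parameter gaps themselves of order $n^{-1/(2r)}$; if you want the theorem to mean what the paper claims it means, that is the pair to use.
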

Proof of Theorem~\ref{theorem:log_rate_exact} is in Appendix~\ref{appendix:log_rate_exact}. The above minimax lower bound suggests that the rates for estimating parameters $\beta^*_{1j},\tau^*,a^*_{j},b^*_{j},\nu^*_{j}$ are slower than any polynomial rates $\widetilde{\mathcal{O}}(n^{-1/2r})$, and therefore, could be as slow as $1/\log(n)$. This convergence behavior has never been captured in previous work on Gaussian MoE models, including \cite{ho2022gaussian,nguyen2023demystifying,nguyen2024gaussian,nguyen2024statistical}. Nevertheless, in our arguments, since the true number of experts $k_*$ is known, it is sufficient to apply the first-order Taylor expansion to the gating numerator $F$. Therefore, the second PDE in equation~\eqref{eq:recall_PDEs} should not affect the parameter estimation rates under this setting. In other words, parameters $a^*_{i},b^*_{i},\nu^*_{i}$ should enjoy faster estimation rates than their counterparts $\beta^*_{1i},\tau^*$. To illustrate this point, let us take into account another Voronoi loss function.

\textbf{Voronoi loss.} To capture the rates for estimating $a^*_{j}$, $b^*_{j}$ and $\nu^*_{j}$ more accurately, it is essential to consider the projections of previous mixing measures onto the space of those parameters $\Psi:=\mathbb{R}^d\times\mathbb{R}\times\mathbb{R}_+$. In particular, for each $G=\sum_{i=1}^{k}\exp(\beta_{0i}/\tau)\delta_{(\beta_{1i},\tau,a,b,\nu)}$, we define $G^{|\Psi}:=\sum_{i=1}^{k}\exp(\beta_{0i}/\tau)\delta_{(a_{i},b_{i},\nu_{i})}$. Then, the loss function between these projected mixing measures is given by: 
\begin{align}
    \label{eq:loss_tight_exact}
    &\mathcal{D}_2(G^{|\Psi},G^{|\Psi}_*):=\sum_{j=1}^{k_*}\Big|\sum_{i\in\mathcal{A}_j}\exp\Big(\frac{\beta_{0i}}{\tau}\Big)-\exp\Big(\frac{\beta^*_{0j}}{\tau^*}\Big)\Big|\nonumber\\
    &+\sum_{j=1}^{k_*}\sum_{i\in\mathcal{A}_j}\exp\Big(\frac{\beta_{0i}}{\tau}\Big)\Big[\|\Delta a_{ij}\|+|\Delta b_{ij}|+|\Delta\nu_{ij}|\Big].
\end{align}
\begin{theorem}
    \label{theorem:tight_rate_exact}
    Under the exact-specified settings, the following Total Variation lower bound holds true for any $G\in\mathcal{E}_{k_*}(\Theta)$:
    \begin{align*}
        \bbE_X[V(g_{G}(\cdot|X),g_{G_*}(\cdot|X))]\gtrsim \mathcal{D}_2(G^{|\Psi},G^{|\Psi}_*).
    \end{align*}
    This bound together with Theorem~\ref{theorem:density_rate} leads to the parametric convergence rate of MLE: $\mathcal{D}_{2}(\widehat{G}^{|\Psi}_n,G^{|\Psi}_*)=\widetilde{\mathcal{O}}(n^{-1/2})$.
\end{theorem}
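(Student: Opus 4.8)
The plan is to prove the displayed Total Variation lower bound; once it is in hand, combining it with Theorem~\ref{theorem:density_rate} and the implication noted below that theorem yields $\mathcal{D}_2(\widehat{G}^{|\Psi}_n,G^{|\Psi}_*)=\widetilde{\mathcal{O}}(n^{-1/2})$ immediately. Because $\mathcal{D}_2$ is bounded on $\mathcal{E}_{k_*}(\Theta)$ by compactness (Assumption~(A.1)), it suffices to establish the \emph{local} version: there exist $\varepsilon_0,c>0$ with $\bbE_X[V(g_G(\cdot|X),g_{G_*}(\cdot|X))]\geq c\,\mathcal{D}_2(G^{|\Psi},G^{|\Psi}_*)$ whenever $\mathcal{D}_2(G^{|\Psi},G^{|\Psi}_*)\leq\varepsilon_0$. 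The complementary global regime reduces to showing $\inf\{\bbE_X[V(g_G,g_{G_*})]:\mathcal{D}_2(G^{|\Psi},G^{|\Psi}_*)>\varepsilon_0\}>0$, a routine compactness argument: a minimizing sequence has, along a subsequence, convergent projected atoms, convergent gating weights and convergent gating ratios $\beta_{1j}/\tau,\beta_{0j}/\tau$ (all bounded by (A.1)), so its limit $G'$ satisfies $g_{G'}=g_{G_*}$, and identifiability of the model (Appendix~\ref{sec:identifiability}) forces $\mathcal{D}_2((G')^{|\Psi},G^{|\Psi}_*)=0$, a contradiction.

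For the local bound I argue by contradiction: if it fails, there is $G_n\in\mathcal{E}_{k_*}(\Theta)$ with $\mathcal{D}_2(G_n^{|\Psi},G^{|\Psi}_*)\to0$ but $\bbE_X[V(g_{G_n}(\cdot|X),g_{G_*}(\cdot|X))]/\mathcal{D}_2(G_n^{|\Psi},G^{|\Psi}_*)\to0$. Each $G_n$ has exactly $k_*$ atoms, so for $n$ large every $\mathcal{A}_j(G_n)$ is a singleton; relabel the atoms of $G_n$ accordingly, so that $(a_j^n,b_j^n,\nu_j^n)\to(a_j^*,b_j^*,\nu_j^*)$ and $\exp(\beta_{0j}^n/\tau^n)\to\exp(\beta_{0j}^*/\tau^*)$ are forced by $\mathcal{D}_2\to0$. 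Rewriting the gate in the parametrization anchored at its last coordinate (legitimate since $G_*$ already has $\beta^*_{1k_*}=\zerod$, $\beta^*_{0k_*}=0$ by (A.2), and harmless because $\mathcal{D}_2$ involves only $\exp(\beta_{0i}/\tau)$ and the $\Psi$-components), the bounded anchored gating parameters converge along a further subsequence; since $\bbE_X[V]\to0$ the limiting density is $g_{G_*}$, so identifiability of the anchored softmax gate pins these limits to the true values, and one checks that $\mathcal{D}_2(G_n^{|\Psi},G^{|\Psi}_*)$ still lower-bounds, up to a constant, the corresponding anchored weight perturbations. Writing $\mathcal{W}^n_j(X),\mathcal{W}^*_j(X)$ for the $j$-th softmax weights under $G_n,G_*$ and abbreviating $\mu_j^n:=(a_j^n)^\top X+b_j^n$, $\mu_j^*:=(a_j^*)^\top X+b_j^*$, decompose
\begin{align*}
g_{G_n}(Y|X)-g_{G_*}(Y|X)
&=\sum_{j=1}^{k_*}\mathcal{W}^n_j(X)\big[f(Y|\mu_j^n,\nu_j^n)-f(Y|\mu_j^*,\nu_j^*)\big]\\
&\quad+\sum_{j=1}^{k_*}\big[\mathcal{W}^n_j(X)-\mathcal{W}^*_j(X)\big]\,f(Y|\mu_j^*,\nu_j^*).
\end{align*}
To the first sum apply a first-order Taylor expansion of $f$ in $(a,b,\nu)$ about $(a_j^*,b_j^*,\nu_j^*)$, producing the functions $X^\alpha\,\tfrac{\partial f}{\partial u}(Y|\mu_j^*,\nu_j^*)$ with $|\alpha|\leq1$ (with $u$ the mean argument) and $\tfrac{\partial f}{\partial\nu}(Y|\mu_j^*,\nu_j^*)$, carrying coefficients linear in $\daijn,\dbijn,\dvijn$; to the second sum apply a first-order expansion of the softmax in the anchored gating parameters, producing terms $\mathcal{V}_{j,\alpha}(X)\,f(Y|\mu_j^*,\nu_j^*)$, each $\mathcal{V}_{j,\alpha}$ being a first partial derivative of the softmax at the truth times a monomial of degree $\leq1$ in $X$. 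By (A.1) both Taylor remainders, after dividing by $\mathcal{D}_2(G_n^{|\Psi},G^{|\Psi}_*)$, vanish in the limit.

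Now divide the identity by $\mathcal{D}_2(G_n^{|\Psi},G^{|\Psi}_*)$ and pass to a subsequence along which every normalized coefficient converges. The form of $\mathcal{D}_2$ — the sum of $\sum_j|\sum_{i\in\mathcal{A}_j}\exp(\beta_{0i}^n/\tau^n)-\exp(\beta_{0j}^*/\tau^*)|$ and $\sum_j\sum_{i\in\mathcal{A}_j}\exp(\beta_{0i}^n/\tau^n)(\norm{\daijn}+|\dbijn|+|\dvijn|)$ — guarantees that at least one limiting normalized coefficient is nonzero. On the other hand, since $\bbE_X[V(g_{G_n},g_{G_*})]/\mathcal{D}_2(G_n^{|\Psi},G^{|\Psi}_*)\to0$ and $V(g_{G_n}(\cdot|X),g_{G_*}(\cdot|X))=\tfrac12\int|g_{G_n}(Y|X)-g_{G_*}(Y|X)|\,\dint Y$, the normalized right-hand side must vanish for almost every $(X,Y)$; that is, a nontrivial linear combination of the functions $\big\{X^\alpha\tfrac{\partial f}{\partial u}(Y|\mu_j^*,\nu_j^*),\ \tfrac{\partial f}{\partial\nu}(Y|\mu_j^*,\nu_j^*),\ \mathcal{V}_{j,\alpha}(X)f(Y|\mu_j^*,\nu_j^*)\big\}_{j,\alpha}$ is identically zero. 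A linear-independence lemma for this family then forces every limiting coefficient — in particular the surviving nonzero one — to be $0$, the desired contradiction, which establishes the local bound and hence the theorem.

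The crux, and the step I expect to be the main obstacle, is this linear-independence lemma. Its proof separates the $Y$-dependence first: for each fixed $X$, the Gaussian density and its first mean/variance derivatives $\{f,\partial f/\partial u,\partial f/\partial\nu\}$ at a common $(\mu_j^*,\nu_j^*)$ are linearly independent (a Gaussian times distinct Hermite polynomials), which decouples the expert-derivative block from the gating block, while pairwise distinctness of $(a_j^*,b_j^*,\nu_j^*)$ (A.3) decouples the summands across $j$. Inside the expert block one reads off coefficients of distinct monomials in $X$ and distinct Hermite orders in $Y$. Inside the gating block one must show the softmax-derivative functions $\mathcal{V}_{j,\alpha}(X)$ are linearly independent over the bounded input domain; this is precisely where (A.4) (so that the gate genuinely depends on $X$) and the anchor (A.2) enter, and the delicate point is untangling the coupling that the shared softmax denominator induces among all $k_*$ components. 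Nevertheless, this step is considerably lighter here than under the over-specified settings, since no atoms collide and only first-order expansions are needed.
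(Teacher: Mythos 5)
Your overall scaffolding (local/global split, contradiction sequence normalized by the loss, Fatou, then a linear-independence step) matches the paper, but the place you yourself flag as the crux is where the argument breaks, and it breaks for exactly the reason this paper exists. Because you expand the \emph{full} softmax weights $\mathcal{W}_j(X)=\softmax\big((\beta_{1j}^{\top}X+\beta_{0j})/\tau\big)$ in the raw gating parameters, your gating-block functions are the first partials of the softmax, and these are \emph{not} linearly independent: the softmax depends on $(\beta_{1},\beta_0,\tau)$ only through the ratios $\beta_1/\tau,\beta_0/\tau$, so
\begin{align*}
\frac{\partial \mathcal{W}_j}{\partial\tau}\;=\;-\frac{1}{\tau}\sum_{l=1}^{k_*}\Big[\beta_{1l}^{\top}\frac{\partial \mathcal{W}_j}{\partial\beta_{1l}}+\beta_{0l}\,\frac{\partial \mathcal{W}_j}{\partial\beta_{0l}}\Big],
\end{align*}
which is the full-softmax version of the interaction \eqref{eq:PDE_1}. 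Hence your family $\{\mathcal{V}_{j,\alpha}(X)f(Y|\mu_j^*,\nu_j^*)\}$ is linearly dependent, the step ``a nontrivial vanishing combination forces every limiting coefficient to be $0$'' is unavailable, and the planned contradiction does not follow without substantially reorganizing the gating block (e.g.\ passing to the identifiable ratio parametrization and arguing only about the one combination of $\Delta\beta_0,\Delta\tau$ that the loss actually controls, namely $\Delta\exp(\beta_0/\tau)$). The paper's proof is designed to avoid ever needing independence of softmax-derivative functions in $X$: it multiplies $g_{G_n}-g_{G_*}$ by the \emph{true} gate denominator $\sum_i\exp((\beta_{1i}^{*\top}X+\beta_{0i}^*)/\tau^*)$, which turns the problem into an expansion of the numerator $F(Y|X,\omega)=\exp(\beta_1^{\top}X/\tau)f(Y|a^{\top}X+b,\nu)$ together with terms $H(Y;X,\beta_1,\tau)=\exp(\beta_1^{\top}X/\tau)g_{G_n}(Y|X)$ that absorb the fitted denominator, and the only independence needed is of $\{F^{(\eta)},H\}$ in $Y$.

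Two further steps are unjustified as written and are tied to the same issue. First, $\mathcal{D}_2$ in \eqref{eq:loss_tight_exact} contains no gating-parameter increments, so neither the normalized first-order gating coefficients nor the gating Taylor remainder divided by $\mathcal{D}_2(G_n^{|\Psi}, G_*^{|\Psi})$ need be bounded (the gating increments may shrink far more slowly than the loss, or not at all); ``pass to a subsequence along which every normalized coefficient converges'' and ``both Taylor remainders \ldots vanish'' therefore require an argument you do not give (the paper addresses the analogous point by pairing each gating coefficient $C_{n,0,i}(X)$ with both an $F$-term and an $H$-term and explicitly arguing boundedness in its Step 3, and by building the loss so that the coefficient sums reproduce $\mathcal{D}_{2n}$ exactly in its Step 2). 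Second, ``identifiability of the anchored softmax gate pins these limits to the true values'' overstates Proposition~\ref{prop:standard_identifiability}: the model is identifiable only up to a common scaling $\lambda$ of $(\beta_1,\tau)$, so only the ratios $\beta_{1j}/\tau,\beta_{0j}/\tau$ are pinned, and a first-order expansion of the gate ``at the truth'' is not legitimate when $(\beta_{1j}^n,\tau^n)$ converges to $(\lambda\beta_{1j}^*,\lambda\tau^*)$ with $\lambda\neq1$. Your treatment of the expert block and of the $Y$-independence of $\{f,\partial f/\partial u,\partial f/\partial\nu\}$ is fine and agrees with the paper; the gap is concentrated in the gating block, which is precisely the part the temperature degenerates.
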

Proof of Theorem~\ref{theorem:tight_rate_exact} is in Appendix~\ref{appendix:tight_rate_exact}. It follows from the above result that $\aj,\bj,\vj$ share the same estimation rate of order $\widetilde{\mathcal{O}}(n^{-1/2})$, which are significantly faster than those resulting from Theorem~\ref{theorem:log_rate_exact}.

\vspace{-0.5em}
\subsection{Over-specified Settings}
\vspace{-0.5em}
\label{sec:standard_over}
Analogous to the previous section, we first need to design a Voronoi loss function used for the over-specified settings. 

\textbf{Voronoi loss.} Let us define for each $r\geq 1$ that
\begin{align}
    \label{eq:loss_log_over}
    \mathcal{D}_{3,r}(G,G_*):=
    \sum_{j=1}^{k_*}\Big|\sum_{i\in\mathcal{A}_j}\exp\Big(\frac{\beta_{0i}}{\tau}\Big)-\exp\Big(\frac{\beta^*_{0j}}{\tau^*}\Big)\Big|\nonumber\\
    +\sum_{j=1}^{k_*}\sum_{i\in\mathcal{A}_j}\exp\Big(\frac{\beta_{0i}}{\tau}\Big)K_{ij}(r,r,r,r,r).
\end{align}
Recall that under this setting, the true number of experts $k_*$ is unknown, and we assume that the MLE $\widehat{G}_n$ belongs to the set of mixing measures with at most $k>k_*$ components $\mathcal{G}_{k}(\Theta)$. Thus, from the definition of Voronoi cells, there could be some cells $\mathcal{A}_{j}(\widehat{G}_n)$ having more than one element. On the other hand, each Voronoi cell $\mathcal{A}_{j}(\widehat{G}_n)$ under the exact-specified settings has exactly one element. This is the main difference between the Voronoi losses $\mathcal{D}_{3,r}$ and $\mathcal{D}_{1,r}$.
\begin{theorem}
    \label{theorem:log_rate_over}
    Under the over-specified settings, the following minimax lower bound of estimating $G_*$ holds true for any $r\geq 1$:
    \begin{align*}
    \inf_{\overline{G}_n\in\mathcal{G}_{k}(\Theta)}\sup_{G\in\mathcal{G}_{k}(\Theta)\setminus\mathcal{O}_{k_*-1}(\Theta)}\bbE_{g_{G}}[\mathcal{D}_{3,r}(\overline{G}_n,G)]\gtrsim n^{-1/2}.
    \end{align*}
    Here, the notation $\bbE_{g_{G}}$ indicates the expectation taken w.r.t the product measure with mixture density $g^n_{G}$.
\end{theorem}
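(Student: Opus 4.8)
The plan is to mirror the standard minimax lower-bound machinery used for mixture models (as in \cite{ho2022gaussian,nguyen2023demystifying}, and as already invoked for Theorem~\ref{theorem:log_rate_exact}), but to exploit the full severity of the PDE interactions~\eqref{eq:recall_PDEs} rather than fighting them. Concretely, I would reduce the minimax risk to a two-point (Le Cam) testing problem: construct a sequence of mixing measures $G_n \in \mathcal{G}_k(\Theta)\setminus\mathcal{O}_{k_*-1}(\Theta)$ that converges to $G_*$ in a way that makes $\mathcal{D}_{3,r}(G_n,G_*)$ decay much more slowly than the Hellinger/Total Variation distance $h(g_{G_n},g_{G_*})$. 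The standard Le Cam bound then gives, for any estimator $\overline{G}_n$,
\begin{align*}
\sup_{G}\bbE_{g_G}[\mathcal{D}_{3,r}(\overline{G}_n,G)] \gtrsim \mathcal{D}_{3,r}(G_n,G_*)\cdot\big(1 - V(g^n_{G_n},g^n_{G_*})\big),
\end{align*}
and since $V(g^n_{G_n},g^n_{G_*})\lesssim \sqrt{n}\, h(g_{G_n},g_{G_*})$, it suffices to choose $G_n$ so that $h(g_{G_n},g_{G_*}) = o(n^{-1/2})$ while $\mathcal{D}_{3,r}(G_n,G_*)\gtrsim n^{-1/2}$ for every fixed $r$.

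The key step is the construction of $G_n$. I would keep the expert parameters $(a_i,b_i,\nu_i)$ and the extra gating offsets essentially at their true values (so that the ``$|\sum\exp(\beta_{0i}/\tau)-\exp(\beta^*_{0j}/\tau^*)|$'' and the $a,b,\nu$ parts of $K_{ij}$ vanish) and instead perturb only $\beta_{1i}$ and $\tau$ around a component whose true gating slope $\beta^*_{1j}$ is nonzero — such a component exists by assumption (A.4). Writing $\tau = \tau^* + \Delta\tau$ and $\beta_{1i} = \beta^*_{1j} + \Delta\beta_{1ij}$ with both perturbations of size $\varepsilon_n$, the PDE $\partial F/\partial\tau = \tau^{-1}\beta_1^\top \partial F/\partial\beta_1$ forces the first-order term in the Taylor expansion of $g_{G_n}-g_{G_*}$ along the $\tau$-direction to be a linear combination of the $\beta_1$-derivative terms; by choosing the coefficients (i.e.\ the split of mass among components in a single Voronoi cell, together with the signs of $\Delta\beta_{1ij}$ and $\Delta\tau$) so that these first-order contributions cancel, one makes $g_{G_n}-g_{G_*}$ vanish to order $\varepsilon_n^{r}$ for arbitrarily large $r$, yielding $h(g_{G_n},g_{G_*})\lesssim \varepsilon_n^{r}$. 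Meanwhile $\mathcal{D}_{3,r}(G_n,G_*)\asymp \varepsilon_n^{r}$ only if one measures with the same exponent $r$; the point is that $r$ is fixed first and then $G_n$ is built to kill enough Taylor orders that the density discrepancy beats $\varepsilon_n^r$. Setting $\varepsilon_n = n^{-1/(2r)}\cdot(\text{slowly growing factor})$ — or more carefully, choosing $\varepsilon_n$ so that $\sqrt n\,\varepsilon_n^{r+1}\to 0$ while $\varepsilon_n^r \gtrsim n^{-1/2}$ — delivers the claimed bound for that $r$; since $r\geq 1$ is arbitrary, the conclusion follows. The constraint $G_n\notin\mathcal{O}_{k_*-1}(\Theta)$ is satisfied because $G_n$ genuinely has (at least) $k_*$ distinct expert components by construction.

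The main obstacle I anticipate is making the cancellation of Taylor coefficients precise and uniform: one must verify that the system of linear equations enforcing the vanishing of all Taylor terms of the gating numerator $F$ up to order $r-1$ — after accounting for the linear dependencies forced by~\eqref{eq:recall_PDEs} — is actually solvable with a nontrivial choice of perturbation directions and mass splits that stays inside the compact parameter space $\Theta$ and keeps the perturbed measure at the right support size. This is where the structure of the PDE interaction matters: the first PDE reduces the effective number of independent $\tau$-and-$\beta_1$ derivative directions, so the ``budget'' of free perturbation parameters exceeds the number of linear constraints at every order, which is precisely why arbitrarily high-order cancellation (hence sub-polynomial rates) is possible here but not in the settings of \cite{ho2022gaussian,nguyen2024statistical}. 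A secondary technical point is bounding $h(g_{G_n},g_{G_*})$ by the leading Taylor remainder, which requires the usual uniform control of higher-order derivatives of $F$ and of the softmax denominator over the bounded input space $\mathcal{X}$ and compact $\Theta$ (assumption (A.1)); this is routine but must be stated carefully to get the $o(n^{-1/2})$ conclusion.
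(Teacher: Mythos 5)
Your reduction step is the same as the paper's: a two-point Le Cam argument showing that if the ratio $\bbE_X[V(g_G,g_{G_*})]/\mathcal{D}_{3,r}(G,G_*)$ can be made arbitrarily small as $\mathcal{D}_{3,r}(G,G_*)\to0$, then the minimax risk is $\gtrsim n^{-1/2}$ (this is exactly Lemma~\ref{lemma:minimax_over}). The gap is in your construction. You correctly identify that the perturbation should exploit the PDE in~\eqref{eq:recall_PDEs}, but you then claim that cancelling first-order Taylor terms yields a discrepancy of order $\varepsilon_n^{r}$ for arbitrary $r$, and when you correct this you propose an order-by-order cancellation scheme ("kill enough Taylor orders") whose solvability you explicitly flag as the main unresolved obstacle, supported only by a heuristic count of free parameters versus constraints. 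That count is not a proof, and systems of exactly this type (cf.\ the system~\eqref{eq:system_r_bar}) generically have no non-trivial solutions beyond low order, so as written the argument does not go through for $r\geq 2$. A second concrete defect: you perturb $\beta_{1}$ only at a single component with $\beta^*_{1j}\neq 0$ and keep the offsets $\beta_{0i}$ fixed, but $\tau$ is shared by all components, so a shift $\Delta\tau$ produces uncancelled first-order contributions at every other component through both $\beta^*_{1j'}/\tau$ and $\beta^*_{0j'}/\tau$; even the first-order cancellation is therefore incomplete in your setup.

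The missing idea that makes the paper's proof short is that no higher-order cancellation is needed at all: the gate depends on the parameters only through $\beta_{1i}/\tau$ and $\beta_{0i}/\tau$, so the model has an exact rescaling non-identifiability (cf.\ Proposition~\ref{prop:standard_identifiability}). The paper takes $\tau^n=\tau^*+t_n$ with $t_n=1/n$, $\beta^n_{1i}=\beta^*_{1j}+t_n\beta^*_{1j}/\tau^*$ for all $i\in\mathcal{A}_j$ and all $j$ (so $\beta^n_{1i}/\tau^n=\beta^*_{1j}/\tau^*$ exactly), keeps the experts at their true values, and sets $\beta^n_{0i}=\tau^n[\beta^*_{0j}/\tau^*-\log(|\mathcal{A}_j|)]$ so that $\sum_{i\in\mathcal{A}_j}\exp(\beta^n_{0i}/\tau^n)=\exp(\beta^*_{0j}/\tau^*)$. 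Along this direction the first-order coefficient $s_{n,j}/\tau^*-t_n\beta^*_{1j}/(\tau^*)^2$ vanishes identically (this is precisely the invariant direction encoded by the first PDE), the density discrepancy is negligible relative to $\mathcal{D}_{3,r}(G_n,G_*)\asymp t_n^{r}>0$, and the same single sequence works simultaneously for every $r\geq 1$; your scaling bookkeeping ($\varepsilon_n^r\gtrsim n^{-1/2}$, $\sqrt{n}\,h\to0$) then becomes trivial. If you replace your one-component, fixed-$\beta_0$ perturbation and the unproven high-order cancellation by this global rescaling perturbation, your proof closes and coincides with the paper's.
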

Proof of Theorem~\ref{theorem:log_rate_over} is in Appendix~\ref{appendix:log_rate_over}. The above minimax lower bound indicates that the estimation rates for parameters $\beta^*_{1j},\tau^*,a^*_{j},b^*_{j},\nu^*_{j}$ are all slower than $\widetilde{\mathcal{O}}(n^{-1/2r})$ for any $r\geq 1$. This means that those rates cannot be faster than polynomial rates and could be as slow as $1/\log(n)$. Such slow rates are caused by the interaction between the softmax temperature and other parameters via the PDEs in equation~\eqref{eq:recall_PDEs}. Despite the issue, not all parameters are negatively impacted. Specifically, our constructed loss function, detailed in Theorem~\ref{theorem:tight_rate_over}, demonstrates polynomial estimation rates for $b^*_{j}$ and $\nu^*_{j}$.

\textbf{Voronoi loss.} Similar to Section~\ref{sec:standard_exact}, for each mixing measure $G=\sum_{i=1}^{k}\exp(\beta_{0i}/\tau)\delta_{(\beta_{1i},\tau,a,b,\nu)}$, we consider its projection on the space $\Upsilon:=\mathbb{R}\times\mathbb{R}_+$ of parameters $b^*_{j},\nu^*_{j}$, that is, $G^{|\Upsilon}:=\sum_{i=1}^{k}\exp(\beta_{0i}/\tau)\delta_{(b_{i},\nu_{i})}$. Then, the loss function of interest is defined as
\begin{align}
    \label{eq:loss_tight_over}
    &\mathcal{D}_4(G^{|\Upsilon},G^{|\Upsilon}_*):=\sum_{j=1}^{k_*}\Big|\sum_{i\in\mathcal{A}_j}\exp\Big(\frac{\beta_{0i}}{\tau}\Big)-\exp\Big(\frac{\beta^*_{0j}}{\tau^*}\Big)\Big|\nonumber\\
    &
    +\sum_{j:|\mathcal{A}_j|>1}\sum_{i\in\mathcal{A}_j}\exp\Big(\frac{\beta_{0i}}{\tau}\Big)\Big[|\Delta b_{ij}|^{\bar{r}(|\mathcal{A}_j|)}+|\Delta\nu_{ij}|^{\frac{\bar{r}(|\mathcal{A}_j|)}{2}}\Big]\nonumber\\
    &+\sum_{j:|\mathcal{A}_j|=1}\sum_{i\in\mathcal{A}_j}\exp\Big(\frac{\beta_{0i}}{\tau}\Big)\Big[|\Delta b_{ij}|+|\Delta\nu_{ij}|\Big].
\end{align}
Here, $\bar{r}(|\mathcal{A}_{j}|)$ stands for the smallest positive integer $r$ such that the following system does not have any non-trivial solutions for the unknown variables $\{p_{l},q_{1l},q_{2l}\}_{l=1}^{m}$. :
\begin{align}
\label{eq:system_r_bar}
\sum_{l=1}^{|\mathcal{A}_{j}|}\sum_{\substack{n_1,n_2\in\mathbb{N}:\\n_1+2n_2=s}}\dfrac{p^2_{l}~q^{n_1}_{1l}~q^{n_2}_{2l}}{n_1!~n_2!}=0, \quad s=1,2,\ldots,r,
\end{align}
A solution is called non-trivial if all the values of $p_{l}$ are different from zero, whereas at least one among $q_{1l}$ is non-zero. \cite{Ho-Nguyen-Ann-16} demonstrate that $\bar{r}(2)=4$, $\bar{r}(3)=6$ and $\bar{r}(m)\geq 7$ when $m\geq 4$.
\begin{theorem}
    \label{theorem:tight_rate_over}
    Under the over-specified settings, the following Total Variation lower bound holds true for any $G\in\mathcal{G}_{k}(\Theta)$:
    \begin{align*}
        \bbE_X[V(g_{G}(\cdot|X),g_{G_*}(\cdot|X))]\gtrsim \mathcal{D}_4(G^{|\Upsilon},G^{|\Upsilon}_*).
    \end{align*}
    This bound together with Theorem~\ref{theorem:density_rate} leads to the parametric convergence rate of MLE: $\mathcal{D}_{4}(\widehat{G}^{|\Upsilon}_n,G^{|\Upsilon}_*)=\widetilde{\mathcal{O}}(n^{-1/2})$.
    
\end{theorem}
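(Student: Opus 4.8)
The plan is to follow the now-standard two-part scheme for Total Variation lower bounds in over-specified mixtures, adapted here to the projected loss $\mathcal{D}_4$. We split the claim into a \emph{local} regime, where $\mathcal{D}_4(G^{|\Upsilon},G^{|\Upsilon}_*)$ is small, and a \emph{global} regime, where it is bounded away from $0$. For the local regime it suffices to establish
\[
\lim_{\varepsilon\to 0}\ \inf_{G\in\mathcal{G}_k(\Theta):\,\mathcal{D}_4(G^{|\Upsilon},G^{|\Upsilon}_*)\leq\varepsilon}\ \frac{\bbE_X[V(g_G(\cdot|X),g_{G_*}(\cdot|X))]}{\mathcal{D}_4(G^{|\Upsilon},G^{|\Upsilon}_*)}>0 .
\]
We argue this by contradiction: assume a sequence $G_n\in\mathcal{G}_k(\Theta)$ with $D_n:=\mathcal{D}_4(G^{|\Upsilon}_n,G^{|\Upsilon}_*)\to 0$ and $\bbE_X[V(g_{G_n},g_{G_*})]/D_n\to 0$. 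By compactness of $\Theta$, after passing to a subsequence every component parameter of $G_n$ converges, and since $D_n\to 0$ the components inside each Voronoi cell $\mathcal{A}_j$ collapse onto $\omega^*_j$; cells with $|\mathcal{A}_j|=1$ then behave as in the proof of Theorem~\ref{theorem:tight_rate_exact}, whereas cells with $|\mathcal{A}_j|>1$ carry the genuine over-specified behaviour.

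Next we decompose the density discrepancy. Bringing $g_{G_n}-g_{G_*}$ to the common softmax denominator, its numerator splits into a sum over $j\in[k_*]$ of groups $\sum_{i\in\mathcal{A}_j}\exp(\beta_{0i}^n/\tau^n)\big[Q_*(X)F(Y|X,\omega^n_i)-Q_n(X)F(Y|X,\omega^*_j)\big]$, with $Q_n,Q_*$ the softmax denominators of $g_{G_n}$ and $g_{G_*}$, plus a remainder governed by the weight discrepancies $\sum_{i\in\mathcal{A}_j}\exp(\beta_{0i}^n/\tau^n)-\exp(\beta_{0j}^*/\tau^*)$ that match the first line of $\mathcal{D}_4$. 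We then Taylor-expand $F(Y|X,\omega^n_i)$ around $\omega^*_j$ --- to first order when $|\mathcal{A}_j|=1$, and to order $\bar r(|\mathcal{A}_j|)$ when $|\mathcal{A}_j|>1$. The resulting terms are products of polynomials in $X$ with $y$-derivatives of the Gaussian density, and the heat-type identity $\partial^2 f/\partial b^2=2\,\partial f/\partial\nu$ lets us merge the linearly dependent $b$- and $\nu$-derivatives: a monomial $(\Delta b_{ij}^n)^{n_1}(\Delta\nu_{ij}^n)^{n_2}$ feeds into the $(n_1+2n_2)$-th $y$-derivative with combinatorial weight $1/(n_1!\,n_2!)$, which is precisely the structure on the left-hand side of the system~\eqref{eq:system_r_bar}.

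Dividing the expanded numerator by $D_n$ and letting $n\to\infty$, the contradiction hypothesis forces the normalized density discrepancy, and hence (after clearing the uniformly bounded denominators) the normalized Taylor-expanded numerator, to tend to $0$ in the $(X,Y)$-averaged $L^1$ sense carried by $\bbE_X[V(\cdot,\cdot)]$. Since the functions that survive the merging above are linearly independent functions of $(X,Y)$, a standard argument --- normalize additionally by the largest coefficient; a nontrivial limiting linear combination of linearly independent functions cannot vanish --- shows every normalized coefficient tends to $0$, and in particular none of them blows up. For a cell with $|\mathcal{A}_j|>1$ this produces limiting quantities $\{\bar p_l,\bar q_{1l},\bar q_{2l}\}_{l=1}^{|\mathcal{A}_j|}$ with all $\bar p_l\neq 0$ and not all $\bar q_{1l},\bar q_{2l}$ zero that solve~\eqref{eq:system_r_bar} for every $s=1,\dots,\bar r(|\mathcal{A}_j|)$, which is impossible by the defining minimality of $\bar r(|\mathcal{A}_j|)$; for cells with $|\mathcal{A}_j|=1$ and for the weight terms the vanishing of the associated normalized coefficients likewise contradicts $\mathcal{D}_4(G^{|\Upsilon}_n,G^{|\Upsilon}_*)/D_n=1$. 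We expect this to be the crux: because $\beta_{1j}^*,\tau^*,a_j^*$ converge only sub-polynomially (Theorem~\ref{theorem:log_rate_over}), the coefficients of the terms they generate need not be of order $D_n$, so the proof must rely on a carefully chosen linearly independent family of $(X,Y)$-functions --- simultaneously handling the degeneracies in~\eqref{eq:PDE_1}, \eqref{eq:PDE_2} and the Gaussian heat equation --- to certify that even those coefficients remain bounded and vanish after normalization, leaving only the over-specified $b,\nu$ structure encoded by~\eqref{eq:system_r_bar} to govern the rate.

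Finally, the global regime is handled by a compactness-plus-identifiability argument: any sequence $G_m$ with $\mathcal{D}_4(G^{|\Upsilon}_m,G^{|\Upsilon}_*)$ bounded away from $0$ but $\bbE_X[V(g_{G_m},g_{G_*})]\to 0$ has, along a subsequence, a weak limit $G'$ with $g_{G'}\equiv g_{G_*}$, hence $G'=G_*$ by the identifiability result of Appendix~\ref{sec:identifiability}; the components of $G_m$ then cluster around those of $G_*$, forcing $\mathcal{D}_4(G^{|\Upsilon}_m,G^{|\Upsilon}_*)\to 0$, a contradiction. Combining the two regimes gives the Total Variation lower bound $\bbE_X[V(g_G(\cdot|X),g_{G_*}(\cdot|X))]\gtrsim\mathcal{D}_4(G^{|\Upsilon},G^{|\Upsilon}_*)$ for all $G\in\mathcal{G}_k(\Theta)$, and combining it with Theorem~\ref{theorem:density_rate} yields $\mathcal{D}_4(\widehat{G}^{|\Upsilon}_n,G^{|\Upsilon}_*)=\widetilde{\mathcal{O}}(n^{-1/2})$.
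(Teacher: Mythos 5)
Your plan follows essentially the same route as the paper's proof in Appendix~\ref{appendix:tight_rate_over}: split into local and global parts, argue the local bound by contradiction along a sequence with $\mathcal{D}_4$-ratio tending to zero, group components by Voronoi cells and Taylor-expand to first order for $|\mathcal{A}_j|=1$ and to order $\bar r(|\mathcal{A}_j|)$ otherwise, merge the $b$- and $\nu$-derivatives through the Gaussian heat-equation identity to expose the system~\eqref{eq:system_r_bar}, and close the argument with Fatou's lemma, linear independence of the $X^{\ell_1}F^{(\ell_2)}$ and $H$-type terms (with max-coefficient normalization to control the uncontrolled $\beta_1,\tau,a$ coefficients), the non-solvability defining $\bar r(|\mathcal{A}_j|)$, and compactness plus identifiability for the global part. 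The only difference is a cosmetic reordering of the two halves of the contradiction (the paper's Steps 2 and 3), so the proposal is correct and matches the paper's approach.
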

Proof of Theorem~\ref{theorem:tight_rate_over} is in Appendix~\ref{appendix:tight_rate_over}. The above result reveals that the MLE $\widehat{G}_n$ converges to the true mixing measure $G_*$ under the loss $\mathcal{D}_{4}$ at the parametric rate $\widetilde{\mathcal{O}}(n^{-1/2})$, which implies the followings:

\textbf{(i)} The rates for estimating parameters $\beta^*_{1j},\nu^*_{j}$ which are fitted by one component, i.e. $|\mathcal{A}_{j}(\widehat{G}_n)|=1$, are of the same order $\widetilde{\mathcal{O}}(n^{-1/2})$. Compared to the rates resulted from Theorem~\ref{theorem:tight_rate_exact} under the exact-specified settings, those rates remain unchanged under the over-specified settings.

\textbf{(ii)} For parameters $\beta^*_{1j},\nu^*_{j}$ which are approximated by more than one component, i.e. $|\mathcal{A}_{j}(\widehat{G}_n)|>1$, their estimation rates are of orders $\widetilde{\mathcal{O}}(n^{-1/2\bar{r}(|\mathcal{A}_{j}(\widehat{G}_n)|)})$ and $\widetilde{\mathcal{O}}(n^{-1/\bar{r}(|\mathcal{A}_{j}(\widehat{G}_n)|)})$, respectively. For instance, if those parameters are fitted by two components, that is, $|\mathcal{A}_{j}(\widehat{G}_n)|=2$, then the previous rates become $\widetilde{\mathcal{O}}(n^{-1/8})$ and $\widetilde{\mathcal{O}}(n^{-1/4})$. On the other hand, if $|\mathcal{A}_{j}(\widehat{G}_n)|=3$, then the rates for estimating $\beta^*_{1j},\nu^*_{j}$ are of orders $\widetilde{\mathcal{O}}(n^{-1/12})$ and $\widetilde{\mathcal{O}}(n^{-1/6})$.

\vspace{-0.5em}
\section{Activation Dense-to-sparse Gating Function}
\label{sec:modified_gate}
In this section, we propose a novel class of gating functions named activation dense-to-sparse in order to improve the slow parameter estimation rates when using the dense-to-sparse gate in Section~\ref{sec:standard_gate}. 

To begin with, let us present the formulation of a Gaussian MoE with the activation dense-to-sparse gating function.

\textbf{Problem setup}. Suppose that the data $\{(X_i,Y_i)\}_{i=1}^{n}\subset\mathbb{R}^d\times\mathbb{R}$ are i.i.d sampled from the activation dense-to-sparse Gaussian MoE, whose conditional density function $p_{G_*}(Y|X)$ is defined as:
\begin{align}
\label{eq:modified_density}
    &\sum_{i=1}^{k_*}\softmax\Big(\dfrac{\sigma((\beta^*_{1i})^{\top}X)+\beta^*_{0i}}{\tau^*}\Big)\nonumber\\
    &\hspace{3cm}\times f(Y|(a^*_i)^{\top}X+b^*_i,\nu^*_i).
\end{align}
In the model's gating network, the output of a linear layer undergoes an activation function $\sigma$ before entering the softmax function. The activation function $\sigma:\mathbb{R}\to\mathbb{R}$ must satisfy the conditions in Definition~\ref{def:modified_function_exact} and Definition~\ref{def:modified_function_over} for exact-specified and over-specified settings, respectively. These conditions mitigate the interaction of the softmax temperature with other parameters in equation~\eqref{eq:recall_PDEs}, addressing the slow rates discussed in Section~\ref{sec:standard_gate}. We maintain the assumptions on the parameters outlined in Section~\ref{sec:introduction}, unless explicitly stated otherwise.

\textbf{Maximum likelihood estimation.} According to the change of gating function, let us re-define the MLE corresponding to the model~\eqref{eq:modified_density} as follows:
\begin{align}
    \label{eq:modified_MLE}
    \widetilde{G}_n:=\argmax_{G}\frac{1}{n}\sum_{i=1}^{n}\log(p_{G}(Y_i|X_i)).
\end{align}

Subsequently, we provide in the following theorem a convergence rate of density estimation under the Gaussian MoE model with the activation dense-to-sparse gate.

\begin{theorem}
    \label{theorem:modified_density_rate}
    Under the Total Variation distance, the density estimation $p_{\widehat{G}_n}(Y|X)$ converges to the true density $p_{G_*}(Y|X)$ at the following rate:
    \begin{align*}
        \bbE_X[V(p_{\widetilde{G}_n}(\cdot|X),p_{G_*}(\cdot|X))]=\widetilde{\mathcal{O}}(n^{-1/2}).
    \end{align*}
\end{theorem}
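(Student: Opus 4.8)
The statement is, in form, identical to Theorem~\ref{theorem:density_rate}, only with the density $g_{G_*}$ of \eqref{eq:standard_density} replaced by the density $p_{G_*}$ of \eqref{eq:modified_density}; the plan is therefore to reproduce the argument used for Theorem~\ref{theorem:density_rate}, namely to invoke the general theory of maximum likelihood density estimation for $M$-estimators from \cite{vandeGeer-00}. Let $\mathcal{P}_{k}(\Theta):=\{p_{G}(\cdot|\cdot):G\in\mathcal{G}_{k}(\Theta)\}$ in the over-specified case (and $\mathcal{P}_{k_*}(\Theta)$ built from $\mathcal{E}_{k_*}(\Theta)$ in the exact-specified case). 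By the standard reduction for the MLE, it suffices to (i) exhibit an integrable envelope for this density class, (ii) bound its bracketing entropy, and (iii) deduce the parametric Hellinger rate $\bbE_X\big[h^2(p_{\widetilde{G}_n}(\cdot|X),p_{G_*}(\cdot|X))\big]=\widetilde{\mathcal{O}}(n^{-1})$, after which the Total Variation bound follows from the elementary inequality $V\le\sqrt{2}\,h$ together with Jensen's inequality over $X$. Note that for this particular theorem the linear-independence requirements on $\sigma$ from Definition~\ref{def:modified_function_exact}/\ref{def:modified_function_over} are not needed; only continuity (hence boundedness on the relevant compact range) of $\sigma$ is used.

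The first step is to record the uniform regularity of the model over the compact parameter space. Since $\Theta$ is compact by Assumption~(A.1) and $\mathcal{X}$ is bounded, the expert means $a^{\top}X+b$ lie in a fixed bounded interval, the variances $\nu$ are bounded away from $0$ and $\infty$, and the gating logits $\sigma((\beta_{1i})^{\top}X)+\beta_{0i}$ are uniformly bounded (because $(\beta_{1i})^{\top}X$ ranges over a fixed compact interval on which $\sigma$ is bounded), so the softmax weights are uniformly bounded above and bounded away from $0$. Consequently there is an envelope $\overline{p}(Y|X)$, a fixed finite mixture of Gaussian-type bumps independent of $G$, with $p_{G}(Y|X)\le\overline{p}(Y|X)$ for all $G$, and $\overline{p}$ has finite moments of all orders; this provides the integrability hypotheses required by \cite{vandeGeer-00}.

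The second step, which I expect to be the only place needing genuine care, is the bracketing entropy bound
\begin{align*}
\mathcal{H}_{B}\big(\varepsilon,\mathcal{P}_{k}(\Theta),\|\cdot\|_{L^2(\overline{\mu})}\big)\lesssim\log(1/\varepsilon),\qquad \varepsilon\in(0,1/2),
\end{align*}
for a suitable dominating measure $\overline{\mu}$. This is obtained from a Lipschitz (or modulus-of-continuity) estimate on the map $G\mapsto p_{G}$ into $L^{\infty}$ (uniformly in $(X,Y)$ up to the envelope): $\softmax$ is $1$-Lipschitz, the Gaussian density $f(y|\mu,\nu)$ is Lipschitz in $(\mu,\nu)$ on the relevant compacta, the linear maps $x\mapsto(\beta_{1})^{\top}x$ and $x\mapsto a^{\top}x+b$ have uniformly bounded operator norms, and $\sigma$ restricted to the bounded set $\{(\beta_{1})^{\top}x:\beta_{1}\in\Theta,\,x\in\mathcal{X}\}$ is uniformly continuous, so composing it inside the gate contributes only its (uniform) modulus of continuity; if $\sigma$ is assumed Lipschitz one gets a clean Lipschitz constant, otherwise the same $\log(1/\varepsilon)$ order with a constant depending on this modulus. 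Covering the finite-dimensional compact set $\Theta^{k}$ by balls of radius proportional to $\varepsilon$ and pushing these through the Lipschitz/modulus bound yields the stated entropy estimate, hence $\int_{0}^{\delta}\mathcal{H}_{B}^{1/2}(\varepsilon,\mathcal{P}_{k}(\Theta),\|\cdot\|)\,d\varepsilon\lesssim\delta\sqrt{\log(1/\delta)}$, which is exactly the entropy integral condition of Theorem~7.4 in \cite{vandeGeer-00}.

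The third step is a direct application: combining the envelope integrability with the above entropy integral, Theorem~7.4 of \cite{vandeGeer-00} (in its form for conditional/regression-type density families, as in the proof of Theorem~\ref{theorem:density_rate}) gives that $\big(\bbE_X h^2(p_{\widetilde{G}_n}(\cdot|X),p_{G_*}(\cdot|X))\big)^{1/2}=\widetilde{\mathcal{O}}(n^{-1/2})$, both in probability and in expectation. Finally, applying the pointwise-in-$X$ inequality $V(p,q)\le\sqrt{2}\,h(p,q)$ and then Cauchy–Schwarz (or Jensen) over the distribution of $X$ converts this into $\bbE_X\big[V(p_{\widetilde{G}_n}(\cdot|X),p_{G_*}(\cdot|X))\big]\le\sqrt{2}\,\big(\bbE_X h^2(p_{\widetilde{G}_n}(\cdot|X),p_{G_*}(\cdot|X))\big)^{1/2}=\widetilde{\mathcal{O}}(n^{-1/2})$, which is the claim. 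All of this is structurally the same as the proof of Theorem~\ref{theorem:density_rate}; the activation $\sigma$ only enters through the Lipschitz/modulus bookkeeping in the entropy step.
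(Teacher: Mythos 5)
Your proposal is correct and follows essentially the same route as the paper: bound the covering/bracketing entropy of $\mathcal{P}_k(\Theta)$ by $\log(1/\varepsilon)$ using compactness of $\Theta$, boundedness of $\mathcal{X}$, Lipschitzness of $\softmax$ and of $\sigma$ on the relevant compact range (the paper uses differentiability of $\sigma$ to get this), then invoke Theorem 7.4 of \cite{vandeGeer-00} for the Hellinger rate and pass to Total Variation. The paper's Appendix~\ref{appendix:modified_density_rate} does exactly this, reducing everything to the covering-number estimate and reusing the machinery of Theorem~\ref{theorem:density_rate}, so no further comment is needed.
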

Proof of Theorem~\ref{theorem:modified_density_rate} is in Appendix~\ref{appendix:modified_density_rate}. The above bound confirms that the density estimation rate under the Gaussian MoE with the activation dense-to-sparse gate is of the same order as that with the standard dense-to-sparse gate in Theorem~\ref{theorem:density_rate}, which is $\widetilde{\mathcal{O}}(n^{-1/2})$. Next, we utilize this result to derive the parameter estimation rates for the model~\eqref{eq:modified_density} under the exact-specified and over-specified settings.

\vspace{-0.5em}
\subsection{Exact-specified Setting}
\vspace{-0.5em}
\label{sec:modified_exact}
First of all, we introduce the conditions on the activation function $\sigma$ in the model~\eqref{eq:modified_density} under this setting.
\begin{definition}[First-order Independence]
\label{def:modified_function_exact}
    Let $\sigma:\mathbb{R}\to\mathbb{R}$ be a differentiable function, and $\bar{\sigma}(X,w):=\sigma(w^{\top}X)$. We say that $\sigma$ is first-order independent if the set
    \begin{align}
        \Big\{1, \ \bar{\sigma}(X,w), \ \frac{\partial\bar{\sigma}}{\partial w^{(u)}}(X,w) :1\leq u\leq d\Big\}
    \end{align}
    is linearly independent w.r.t $X$ for any $w\in\mathbb{R}^d$.
\end{definition}
\textbf{Example.} It can be verified that the functions $\sigmoid(\cdot)$ and Gaussian error linear units $\gelu(\cdot)$ \cite{hendrycks2023gaussian} are first-order independent. On the other hand, the function $z\mapsto z^{p}$ for $p\geq 1$ does not satisfy the first-order independence condition in Definition~\ref{def:modified_function_exact}.

Denote $\widetilde{F}(Y|X,\omega):=\exp(\frac{\sigma(\beta_{1}^{\top}X)}{\tau})f(Y|a^{\top}X+b,\nu)$. Then, the first-order independence condition on the activation function $\sigma$ guarantees that the interaction between $\tau$ and $\beta_1$ in equation~\eqref{eq:recall_PDEs} no longer holds true, that is,
\begin{align*}
    \frac{\partial \widetilde{F}}{\partial\tau}\neq\frac{1}{\tau}\cdot\beta_{1}^{\top}\frac{\partial \widetilde{F}}{\partial\beta_1}.
\end{align*}
As a result, the estimation rates for parameters $\beta^*_{1j}$ and $\tau^*$ should be improved in comparison with those in Section~\ref{sec:standard_gate}. To certify this point, we design the following Voronoi loss function, and then provide in Theorem~\ref{theorem:modified_exact} the convergence rate of the MLE under the exact-specified settings.

\textbf{Voronoi loss.} 
The Voronoi loss of interest is given by
\begin{align}
    \label{eq:loss_modified_exact}
    \mathcal{D}_5(G,G_*)&:=
    \sum_{j=1}^{k_*}\Big|\sum_{i\in\mathcal{A}_j}\exp\Big(\frac{\beta_{0i}}{\tau}\Big)-\exp\Big(\frac{\beta^*_{0j}}{\tau^*}\Big)\Big|\nonumber\\
    +&\sum_{j=1}^{k_*}\sum_{i\in\mathcal{A}_j}\exp\Big(\frac{\beta_{0i}}{\tau}\Big)K_{ij}(1,1,1,1,1).
\end{align}
\begin{theorem}
    \label{theorem:modified_exact}
    Under the exact-specified settings, the following Total Variation lower bound holds true for any $G\in\mathcal{E}_{k_*}(\Theta)$:
    \begin{align*}
        \bbE_X[V(p_{G}(\cdot|X),p_{G_*}(\cdot|X))]\gtrsim \mathcal{D}_5(G,G_*).
    \end{align*}
    This bound together with Theorem~\ref{theorem:modified_density_rate} leads to the parametric convergence rate of MLE: $\mathcal{D}_{5}(\widetilde{G}_n,G_*)=\widetilde{\mathcal{O}}(n^{-1/2})$.
\end{theorem}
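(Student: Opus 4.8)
The plan is to follow the standard two-part strategy used for all the Total Variation lower bounds in this line of work: combine the density rate $\widetilde{\mathcal O}(n^{-1/2})$ from Theorem~\ref{theorem:modified_density_rate} with the inequality $\bbE_X[V(p_G,p_{G_*})]\gtrsim\mathcal D_5(G,G_*)$, the latter being the only thing that needs a genuine argument. For the latter I would prove both a \emph{local} bound (for $G$ in a small neighborhood of $G_*$) and a \emph{global} bound (for $G$ bounded away from $G_*$), and stitch them together by a standard compactness/contradiction step.

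\emph{Local part.} Suppose toward a contradiction that there is a sequence $G_n\in\mathcal E_{k_*}(\Theta)$ with $\mathcal D_5(G_n,G_*)\to0$ but $\bbE_X[V(p_{G_n},p_{G_*})]/\mathcal D_5(G_n,G_*)\to0$. Since $k_*$ is known, for $n$ large each Voronoi cell $\mathcal A_j(G_n)$ is a singleton $\{j\}$, so $\mathcal D_5(G_n,G_*)=\sum_j|\exp(\beta_{0j}^n/\tau^n)-\exp(\beta_{0j}^*/\tau^*)|+\sum_j\exp(\beta_{0j}^n/\tau^n)\,K_{jj}(1,1,1,1,1)$, a first-order quantity in all five parameter differences. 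I would write $D_n(X):=\sum_i\mathrm{Softmax}(\cdot)f(Y|\cdot)\big|_{G_n}-(\text{same})\big|_{G_*}$ and decompose the numerators $\widetilde F(Y|X,\omega)=\exp(\sigma(\beta_1^\top X)/\tau)f(Y|a^\top X+b,\nu)$ via a \emph{first-order} Taylor expansion around each $\omega_j^*$, collecting terms into a linear combination of the functions
$$\widetilde F(Y|X,\omega_j^*),\quad \frac{\partial\widetilde F}{\partial\beta_1^{(u)}},\quad \frac{\partial\widetilde F}{\partial\tau},\quad \frac{\partial\widetilde F}{\partial a^{(u)}},\quad \frac{\partial\widetilde F}{\partial b},\quad \frac{\partial\widetilde F}{\partial\nu}$$
evaluated at $\omega_j^*$, with Taylor coefficients that are (up to $o(\cdot)$) the components of $\mathcal D_5(G_n,G_*)$, after normalizing by $\mathcal D_5(G_n,G_*)$ itself. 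Dividing $D_n(X)/\mathcal D_5(G_n,G_*)$ and letting $n\to\infty$, the hypothesis forces the limit to vanish as a function of $(X,Y)$, while the normalized coefficients converge to a nonzero vector. The crux is then a \emph{linear independence} claim: the above family of functions, ranging over $j\in[k_*]$ and the derivative indices, is linearly independent in $(X,Y)$. Here is where Definition~\ref{def:modified_function_exact} (First-order Independence) enters — and crucially where the PDE $\partial_\tau\widetilde F=\tfrac1\tau\beta_1^\top\partial_{\beta_1}\widetilde F$ \emph{fails}, so that $\partial_\tau\widetilde F$ and $\partial_{\beta_1}\widetilde F$ are no longer forced to be linearly dependent. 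Concretely, the $X$-dependence of these derivatives is governed by $1$, $\bar\sigma(X,\beta_{1j}^*)$, and $\partial\bar\sigma/\partial w^{(u)}(X,\beta_{1j}^*)$ (for the gating/$\tau$ directions) and by monomials $1,X^{(u)}$ paired with Hermite-type polynomials in $Y$ (for the expert directions, via the $\partial f$ structure); Assumption (A.4) guarantees some $\beta_{1j}^*\neq0$ so the $\sigma$-derivative terms genuinely depend on $X$, Assumption (A.3) separates the experts, and (A.2) handles the last component. Establishing this independence and deriving the contradiction is the heart of the proof and the main obstacle — essentially one must verify that no nontrivial linear relation among $\{1,\bar\sigma,\partial_w\bar\sigma\}$-valued and Hermite-valued functions can hold, reducing to Definition~\ref{def:modified_function_exact} plus the classical independence of Gaussian-derivative kernels.

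\emph{Global part.} For the complementary regime, I would show $\inf_{G:\,\mathcal D_5(G,G_*)\ge\varepsilon_0}\bbE_X[V(p_G,p_{G_*})]>0$ for every fixed $\varepsilon_0>0$. This is the routine identifiability argument: if some sequence $G_n$ stays $\varepsilon_0$-far in $\mathcal D_5$ yet $\bbE_X[V(p_{G_n},p_{G_*})]\to0$, then by compactness of $\Theta$ (Assumption (A.1)) we may pass to a subsequence with $G_n\to G'$, whence $p_{G'}=p_{G_*}$ almost everywhere; identifiability of the activation dense-to-sparse Gaussian MoE (the analogue of the identifiability result cited in Appendix~\ref{sec:identifiability}, which one checks still holds with $\sigma$ in place of the identity) forces $G'\equiv G_*$, contradicting $\mathcal D_5(G',G_*)\ge\varepsilon_0$. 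Combining the local and global bounds yields $\bbE_X[V(p_G,p_{G_*})]\gtrsim\mathcal D_5(G,G_*)$ uniformly over $\mathcal E_{k_*}(\Theta)$, and plugging in Theorem~\ref{theorem:modified_density_rate} gives $\mathcal D_5(\widetilde G_n,G_*)=\widetilde{\mathcal O}(n^{-1/2})$, as claimed. I expect the global part and the Taylor-bookkeeping to be standard, while the linear-independence step invoking Definition~\ref{def:modified_function_exact} — and in particular showing that breaking PDE~\eqref{eq:recall_PDEs} suffices to decouple the $\tau$ and $\beta_1$ directions — is where all the real work lies.
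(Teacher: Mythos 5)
Your overall architecture is the same as the paper's: a local/global split, a contradiction sequence $G_n$ with $\mathcal{D}_5(G_n,G_*)\to 0$, a first-order Taylor expansion whose coefficients recover the components of $\mathcal{D}_5$, a Fatou-type passage to a pointwise limit, a linear-independence step in which Definition~\ref{def:modified_function_exact} kills the $\tau$--$\beta_1$ degeneracy, and a compactness-plus-identifiability argument for the global regime (the paper indeed proves the needed identifiability as Proposition~\ref{prop:modified_identifiability}). However, there is a genuine gap in your local part: you propose to write $p_{G_n}-p_{G_*}$ as a linear combination of the numerator functions $\widetilde F(Y|X,\omega_j^*)$ and their first derivatives, but this is not an identity, because the softmax \emph{denominator} $\sum_{i}\exp\big((\sigma((\beta_{1i})^{\top}X)+\beta_{0i})/\tau\big)$ also varies with the parameters. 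The paper resolves this by multiplying the density difference by the true-model denominator, i.e.\ working with $Q_n:=\big[\sum_{i}\exp\big((\sigma((\boi)^{\top}X)+\bzi)/\tau^*\big)\big]\,(p_{G_n}-p_{G_*})$, which produces, in addition to the $F$-type terms you list, a second family $H(Y;X,\boi,\tau^*)=\exp\big(\bsigma(X,\boi)/\tau^*\big)\,p_{G_n}(Y|X)$ whose Taylor coefficients are exactly the same gating/temperature combination $C_{n,0,i}(X)$ that multiplies $F(Y;X,\omega_i^*)$. Without these $H$ terms your decomposition is false as stated, the Taylor coefficients are not the components of $\mathcal{D}_5$, and the linear-independence family is incomplete.

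Concretely, the concluding step of the paper requires linear independence (in $Y$) of the enlarged set $\{F^{(\eta)}(Y;X,\omega_i^*),\,H(Y;X,\boi,\tau^*):0\le\eta\le 2,\ i\in[k_*]\}$; setting the $F$-coefficient $C^*_{0,i}(X)$ and the $H$-coefficient to zero, and only then invoking the first-order independence of $\{1,\bsigma(X,\boi),\partial\bsigma/\partial w^{(u)}(X,\boi)\}$ in $X$, yields $\phi_{0,i}=\phi_{1,i}^{(u)}=\phi_{2,i}=0$, while the expert directions are handled through the Gaussian-derivative terms as you describe. So your identification of where Definition~\ref{def:modified_function_exact} enters is correct, but you are missing the structural device (multiplication by the true denominator and the resulting $H$-family) that makes the expansion an exact identity and makes the bias terms $\big|\exp(\beta_{0i}^n/\tau^n)-\exp(\bzi/\tau^*)\big|$ appear in the coefficient system; as written, the "Taylor-bookkeeping" you defer is precisely where the argument would break down.
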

Proof of Theorem~\ref{theorem:modified_exact} is in Appendix~\ref{appendix:modified_exact}. Theorem~\ref{theorem:modified_exact} implies that all the rates for estimating parameters $\beta^*_{1j},\tau^*,a^*_{j},b^*_{j},\nu^*_{j}$ are parametric on the sample size, standing at order $\widetilde{\mathcal{O}}(n^{-1/2})$. It can be seen that the estimation rates for $\beta^*_{1j}$ and $\tau^*$ when using the activation dense-to-sparse gate become substantially faster than their counterparts when using the standard dense-to-sparse gate, which are slower than $\widetilde{\mathcal{O}}(n^{-1/2r})$ for any $r\geq 1$. This highlights the benefits of our proposed activation dense-to-sparse gate.

\vspace{-0.5em}
\subsection{Over-specified Setting}
\vspace{-0.5em}
\label{sec:modified_over}
In this section, we continue to characterize conditions for the activation function $\sigma$ under the over-specified settings for the sake of enhancing the parameter estimation rates.

\begin{definition}[Second-order Independence]
\label{def:modified_function_over}
    Let $\sigma:\mathbb{R}\to\mathbb{R}$ be a twice differentiable function and $\bar{\sigma}(X,w):=\sigma(\omega^{\top}X)$. We say that $\sigma$ is second-order independent if the set
    \begin{align*}
        &\Bigg\{1, \ \bar{\sigma}(X,w), \ \bar{\sigma}^2(X,w), \ \frac{\partial\bar{\sigma}}{\partial w^{(u)}}(X,w), \nonumber\\
        &\qquad \Big(\bar{\sigma}\cdot\frac{\partial\bar{\sigma}}{\partial w^{(u)}}\Big)(X,w), \  \Big(\frac{\partial\bar{\sigma}}{\partial w^{(u)}}\cdot\frac{\partial\sigma}{\partial w^{(v)}}\Big)(X,w) ,\\
        &\hspace{2.5cm}\frac{\partial^2\bar{\sigma}}{\partial w^{(u)}\partial w^{(v)}}(X,w): 1\leq u,v\leq d\Bigg\}
    \end{align*}
    is linearly independent for almost surely $X$ for any $w\in\mathbb{R}^d$.
\end{definition}
\textbf{Example.} We can validate that the function $\sigmoid(\cdot)$ and Gaussian error linear units $\gelu(\cdot)$ \cite{hendrycks2023gaussian} also meet the second-order independence condition. Additionally, since the second-order independence condition implies the first-order one, the function $z\mapsto z^p$, for $p\geq 1$, is still not second-order independent.

The second-order independence condition on the activation function $\sigma$ ensures that there are no interactions of the softmax temperature with other parameters as in equation~\eqref{eq:recall_PDEs}, i.e.
\begin{align*}
    \frac{\partial \widetilde{F}}{\partial\tau}\neq\frac{1}{\tau}\cdot\beta_{1}^{\top}\frac{\partial \widetilde{F}}{\partial\beta_1};\quad 
    \frac{\partial^2 \widetilde{F}}{\partial\tau~\partial b}\neq\frac{1}{\tau^2}\cdot\beta_{1}^{\top}\frac{\partial \widetilde{F}}{\partial a}.
\end{align*}
Consequently, not only the rates for estimating $\beta^*_{1j}$ and $\tau^*$ should be improved under the over-specified settings as in Section~\ref{sec:modified_exact} but also those for parameters $a^*_{j}$. Now, it is necessary to build a Voronoi loss function to give a theoretical guarantee for that claim in Theorem~\ref{theorem:modified_over}.

\textbf{Voronoi loss.} 
Then, the Voronoi loss of interest is given by
\begin{align}
    \label{eq:loss_modified_over}
    &\mathcal{D}_6(G,G_*):=\sum_{j=1}^{k_*}\Big|\sum_{i\in\mathcal{A}_j}\exp\Big(\frac{\beta_{0i}}{\tau}\Big)-\exp\Big(\frac{\beta^*_{0j}}{\tau^*}\Big)\Big|\nonumber\\
    +&\sum_{j:|\mathcal{A}_j|>1}\sum_{i\in\mathcal{A}_j}\exp\Big(\frac{\beta_{0i}}{\tau}\Big)K_{ij}\Big(2,2,2,\bar{r}(|\mathcal{A}_j|),\frac{\bar{r}(|\mathcal{A}_j|)}{2}\Big)\nonumber\\
    +&\sum_{j:|\mathcal{A}_j|=1}\sum_{i\in\mathcal{A}_j}\exp\Big(\frac{\beta_{0i}}{\tau}\Big)K_{ij}(1,1,1,1,1).
\end{align}
\begin{theorem}
    \label{theorem:modified_over}
    Under the over-specified settings, the following Total Variation lower bound holds true for any $G\in\mathcal{G}_{k}(\Theta)$:
    \begin{align*}
        \bbE_X[V(p_{G}(\cdot|X),p_{G_*}(\cdot|X))]\gtrsim \mathcal{D}_6(G,G_*)
    \end{align*}
    This bound together with Theorem~\ref{theorem:modified_density_rate} leads to the parametric convergence rate of MLE: $\mathcal{D}_{6}(\widetilde{G}_n,G_*)=\widetilde{\mathcal{O}}(n^{-1/2})$.
\end{theorem}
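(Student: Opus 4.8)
The plan is to prove the Total Variation lower bound $\bbE_X[V(p_G(\cdot|X),p_{G_*}(\cdot|X))]\gtrsim \mathcal{D}_6(G,G_*)$ for all $G\in\mathcal{G}_k(\Theta)$; the stated parametric rate is then immediate, since plugging $G=\widetilde{G}_n$ and invoking Theorem~\ref{theorem:modified_density_rate} yields $\mathcal{D}_6(\widetilde{G}_n,G_*)\lesssim \bbE_X[V(p_{\widetilde{G}_n}(\cdot|X),p_{G_*}(\cdot|X))]=\widetilde{\mathcal{O}}(n^{-1/2})$. As usual for these bounds, I would decompose the inequality into a global part and a local part. For the global part, for each $\varepsilon'>0$ one shows $\inf\{\bbE_X[V(p_G(\cdot|X),p_{G_*}(\cdot|X))]:G\in\mathcal{G}_k(\Theta),\ \mathcal{D}_6(G,G_*)\geq\varepsilon'\}>0$, using compactness of $\mathcal{G}_k(\Theta)$ (Assumption (A.1)), continuity of $G\mapsto p_G$, and identifiability of the activation dense-to-sparse gating Gaussian MoE (the analogue of Appendix~\ref{sec:identifiability}): if the infimum were zero, a limit point $G$ would satisfy $p_G\equiv p_{G_*}$ while $\mathcal{D}_6(G,G_*)\geq\varepsilon'$, contradicting identifiability. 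The substance is the local part.

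For the local part I would argue by contradiction: assume there is a sequence $G_n=\sum_i\exp(\beta_{0i}^n/\tau^n)\delta_{(\beta_{1i}^n,\tau^n,a_i^n,b_i^n,\nu_i^n)}\in\mathcal{G}_k(\Theta)$ with $\mathcal{D}_6(G_n,G_*)\to0$ but $\bbE_X[V(p_{G_n}(\cdot|X),p_{G_*}(\cdot|X))]/\mathcal{D}_6(G_n,G_*)\to0$. Passing to a subsequence, I may assume the Voronoi partition $\mathcal{A}_j=\mathcal{A}_j(G_n)$ is fixed and every component in $\mathcal{A}_j$ converges to $\omega_j^*$. I would write $p_{G_n}(Y|X)-p_{G_*}(Y|X)$ over the common denominator formed by the two softmax normalizers, and on the numerator group terms by Voronoi cell and Taylor-expand $\widetilde{F}(Y|X,\omega)=\exp(\sigma(\beta_1^\top X)/\tau)f(Y|a^\top X+b,\nu)$ in $\omega$ around $\omega_j^*$: to first order on cells with $|\mathcal{A}_j|=1$ (handled exactly as in Theorem~\ref{theorem:modified_exact}), and to order $\bar r(|\mathcal{A}_j|)$ in $(b,\nu)$ together with order $2$ in $(\beta_1,\tau,a)$ on cells with $|\mathcal{A}_j|>1$, matching the exponents in $\mathcal{D}_6$. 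The zeroth-order terms supply the coefficient differences $\sum_{i\in\mathcal{A}_j}\exp(\beta_{0i}^n/\tau^n)-\exp(\beta_{0j}^*/\tau^*)$. The crucial structural input is that, by the second-order independence of $\sigma$ (Definition~\ref{def:modified_function_over}), the PDE relations in \eqref{eq:recall_PDEs} fail for $\widetilde{F}$, so that the derivatives of $\widetilde{F}$ in $\tau,\beta_1,a,b,\nu$ up to the relevant orders yield linearly independent functions of $(X,Y)$: the $X$-side factors are precisely the functions listed in Definition~\ref{def:modified_function_over} evaluated at $w=\beta_{1j}^*$, together with their products with the monomials from $\partial_a^\alpha$, and the $Y$-side factors are the standard linearly independent derivatives of the Gaussian density, the only residual degeneracy being the heat equation $\partial^2 f/\partial b^2=2\,\partial f/\partial\nu$ — which is exactly what forces the $b,\nu$ rates to be governed by $\bar r$.

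I would then divide the numerator identity by $\mathcal{D}_6(G_n,G_*)$. The contradiction hypothesis forces the $L^1(X,Y)$-norm of this normalized linear combination to $0$; the normalized coefficients are bounded (ratios of the pieces of $\mathcal{D}_6$ to $\mathcal{D}_6$), so along a further subsequence each converges, and linear independence of the function family forces every limiting coefficient to vanish. But the limiting coefficients attached to the $(b,\nu)$-derivative terms on a cell $\mathcal{A}_j$ with $|\mathcal{A}_j|>1$ are built from $\Delta b_{ij}^n,\Delta\nu_{ij}^n$ exactly as the left-hand side of system~\eqref{eq:system_r_bar} with $m=|\mathcal{A}_j|$, and the non-solvability of that system at $r=\bar r(|\mathcal{A}_j|)$ prevents all coefficients from being zero unless $\mathcal{D}_6(G_n,G_*)/\mathcal{D}_6(G_n,G_*)\to0$, i.e. $1\to0$, a contradiction. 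Hence the local inequality holds, and together with the global part it gives the theorem.

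The main obstacle is the bookkeeping in this last contradiction step: one must (i) carry the Taylor expansions of $\widetilde{F}$ to precisely the right order on each type of Voronoi cell and enumerate which products of $X$-monomials and $Y$-derivatives-of-Gaussian actually appear; (ii) verify that second-order independence is strong enough to break all temperature/gating/expert-$a$ interactions while the only surviving degeneracy is the Gaussian heat equation (this is what upgrades the exponent on $a$ from a polynomial $\bar r$-type to $2$ compared with the standard gate); and (iii) match the resulting coefficient equations to \eqref{eq:system_r_bar} so that its non-solvability at $r=\bar r(|\mathcal{A}_j|)$ both closes the argument and retroactively justifies the exponents $(2,2,2,\bar r(|\mathcal{A}_j|),\bar r(|\mathcal{A}_j|)/2)$ in $\mathcal{D}_6$. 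Assumptions (A.3)--(A.4) are used to keep the true components separated and to ensure the gating genuinely depends on $X$, so that the relevant $\bar\sigma$-terms are nonvanishing.
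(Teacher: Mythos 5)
Your proposal is correct and follows essentially the same route as the paper's proof in Appendix~\ref{appendix:modified_over}: a global part via compactness, Fatou's lemma and identifiability (Proposition~\ref{prop:modified_identifiability}), and a local part by contradiction using Voronoi cells, Taylor expansions of $\widetilde{F}$ to first order on singleton cells and to order $\bar{r}(|\mathcal{A}_j|)$ on larger cells, second-order independence of $\sigma$ to guarantee linear independence of the resulting $X$- and $Y$-factors, Fatou's lemma to force the normalized coefficients to vanish, and the non-solvability of system~\eqref{eq:system_r_bar} to contradict this for the $(b,\nu)$ coefficients. The only differences are cosmetic bookkeeping matters you already flag (the paper multiplies by the true-model normalizer, producing cross terms $H$ involving $p_{G_n}$ rather than a literal common denominator, and it expands jointly in all parameters on big cells, normalizing by the maximal discrepancy and weight before invoking the polynomial system), so no gap.
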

Proof of Theorem~\ref{theorem:modified_over} is in Appendix~\ref{appendix:modified_over}. The above parametric convergence rate $\widetilde{\mathcal{O}}(n^{-1/2})$ of the MLE $\widetilde{G}$ to $G_*$ under the loss function $\mathcal{D}_{6}$ gives us the followings:

\textbf{(i)} Under the over-specified settings, parameters $\beta^*_{1j},\tau^*,a^*_{j},b^*_{j},\nu^*_{j}$ which are fitted by one component, i.e. $|\mathcal{A}_{j}(\widetilde{G}_n)|=1$, enjoy the same estimation rate of order $\widetilde{\mathcal{O}}(n^{-1/2})$. This result aligns with that under the exact-specified settings in Theorem~\ref{theorem:modified_exact}.

\textbf{(ii)} On the other hand, those for parameters approximated by more than one component, i.e. $|\mathcal{A}_{j}(\widetilde{G}_n)|>1$, are no longer homogeneous. In particular, the rates for estimating $\beta^*_{1j},\tau^*,a^*_{j}$ are of order $\widetilde{\mathcal{O}}(n^{-1/4})$. At the same time, the estimation rates for $b^*_{j}$ and $\nu^*_{j}$ become $\widetilde{\mathcal{O}}(n^{-1/2\bar{r}(|\mathcal{A}_{j}(\widetilde{G}_n)|)})$ and $\widetilde{\mathcal{O}}(n^{-1/\bar{r}(|\mathcal{A}_{j}(\widetilde{G}_n)|)})$, respectively.

\vspace{-0.5em}
\section{Numerical Experiments}  
\label{sec:implications}
In this section, we perform numerical experiments to empirically confirm the theoretical convergence rates of maximum likelihood estimation (MLE) in both standard and activation dense-to-sparse gating MoE models. We employ an Expectation-Maximization (EM) algorithm \cite{dempster_maximum_1977} for parameter estimation, utilizing synthetic datasets generated from the true models in equation~\eqref{eq:standard_density} and equation \eqref{eq:modified_density}. Further details on the experimental setups are deferred to Appendix~\ref{appendix:experiments} due to the space limit.
\begin{figure*}[!ht]
    \centering
    \begin{subfigure}{.32\textwidth}
        \centering
        \includegraphics[scale = .34]{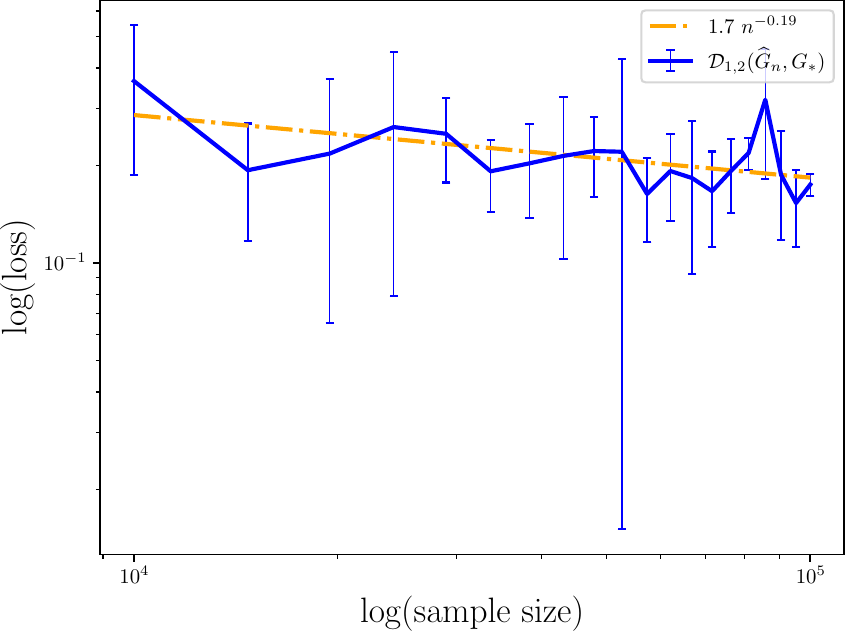}
        \caption{(Exact-specified) Convergence rate of $\mathcal{D}_{1,2}(\widehat{G}_n, G_{*})$}
        \label{fig:exact-D1r}
    \end{subfigure}
    \begin{subfigure}{.32\textwidth}
        \centering
        \includegraphics[scale = .34]{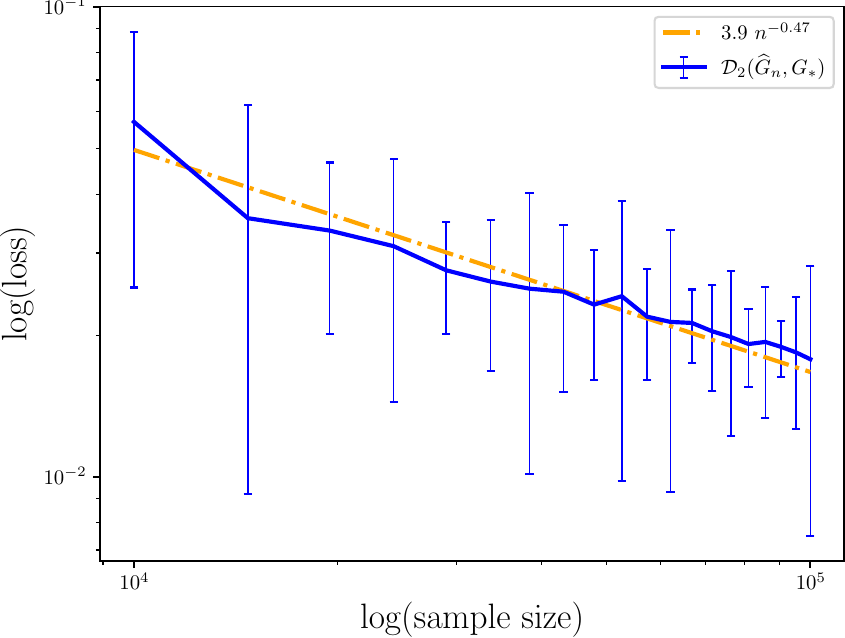}
        \caption{(Exact-specified) Convergence rate of $\mathcal{D}_{2}(\widehat{G}_n, G_{*})$}
        \label{fig:exact-D2}
    \end{subfigure}
    \begin{subfigure}{.32\textwidth}
        \centering
        \includegraphics[scale = .34]{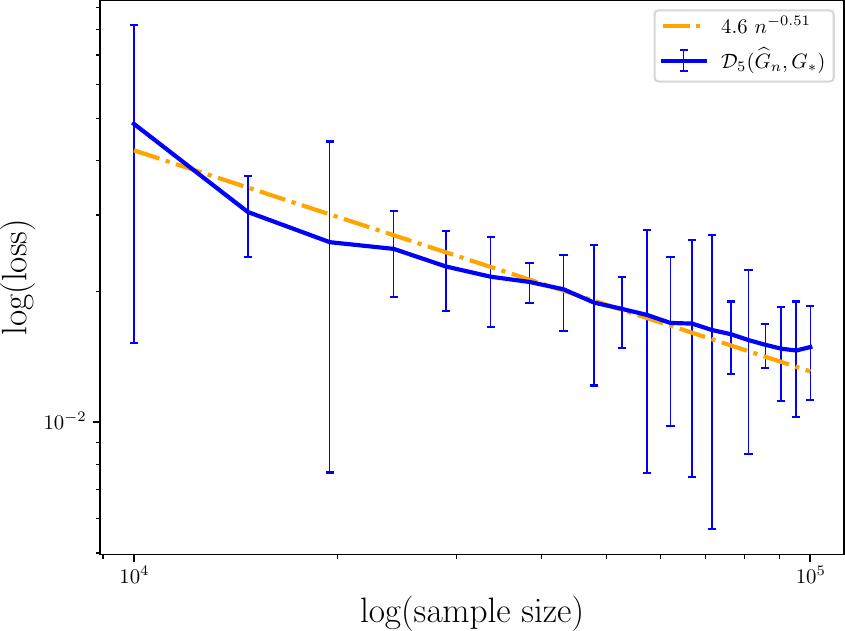}
        \caption{(Exact-specified with sigmoid activation) Convergence rate of $\mathcal{D}_{5}(\widehat{G}_n, G_{*})$}	
        \label{fig:exact-D5}
    \end{subfigure}    
    \vskip\baselineskip
    \begin{subfigure}{.32\textwidth}
        \centering
        \includegraphics[scale = .34]{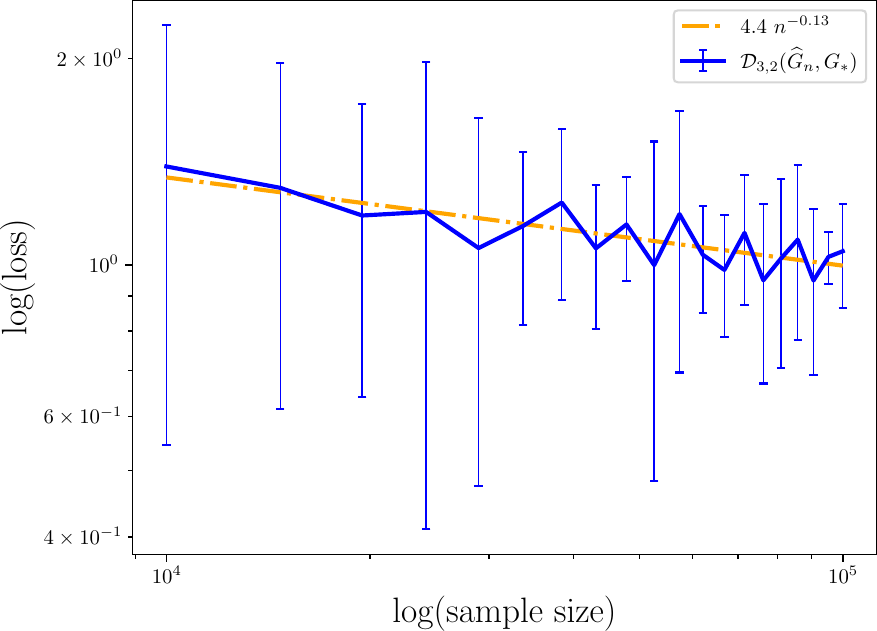}
        \caption{(Over-specified) Convergence rate of $\mathcal{D}_{3,2}(\widehat{G}_n, G_{*})$}	
        \label{fig:over-D3r}
    \end{subfigure}    
    \begin{subfigure}{.32\textwidth}
        \centering
        \includegraphics[scale = .34]{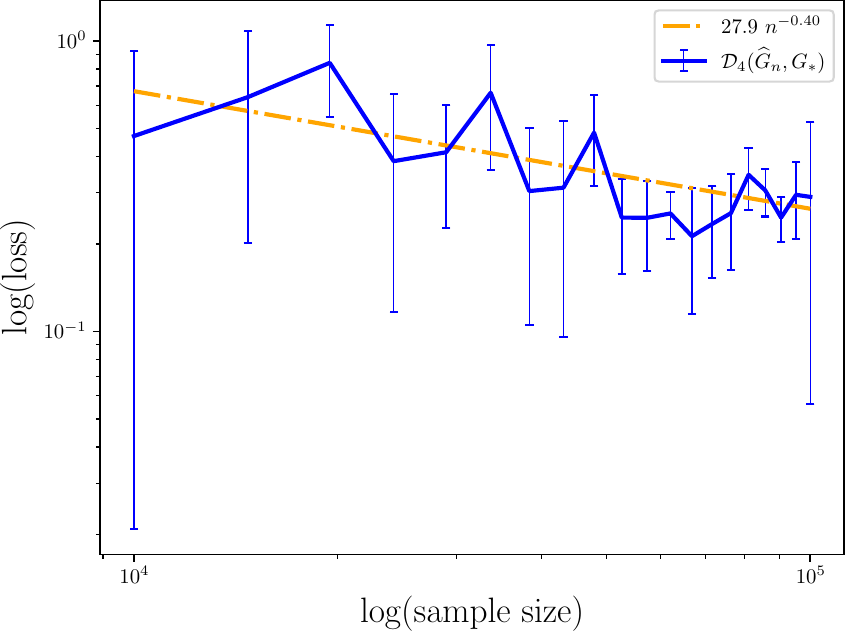}
        \caption{(Over-specified) Convergence rate of $\mathcal{D}_{4}(\widehat{G}_n, G_{*})$}	
        \label{fig:over-D4}
    \end{subfigure}
    \begin{subfigure}{.32\textwidth}
        \centering
        \includegraphics[scale = .34]{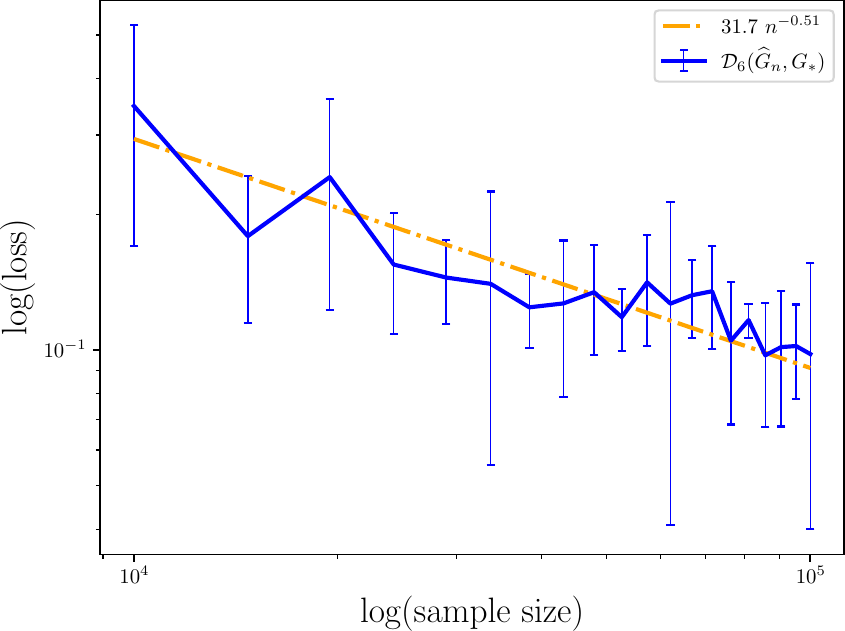}
        \caption{(Over-specified with sigmoid activation) Convergence rate of $\mathcal{D}_{6}(\widehat{G}_n, G_{*})$}	
        \label{fig:over-D6}
    \end{subfigure}
    
    \caption{Log-log scaled plots for the empirical convergence rates of the MLE $\widehat{G}_n$ for exact and over-specified settings. In these figures, the corresponding empirical discrepancies are illustrated by the blue curves, while the orange dash-dotted lines represent for the least-squares fitted linear regression lines. The error bars represent two times the empirical standard deviation under the exact-specified setting.\vspace{-0.75em}}
    \label{fig:experiments}
\end{figure*}

Throughout the experiments under the over-specified setting, we set true number of experts $k_*=2$ and the estimated number of experts $k=k_{*}+1=3$. We perform $40$ experiments for each sample size $n$ in different setups, encompassing a range of $20$ distinct sample sizes whose values vary from $n=10^4$ to $n=10^5$. Moreover, we use the sigmoid function as an activation for the activation dense-to-sparse gate. The graphs in Figure~\ref{fig:experiments} illustrate that the empirical convergence rates of the MLE $\widehat{G}_n$ to the true mixing measure $G_{*}$ under different Voronoi metrics.

\vspace{-0.5em}
\subsection{Standard Dense-to-sparse Gating Function}

\textbf{Exact-specified setting.} As illustrated in Figures~\ref{fig:exact-D1r} and~\ref{fig:exact-D2}, the convergence rate of $\mathcal{D}_{1, 2}(\widehat{G}_n, G_{*})$ is significantly slow as indicated by Theorem~\ref{theorem:log_rate_exact}. Additionally, that of $\mathcal{D}_{2}(\widehat{G}_n, G_{*})$ admits a much faster rate of order $\widetilde{\mathcal{O}}(n^{-1/2})$, which aligns with the result in Theorem~\ref{theorem:tight_rate_exact}.

\textbf{Over-specified setting.}
Similar to the exact-specified setting, as depicted in Figures~\ref{fig:over-D3r} and~\ref{fig:over-D4}, the convergence rate for $\mathcal{D}_{3, 2}(\widehat{G}_n, G_{*})$ is notably slower than that of $\mathcal{D}_{4}(\widehat{G}_n, G_{*})$, where $\mathcal{D}_{4}(\widehat{G}_n, G_{*})$ exhibits a parametric rate of order $\widetilde{\mathcal{O}}(n^{-1/2})$ per Theorem~\ref{theorem:log_rate_over} and Theorem~\ref{theorem:tight_rate_over}.

\vspace{-0.5em}
\subsection{Activation Dense-to-sparse Gating Function}
\vspace{-0.5em}
Illustrated in Figures~\ref{fig:exact-D5}~and~\ref{fig:over-D6}, the use of sigmoid activation in the softmax gate leads to an enhanced rate of $\widetilde{\mathcal{O}}(n^{-1/2})$ for both exact-specified and over-specified settings. These results totally match those in Theorem~\ref{theorem:modified_exact} and Theorem~\ref{theorem:modified_over}.

\section{Practical Implications}
\label{sec:practical_implications}
In this section, we provide four main practical implications from our theoretical results:

\textbf{1. Expert Selection:} An important application of the dense-to-sparse MoE is to select the important experts (similar to the popular top-K sparse MoE for large language models in the literature). Our theory for parameter estimation have direct indications for the sample efficiency of choosing the important experts. In particular, it suggests that we need an exponential number of data (roughly $\exp(1/\epsilon^{\eta})$ for some $\eta > 0$ where $\epsilon$ is the desired approximation error) to estimate the softmax gating function with temperature, which directly implies that we need an exponential number of data to select the important experts. This is undesirable in practice and would lead to potential wrong choice of important experts when we only have a finite number of data. On the other hand, for the proposed activation dense-to-sparse gate, we can reduce that exponential number of data to only a polynomial number of data (roughly $\epsilon^{-\bar{\eta}}$ for some $\bar{\eta} > 0$) to select the important experts, which is a considerable improvement in practice.

\textbf{2. Expert Estimation:} Apart from the benefit of sample complexity from dense-to-sparse to activation dense-to-sparse models, the parameter estimation rates also directly imply the convergence behavior of estimating expert networks, which is of practical interest. In particular, we consider linear experts of the form $a^{\top}x+b$ in our work, and show that the estimation rate of the expert $a^{\top}x+b$ is the slowest between the estimation rates of its parameters $a$ and $b$. For instance, it can be seen from Table~\ref{table:parameter_rates} that under the over-specified setting of the dense-to-sparse gate MoE model, the rate for estimating $b^*_j$ is faster than that for $a^*_j$. Consequently, the expert estimation rate is equal to the rate for estimating $a^*_j$, which is slower than any polynomial rates. By contrast, under the over-specified setting of the activation dense-to-sparse gate MoE model, since the estimation rate of $a^*_j$ is improved to be faster than that of $b^*_j$, the expert admits the same estimation rate as $b^*_j$, which is of order $\widetilde{\mathcal{O}}(n^{-1/2\bar{r}})$, faster the previous one. From this observation, we see that the parameter estimation problem also provides useful insights in designing the gating mechanism of MoE models, which helps enhance the performance of expert networks significantly.

\textbf{3. Misspecified settings:} In this paper, we consider a well-specified setting where the data are assumed to be sampled from the (activation) dense-to-sparse gating Gaussian MoE in order to lay the foundation for a more realistic yet challenging misspecified setting where the data are not necessarily generated from these models. Under that misspecified setting, we assume that the data are generated from the true but unknown conditional distribution $Q(Y|X)$, which is not either dense-to-sparse or activation dense-to-sparse mixture of experts. Then, we can demonstrate that the MLE $\widehat{G}_n$ converges to a mixing measure $\overline{G} \in \argmin_{G \in \mathcal{G}_{k}(\Theta)} \text{KL}(Q(Y|X) || g_{G}(Y|X))$ where KL stands for the Kullback-Leibler divergence, and $g_{G}(Y|X)$ is the conditional density of the (activation) dense-to-sparse gate Gaussian MoE. As $Q(Y|X)$ does not belong to the (activation) dense-to-sparse MoEs, the optimal mixing measure will be in the boundary of the parameter space, namely, the number of experts in $\bar{G}$ is $k$. Therefore, as $n$ is sufficiently large, $\widehat{G}_{n}$ also has $k$ experts. 

The insights from our theories in the well-specified setting indicate that the Voronoi losses can be used to obtain the estimation rates of individual parameters of the MLE $\widehat{G}_n$ to those of $\overline{G}$. From Table 1 in the manuscript, we can see that

\emph{(3.1) Dense-to-sparse MoE:} the worst parameter estimation rate of $\widehat{G}_{n}$ to $\overline{G}$ could be as slow as $1/\log^{\eta}(n)$ for some $\eta > 0$. It indicates that we still need exponential number of data (roughly $\exp(1/\epsilon^{\eta})$ where $\epsilon$ is the desired approximation error) to estimate $\overline{G}$. 

\emph{(3.2) Activation dense-to-sparse MoE:} Under the misspecified setting, the parameter estimation rate of $\widehat{G}_{n}$ to $\overline{G}$ is $n^{-1/2}$. It indicates that we only need roughly $\epsilon^{-2}$ where $\epsilon$ is the desired approximation error to estimate $\overline{G}$. 

As a consequence, under the misspecified settings, the parameter estimation rates achieved when we use the activation dense-to-sparse gate MoE in the above KL divergence should be faster than those obtained when we use the dense-to-sparse gate MoE. This explains why the activation dense-to-sparse gate is a solution to the parameter estimation problem, or more generally, the expert estimation and selection problem of the MoE models.

\textbf{4. Model design:} From the benefits of the activation dense-to-sparse gate for the expert estimation of MoE models when using the temperature to smooth the expert selection process, we deduce that it would be better to use a gating network with sufficiently sophisticated activation functions (e.g. $\mathrm{sigmoid}$, $\mathrm{GeLU}$, etc) rather than a simple linear gating network.


\vspace{-0.5em}
\section{Concluding Remarks}
\label{sec:conclusion}
In this paper, we investigate the effects of the dense-to-sparse gate on the convergence rates of maximum likelihood estimation under the Gaussian mixture of experts. We discover that the density estimation rate is parametric on the sample size. On the other hand, due to the interactions of the temperature with both gating and expert parameters via two partial differential equations, the rates for estimating them are slower than any polynomial rates, and therefore, could be as slow as $\mathcal{O}(1/\log(n))$. To enhance the sample efficiency of the temperature for the Gaussian mixture of experts, we design a novel gating function called activation dense-to-sparse, which routes the outputs of a linear layer to an activation function before sending them to the softmax function. We demonstrate that if the activation function meets the first-order (second-order) independence condition, then the aforementioned interactions disappear, and the parameter estimation rates become polynomial under the exact-specified (over-specified) settings.

Following from the results in this work, there are a few promising directions that we leave for future development:

Firstly, the convergence analysis of maximum likelihood estimation in this paper is carried out under the assumption that the data are sampled from the (activation) dense-to-sparse gating Gaussian mixture of experts. However, when the data are not necessarily generated from that model, such theoretical analysis has remained missing in the literature. More concretely, under that setting, the MLE converges to a mixing measure $\overline{G} \in \argmin_{G \in \mathcal{G}_{k}(\Theta)} \text{KL}(Q(Y|X) || g_{G}(Y|X))$ where $Q(Y|X)$ stands for the true conditional distribution of $Y$ given $X$ and $\text{KL}$ denotes the Kullback-Leibler divergence. It is worth noting that current techniques for analyzing the convergence of the MLE apply only for the setting when the space of mixing measures is convex. Since the space $\mathcal{G}_{k}(\Theta)$ is non-convex, it is essential to develop further technical tools to establish the convergence rate of the MLE to the set of $\overline{G}$.

Secondly, the current theories are for probabilistic settings of dense-to-sparse gating mixture of experts, namely, when the expert functions are means of the Gaussian distribution. In practical applications, we usually consider the deterministic settings of dense-to-sparse gating mixture of experts of the form: $\sum_{i = 1}^{k^{*}} \softmax \left(\frac{(\beta_{1i}^{*})^{\top} X + \beta_{0i}^{*}}{\tau}\right) h(X, \eta_{i}^{*})$ where $h(., \eta_{i}^{*})$ are general expert functions for all $i \in [k]$ and utilize the least square loss function to estimate the true parameters. Therefore, extending the current theories under the probabilistic settings to those under the deterministic settings is also practically important. 

\section*{Acknowledgements}
NH acknowledges support from the NSF IFML 2019844 and the NSF AI Institute for Foundations of Machine Learning.

\section*{Impact  Statement}

This paper presents work whose goal is to advance the field of Machine Learning.
Given the theoretical nature of the paper, we believe that there are no potential societal consequences of our work.

\bibliography{example_paper}
\bibliographystyle{icml2024}

\newpage
\appendix
\onecolumn

\centering
\textbf{\Large{Supplementary Material for \\ \vspace{.2em} 
``Is Temperature Sample Efficient for Softmax Gaussian Mixture of Experts?''}}

\justifying
\setlength{\parindent}{0pt}
In this supplementary material, we provide rigorous theoretical guarantee for the convergence rate of density estimation in Appendix~\ref{appendix:density_rates}, while we leave that for parameter estimation in Appendix~\ref{appendix:parameter_estimations}. We study the identifiability of the (activation) dense-to-sparse gating Gaussian mixture of experts (MoE) in Appendix~\ref{sec:identifiability}. Finally, additional experiment setups are presented in Appendix~\ref{appendix:experiments}.

\section{Proofs for Density Estimation Rates}
\label{appendix:density_rates}
\subsection{Proof of Theorem~\ref{theorem:density_rate}}
\label{appendix:density_rate}
In this proof, we will leverage results regarding the convergence rates of density estimation from MLE in [Theorem 7.4, \cite{vandeGeer-00}]. Prior to presenting those result here, it is necessary to introduce some notations. Firstly, we denote by $\mathcal{G}_k(\Theta)$ the set of conditional densities of all mixing measures in $\mathcal{O}_k(\Theta)$, that is, $\mathcal{G}_k(\Theta):=\{g_{G}(Y|X):G\in\mathcal{G}_{k}(\Theta)\}$. Next, we define
\begin{align*}
    \widetilde{\mathcal{G}}_k(\Theta)&:=\{g_{(G+G_*)/2}(Y|X):G\in\mathcal{O}_k(\Theta)\},\\
    \widetilde{\mathcal{G}}^{1/2}_k(\Theta)&:=\{g^{1/2}_{(G+G_*)/2}(Y|X):G\in\mathcal{O}_k(\Theta)\}.
\end{align*}
Additionally, for each $\delta>0$, the Hellinger ball centered around the conditional density $g_{G_*}(Y|X)$ and intersected with the set $ \widetilde{\mathcal{G}}^{1/2}_k(\Theta)$ is defined as
\begin{align*}
     \widetilde{\mathcal{G}}^{1/2}_k(\Theta,\delta):=\left\{g^{1/2}\in \widetilde{\mathcal{G}}^{1/2}_k(\Theta):h(g,g_{G_*})\leq\delta\right\}.
\end{align*}
In order to measure the size of the above set, Geer et. al. \cite{vandeGeer-00} suggest using the following quantity:
\begin{align}
    \label{eq:bracket_size}
    \mathcal{J}_B(\delta, \widetilde{\mathcal{G}}^{1/2}_k(\Theta,\delta)):=\int_{\delta^2/2^{13}}^{\delta}H_B^{1/2}(t, \widetilde{\mathcal{G}}^{1/2}_k(\Theta,t),\|\cdot\|)~\dint t\vee \delta,
\end{align}
where $H_B(t, \widetilde{\mathcal{G}}^{1/2}_k(\Theta,t),\|\cdot\|)$ stands for the bracketing entropy \cite{vandeGeer-00} of $ \widetilde{\mathcal{G}}^{1/2}_k(\Theta,u)$ under the $\ell_2$-norm, and $t\vee\delta:=\max\{t,\delta\}$. Now, let us recall the statement of Theorem 7.4 in \cite{vandeGeer-00} with notations being adapted to this work.
\begin{lemma}[Theorem 7.4, \cite{vandeGeer-00}]
    \label{theorem:vandegeer}
    Take $\Psi(\delta)\geq \mathcal{J}_B(\delta,\widetilde{\mathcal{G}}^{1/2}_k(\Theta,\delta))$ that satisfies $\Psi(\delta)/\delta^2$ is a non-increasing function of $\delta$. Then, for some universal constant $c$ and for some sequence $(\delta_n)$ such that $\sqrt{n}\delta^2_n\geq c\Psi(\delta_n)$, we achieve that
    \begin{align*}
        \mathbb{P}\Big(\bbE_X[h(g_{\widehat{G}_n}(\cdot|X),g_{G_*}(\cdot|X))]>\delta\Big)\leq c\exp\left(-\frac{n\delta^2}{c^2}\right),
    \end{align*}
    for all $\delta\geq \delta_n$, where $h(g_1,g_2):=\Big(\frac{1}{2}\int(\sqrt{g_1}-\sqrt{g_2})^2\dint \mu\Big)^{1/2}$ is the Hellinger distance w.r.t Lebesgue measure $\mu$.
\end{lemma}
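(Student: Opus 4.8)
The plan is to prove this by the standard basic-inequality-plus-peeling scheme of empirical process theory (this is Theorem~7.4 of \cite{vandeGeer-00}); I sketch the main steps. Write $Z=(X,Y)$ with true joint law $P$ (the marginal of $X$ times $g_{G_*}(Y|X)$), let $\mathbb{P}_n$ denote the empirical measure of $Z_1,\dots,Z_n$ and $\mathbb{G}_n:=\sqrt n(\mathbb{P}_n-P)$, and for a mixing measure $G$ put $\bar g_G:=\tfrac12(g_G+g_{G_*})$, so that $h^2(\bar g_G,g_{G_*})=\tfrac12\,\mathbb{E}_X\!\int(\sqrt{\bar g_G(y|X)}-\sqrt{g_{G_*}(y|X)})^2\,\dint y$. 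First I would record two elementary reductions: by Jensen, $\mathbb{E}_X[h(g_{\widehat G_n}(\cdot|X),g_{G_*}(\cdot|X))]\le h(g_{\widehat G_n},g_{G_*})$; and by the elementary inequality $h(g_{\widehat G_n},g_{G_*})\le 2\,h(\bar g_{\widehat G_n},g_{G_*})$ it suffices to control $h(\bar g_{\widehat G_n},g_{G_*})$. The role of $\bar g$ is that $\log(\bar g/g_{G_*})$ is bounded above by $2(\sqrt{\bar g/g_{G_*}}-1)$, which stays integrable.

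Next I would derive the \emph{basic inequality}. Since $\widehat G_n$ maximizes $\mathbb{P}_n\log g_G$, we have $\mathbb{P}_n\log(g_{\widehat G_n}/g_{G_*})\ge0$, and concavity of $\log$ gives $\mathbb{P}_n\log(\bar g_{\widehat G_n}/g_{G_*})\ge\tfrac12\mathbb{P}_n\log(g_{\widehat G_n}/g_{G_*})\ge0$. Using $\log u\le2(\sqrt u-1)$ pointwise and $P(\sqrt{\bar g_G/g_{G_*}}-1)=-h^2(\bar g_G,g_{G_*})$, this yields
\begin{align*}
  h^2(\bar g_{\widehat G_n},g_{G_*})\;\le\;(\mathbb{P}_n-P)\Big(\sqrt{\bar g_{\widehat G_n}/g_{G_*}}-1\Big)\;=\;\tfrac{1}{\sqrt n}\,\mathbb{G}_n\Big(\sqrt{\bar g_{\widehat G_n}/g_{G_*}}-1\Big),
\end{align*}
so on the event $\{h(\bar g_{\widehat G_n},g_{G_*})>\delta\}$ the process $G\mapsto\mathbb{G}_n(\sqrt{\bar g_G/g_{G_*}}-1)$ must exceed $\sqrt n\,\delta^2$ somewhere on $\{G:h(\bar g_G,g_{G_*})>\delta\}$.

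I would then \emph{peel}: split that set into shells $S_j:=\{G:2^{j-1}\delta<h(\bar g_G,g_{G_*})\le2^j\delta\}$, $j\ge1$, giving $\mathbb{P}(h(\bar g_{\widehat G_n},g_{G_*})>\delta)\le\sum_{j\ge1}\mathbb{P}(\sup_{G\in S_j}\mathbb{G}_n(\sqrt{\bar g_G/g_{G_*}}-1)\ge\sqrt n(2^{j-1}\delta)^2)$. On $S_j$ the class $\mathcal F_j:=\{\sqrt{\bar g_G/g_{G_*}}-1:G\in S_j\}$ is uniformly bounded (by (A.1), compactness of $\Theta$ and boundedness of $\mathcal X$ force all densities to be bounded above and below), has $L_2(P)$-radius $\lesssim2^j\delta$, and has $L_2(P)$-bracketing numbers bounded — up to the fixed bounds on $g_{G_*}$ — by those of $\widetilde{\mathcal G}^{1/2}_k(\Theta,2^j\delta)$. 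A chaining argument against these bracketing numbers, truncated at the lower cut-off appearing in \eqref{eq:bracket_size} and combined with Bernstein's inequality at each dyadic scale, gives a universal $c$ with $\mathbb{P}(\sup_{f\in\mathcal F_j}|\mathbb{G}_nf|\ge a)\le c\exp(-a^2/(c^2(2^j\delta)^2))$ whenever $a\ge c\,\mathcal J_B(2^j\delta,\widetilde{\mathcal G}^{1/2}_k(\Theta,2^j\delta))$. To close, I use the hypotheses on $\Psi$: since $\Psi\ge\mathcal J_B(\cdot,\widetilde{\mathcal G}^{1/2}_k(\Theta,\cdot))$ and $\delta\mapsto\Psi(\delta)/\delta^2$ is non-increasing, for $\delta\ge\delta_n$ and $j\ge1$ (so $2^j\delta\ge\delta_n$) we get $\Psi(2^j\delta)\le(2^j\delta)^2\Psi(\delta_n)/\delta_n^2\le(2^j\delta)^2\sqrt n/c$ from $\sqrt n\delta_n^2\ge c\Psi(\delta_n)$, hence $\sqrt n(2^{j-1}\delta)^2\ge\tfrac c4\,\mathcal J_B(2^j\delta,\widetilde{\mathcal G}^{1/2}_k(\Theta,2^j\delta))$ and the maximal inequality applies with $a=\sqrt n(2^{j-1}\delta)^2$. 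Since $a^2/(2^j\delta)^2=n\,2^{2j-4}\delta^2$, shell $j$ contributes at most $c\exp(-n\,2^{2j-4}\delta^2/c^2)$; summing the resulting series over $j\ge1$ gives $\mathbb{P}(h(\bar g_{\widehat G_n},g_{G_*})>\delta)\le c\exp(-n\delta^2/c^2)$ for all $\delta\ge\delta_n$, and the two reductions of the first paragraph then yield the asserted bound.

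The hard part will be the local maximal inequality: turning Dudley-type chaining controlled by $L_2$-bracketing numbers into an exponential deviation bound — via truncation and a Bernstein inequality at each chaining scale — with the correct interplay between the shell radius $2^j\delta$, the lower endpoint of the entropy integral in \eqref{eq:bracket_size}, and $n$. The monotonicity assumption on $\Psi(\delta)/\delta^2$ is exactly what makes the peeling sum telescope into the clean tail $\exp(-n\delta^2/c^2)$; by contrast, the remaining ingredients — the pointwise bound $\log u\le2(\sqrt u-1)$, the comparison $h(g_{\widehat G_n},g_{G_*})\le2\,h(\bar g_{\widehat G_n},g_{G_*})$, and the transfer of bracketing numbers from $\widetilde{\mathcal G}^{1/2}_k$ to $\mathcal F_j$ using the uniform bounds from (A.1) — are routine.
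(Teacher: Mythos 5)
The paper itself gives no proof of this lemma: it is quoted verbatim as Theorem 7.4 of \cite{vandeGeer-00} with the proof deferred to that reference, and your sketch is exactly the standard argument there (convexified basic inequality via $\bar g=(g+g_{G_*})/2$ and $\log u\le 2(\sqrt u-1)$, peeling into Hellinger shells, a local maximal inequality from bracketing entropy, and the monotonicity of $\Psi(\delta)/\delta^2$ to propagate from $\delta_n$ to all $\delta\ge\delta_n$), so there is no divergence to report. The only inaccuracies worth flagging are constant-level or cosmetic: under (A.1) the ratios $\sqrt{\bar g_G/g_{G_*}}$ are \emph{not} uniformly bounded above (Gaussian components with differing variances make the ratio blow up in $Y$), but van de Geer's maximal inequality only needs the lower bound $\sqrt{\bar g_G/g_{G_*}}\ge 2^{-1/2}$ together with Hellinger-type bracketing, so this does not break the argument; likewise the factor $2$ in $h(g_{\widehat G_n},g_{G_*})\lesssim h(\bar g_{\widehat G_n},g_{G_*})$ is just a universal constant absorbed into $c$.
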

Proof of Lemma~\ref{theorem:vandegeer} can be found in \cite{vandeGeer-00}. Subsequently, we provide below a result on the bound for the bracketing entropy, which is essential for the proof of Theorem~\ref{theorem:density_rate}.

\begin{lemma}
    \label{lemma:bracket_bound}
    Assume that $\Theta$ is a bounded set, then the following inequality holds true for any $0\leq\varepsilon\leq 1/2$:
    \begin{align*}
        H_B(\varepsilon,\mathcal{G}_k(\Theta),h)\lesssim \log(1/\varepsilon).
    \end{align*}
\end{lemma}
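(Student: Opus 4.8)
\textbf{Proof proposal for Lemma~\ref{lemma:bracket_bound}.}

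The plan is to bound the bracketing entropy of the class $\mathcal{G}_k(\Theta)$ of conditional densities by a standard Lipschitz-in-parameter argument, exploiting the compactness of $\Theta$ and the boundedness of $\mathcal{X}$. First I would observe that since $\Theta\subset\mathbb{R}\times\mathbb{R}^d\times\mathbb{R}_+\times\mathbb{R}^d\times\mathbb{R}\times\mathbb{R}_+$ is compact (Assumption (A.1)), it can be covered by $\mathcal{O}(\varepsilon^{-\dim\Theta})$ balls of radius $\varepsilon$ in the parameter space. For a mixing measure $G=\sum_{i=1}^{k'}\exp(\beta_{0i}/\tau)\delta_{(\beta_{1i},\tau,a_i,b_i,\nu_i)}$ with $k'\le k$, the parameters live in $(\Theta)^{k}$ (padding with repeated atoms if $k'<k$), so an $\varepsilon$-net of this product space has cardinality $\mathcal{O}(\varepsilon^{-k\dim\Theta})$.

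The key analytic step is to show that the map from parameters to the density $g_G(\cdot|X)$ is Lipschitz (in a suitable sense, uniformly over the bounded input space $\mathcal{X}$), so that an $\varepsilon$-net in parameters yields an $\mathcal{O}(\varepsilon)$-net of densities, and then to convert this covering bound into a bracketing bound. Concretely, one writes $g_G(Y|X)=\sum_i \softmax_i\cdot f(Y|a_i^\top X+b_i,\nu_i)$; since the softmax weights are bounded by $1$, it suffices to control, for two parameter vectors $\theta,\theta'$ within distance $\eta$, the quantity $\sup_X\int|g_G(Y|X)-g_{G'}(Y|X)|\,dY$ (or the corresponding Hellinger/sup-norm quantity). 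Each Gaussian component $f(Y|a^\top X+b,\nu)$, together with the softmax gating weights, is a smooth function of the parameters; because $\Theta$ is compact, the variance $\nu$ is bounded away from $0$ and $\infty$, the means $a^\top X+b$ are uniformly bounded (using boundedness of $\mathcal{X}$), and $\tau$ is bounded away from $0$, all the relevant derivatives are uniformly bounded. Hence there is a constant $L$ with $\|g_G(\cdot|X)-g_{G'}(\cdot|X)\|_{L^1(dY)}\le L\|\theta-\theta'\|$ uniformly in $X$, and a fortiori $h(g_G,g_{G'})\lesssim\|\theta-\theta'\|^{1/2}$ or even $\lesssim\|\theta-\theta'\|$ after a further standard estimate (Gaussian location-scale families are locally Lipschitz in Hellinger distance on compacts). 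To obtain brackets rather than a mere covering, one uses the standard device: around each net point $\theta$ form the bracket $[\,g_\theta - L\eta,\ g_\theta + L\eta\,]$ (or a multiplicative envelope version to keep the lower bracket nonnegative and integrable); these brackets have $L^1$- (equivalently Hellinger-) size $\mathcal{O}(\eta)$ and cover $\mathcal{G}_k(\Theta)$. Choosing $\eta\asymp\varepsilon$ gives $N_B(\varepsilon,\mathcal{G}_k(\Theta),h)=\mathcal{O}(\varepsilon^{-k\dim\Theta})$, hence $H_B(\varepsilon,\mathcal{G}_k(\Theta),h)=\log N_B(\varepsilon,\mathcal{G}_k(\Theta),h)\lesssim k\dim\Theta\cdot\log(1/\varepsilon)\lesssim\log(1/\varepsilon)$ for $\varepsilon\le 1/2$, as claimed.

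The main obstacle I anticipate is the careful handling of the bracketing (as opposed to covering) construction together with the tail behavior in $Y$: the Gaussian densities have unbounded support, so the naive additive brackets $g_\theta\pm L\eta$ are not integrable, and one must instead build brackets using a shared envelope function controlling the Gaussian tails uniformly over the (bounded) means and (bounded, bounded-away-from-zero) variances — e.g.\ bounding each component by a fixed Gaussian with slightly larger variance. This is routine for location-scale Gaussian mixtures on a compact parameter set but requires some care; once the envelope is in place, the Lipschitz estimate in the parameters and the counting of the $\varepsilon$-net give the logarithmic bound immediately. A secondary point is that we need the Lipschitz bound to be uniform over $X\in\mathcal{X}$, which is exactly why boundedness of $\mathcal{X}$ in Assumption (A.1) is invoked; without it the linear expert means $a^\top X+b$ and the gating logits would be unbounded and the derivatives would not be uniformly controlled.
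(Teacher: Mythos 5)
Your proposal is correct and follows essentially the same route as the paper: a polynomial-size $\eta$-net of the compact parameter space, a Lipschitz-in-parameter bound on the densities in $\ell_1$ (uniform over the bounded $\mathcal{X}$, with the temperature and variance bounded away from zero), and brackets built from the net points truncated below by $0$ and above by a Gaussian-tail envelope, exactly the device the paper uses to handle the integrability issue you flag. The resulting $\mathcal{O}(\eta^{-(2d+4)k})$ covering count and the comparison of Hellinger with the $\ell_1$ norm then give $H_B(\varepsilon,\mathcal{G}_k(\Theta),h)\lesssim\log(1/\varepsilon)$, as in the paper.
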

Proof of Lemma~\ref{lemma:bracket_bound} is deferred to Appendix~\ref{appendix:bracket_bound}. Equipped with the results in Lemma~\ref{theorem:vandegeer} and Lemma~\ref{lemma:bracket_bound}, we present the proof of Theorem~\ref{theorem:density_rate} in Appendix~\ref{appendix:density_main_proof}.

\subsubsection{Main Proof}
\label{appendix:density_main_proof}
It is worth noting that
\begin{align*}
    H_B(t, \widetilde{\mathcal{G}}^{1/2}_k(\Theta,t),\|\cdot\|)\leq H_B(t,\mathcal{G}_k(\Theta,t),h),
\end{align*}
for any $t>0$. Then, we deduce from equation~\eqref{eq:bracket_size} that 
\begin{align*}
    \mathcal{J}_B(\delta, \widetilde{\mathcal{G}}^{1/2}_k(\Theta,\delta))&\leq \int_{\delta^2/2^{13}}^{\delta}H_B^{1/2}(t, \mathcal{G}_k(\Theta,t),h)~\dint t\vee \delta\lesssim \int_{\delta^2/2^{13}}^{\delta}\log(1/t)dt\vee\delta,
\end{align*}
where the second inequality occurs due to the upper bound of a bracketing entropy in Lemma~\ref{lemma:bracket_bound}. 

Denote $\Psi(\delta)=\delta\cdot[\log(1/\delta)]^{1/2}$, it is clear that $\Psi(\delta)/\delta^2$ is a non-increasing function of $\theta$. Furthermore, it follows the above inequality that $\Psi(\delta)\geq \mathcal{J}_B(\delta,\widetilde{\mathcal{G}}^{1/2}_k(\Theta,\delta))$. Additionally, let $\delta_n=\sqrt{\log(n)/n}$, we get that $\sqrt{n}\delta^2_n\geq c\Psi(\delta_n)$ for some universal constant $c$. Now, by applying Lemma~\ref{theorem:vandegeer}, we obtain that 
\begin{align*}
    \mathbb{P}\Big(\bbE_X[h(g_{\widehat{G}_n}(\cdot|X),g_{G_*}(\cdot|X))]>C(\log(n)/n)^{1/2}\Big)\lesssim \exp(-c\log(n)),
\end{align*}
for some universal constant $C$ that depends only on $\Theta$. Since the Hellinger distance is lower bounded by the Total Variation distance, i.e. $h\geq V$, we also achieve that
\begin{align*}
    \mathbb{P}\Big(\bbE_X[V(g_{\widehat{G}_n}(\cdot|X),g_{G_*}(\cdot|X))]>C(\log(n)/n)^{1/2}\Big)\lesssim \exp(-c\log(n)).
\end{align*}
Hence, we reach the conclusion that $\bbE_X[V(g_{\widehat{G}_n}(\cdot|X),g_{G_*}(\cdot|X))]=\widetilde{\mathcal{O}}(n^{-1/2})$.

\subsubsection{Proof of Lemma~\ref{lemma:bracket_bound}}
\label{appendix:bracket_bound}
First of all, we aim to derive an upper bound for the Gaussian density $f(Y|a^{\top}X+b,\nu)$. As both $\mathcal{X}$ and $\Theta$ are bounded sets, we can find positive constants $\kappa,u,\ell$ that satisfy $-\kappa\leq a^{\top}X+b\leq\kappa$ and $\ell\leq \nu\leq u$. Therefore, we have that 
    \begin{align*}
        f(Y|a^{\top}X+b,\nu)=\frac{1}{\sqrt{2\pi \nu}}\exp\Big(-\frac{(Y-a^{\top}X-b)^2}{2\nu}\Big)\leq \frac{1}{\sqrt{2\pi \ell}}.
    \end{align*}
    For any $|Y|\geq 2\kappa$, we get that $\frac{(Y-a^{\top}X-b)^2}{2\nu}\geq \frac{Y^2}{8u}$, implying that 
    \begin{align*}
        f(Y|a^{\top}X+b,\nu)\leq \frac{1}{\sqrt{2\pi\ell}}\exp\Big(-\frac{Y^2}{8u}\Big).
    \end{align*}
    Putting the above results together, it follows that $f(Y|a^{\top}X+b,\nu)\leq B(Y|X)$, where we define
    \begin{align*}
        B(Y|X)=\begin{cases}
            \frac{1}{\sqrt{2\pi\ell}}\exp\Big(-\frac{Y^2}{8u}\Big), \hspace{1cm}|Y|\geq 2\kappa;\\
            \textbf{}\\
            \frac{1}{\sqrt{2\pi \ell}},\hspace{3cm} \text{otherwise}.
        \end{cases}
    \end{align*} 
    Let $\eta\leq\varepsilon$, we assume that the set $\mathcal{G}_k(\Theta)$ has an $\eta$-cover (under $\ell_1$-norm) denoted by $\{\pi_1,\ldots,\pi_N\}$, where $N:={N}(\eta,\mathcal{G}_k(\Theta),\|\cdot\|_1)$ is the $\eta$-covering number of the metric space $(\mathcal{G}_k(\Theta),\|\cdot\|_1)$. Then, we construct the brackets of the form $[L_i(Y|X),U_i(Y|X)]$ for all $i\in[N]$ as follows:
    \begin{align*}
        L_i(Y|X)&:=\max\{\pi_i(Y|X)-\eta,0\},\\
        U_i(Y|X)&:=\min\{\pi_i(Y|X)+\eta, B(Y|X)\}.
    \end{align*}
    We can verify that $\mathcal{G}_k(\Theta)\subset \bigcup_{i=1}^{N}[L_i(Y|X),U_i(Y|X)]$ with a note that $0\leq U_i(Y|X)-L_i(Y|X)\leq \min\{2\eta, B(Y|X)\}$. Next, for each $i\in[N]$, the term $\|U_i-L_i\|_1$ is upper bounded as follows:
    \begin{align*}
        \|U_i-L_i\|_1&= \int_{|Y|<2\kappa}(U_i(Y|X)-L_i(Y|X))~\dint(X,Y)+\int_{|Y|\geq 2\kappa}(U_i(Y|X)-L_i(Y|X))~\dint(X,Y)\\
        &\leq R\eta+\exp\Big(-\frac{R^2}{2u}\Big)\leq R'\eta,
    \end{align*}
    in which $R:=\max\{2\kappa,\sqrt{8u}\}\log(1/\eta)$ and $R'>0$ is a universal constant. From the definition of bracketing entropy, $H_B(R'\eta,\mathcal{G}_k(\Theta),\|\cdot\|_1)$ is the logarithm of the smallest number of brackets of size $R'\eta$ necessary to cover $\mathcal{G}_k(\Theta)$, which leads to
    \begin{align}
        \label{eq:bracketing_covering_bound}
        H_B(R'\eta,\mathcal{G}_k(\Theta),\|\cdot\|_1)&\leq \log N=\log{N}(\eta,\mathcal{G}_k(\Theta),\|\cdot\|_1).
    \end{align}
    As we demonstrate at the end of this proof, the covering number is bounded as $\log{N}(\eta,\mathcal{G}_k(\Theta),\|\cdot\|_{1})\lesssim \log(1/\eta)$. This bound together with the result in equation~\eqref{eq:bracketing_covering_bound} implies that
    \begin{align*}
        H_B(R'\eta,\mathcal{G}_k(\Theta),\|\cdot\|_1)\lesssim \log(1/\eta).
    \end{align*}
    By choosing $\eta=\varepsilon/R'$, we obtain that $H_B(\varepsilon,\mathcal{G}_k(\Theta),\|\cdot\|_1)\lesssim\log(1/\varepsilon)$. Moreover, since the Hellinger distance is upper bounded by the $\ell_1$-norm, we reach the desired conclusion that
    \begin{align*}
        H_B(\varepsilon,\mathcal{G}_k(\Theta),h)\lesssim\log(1/\varepsilon).
    \end{align*}
    \textbf{Upper bound of the covering number.}
    For completion, we establish the following upper bound for the covering number:
    \begin{align*}
        \log{N}(\eta,\mathcal{G}_k(\Theta),\|\cdot\|_{1})\lesssim \log(1/\eta).
    \end{align*}
    Since $\Theta$ is a compact set, it follows that $\Delta:=\{(\beta_0,\beta_1,\tau)\in\mathbb{R}\times\mathbb{R}^d\times\mathbb{R}_+:(\beta_0,\beta_1,\tau,a,b,\nu)\in\Theta\}$ and $\Omega:=\{(a,b,\nu)\in\mathbb{R}^d\times\mathbb{R}\times\mathbb{R}_+:(\beta_{0},\beta_{1},\tau,a,b,\nu)\in\Theta\}$ are also compact. Therefore, there exist $\eta$-covers $\Delta_{\eta}$ and ${\Omega}_{\eta}$ for $\Delta$ and $\Omega$, respectively. Additionally, we can validate that $|\Delta_{\eta}|\leq \mathcal{O}(\eta^{-(d+2)k})$ and $|\Omega_{\eta}|\leq \mathcal{O}(\eta^{-(d+2)k})$.
    
    Subsequently, for each mixing measure $G=\sum_{i=1}^k\exp\Big(\frac{\beta_{0i}}{\tau}\Big)\delta_{(\beta_{1i},\tau,a_i,b_i,\nu_i)}\in\mathcal{O}_k(\Theta)$, we consider another one denoted by $\widetilde{G}:=\sum_{i=1}^k\exp\Big(\frac{\beta_{0i}}{\tau}\Big)\delta_{({\beta}_{1i},\tau,\overline{a}_i,\overline{b}_i,\overline{\nu}_i)}$, where $(\overline{a}_i,\overline{b}_i,\overline{\nu}_i)\in{\Omega}_{\eta}$ such that $(\overline{a}_i,\overline{b}_i,\overline{\nu}_i)$ are the closest to $(a_i,b_i,\nu_i)$ in that set for all $i\in[k]$. Besides, we also take into account the mixing measure $\overline{G}:=\sum_{i=1}^k\exp\Big(\frac{\overline{\beta}_{0i}}{\overline{\tau}}\Big)\delta_{({\overline{\beta}}_{1i},\overline{\tau},\overline{a}_i,\overline{b}_i,\overline{\nu}_i)}$, where $(\overline{\beta}_{0i},\overline{\beta}_{1i},\overline{\tau})\in\Delta_{\eta}$ are the closest to $(\beta_{0i},\beta_{1i},\tau)$ in that set. It can be verified that the conditional density $g_{\overline{G}}$ belongs to the following set:
   \begin{align*}
        \mathcal{R}:=\left\{g_{G}\in\mathcal{G}_k(\Theta):(\beta_{0i},\beta_{1i},\tau)\in\Delta_{\eta},~(a_i,b_i,\nu_i)\in\Omega_{\eta}, \ \forall i\in[k]\right\}.
    \end{align*}
    It follows from the formulation of $\widetilde{G}$ that
    \begin{align}
        \label{eq:first_term}
        \|g_{G}-g_{\widetilde{G}}\|_{1}&\leq \sum_{i=1}^{k}\int\softmax\Big(\frac{(\beta_{1i})^{\top}X+\beta_{0i}}{\tau}\Big)\cdot\Big|f(Y|(a_i)^{\top}X+b_i,\nu_i)-f(Y|(\overline{a}_i)^{\top}X+\overline{b}_i,\overline{\nu}_i)\Big|\dint (X,Y)\nonumber\\
        &\leq \sum_{i=1}^{k}\int\Big|f(Y|(a_i)^{\top}X+b_i,\nu_i)-f(Y|(\overline{a}_i)^{\top}X+\overline{b}_i,\overline{\nu}_i)\Big|\dint (X,Y)\nonumber\\
        &\lesssim\sum_{i=1}^{k}(\|a_{i}-\overline{a}_i\|+|b_i-\overline{b}_i|+|\nu_i-\overline{\nu}_i|)\nonumber\\
        &\lesssim\eta,
    \end{align}
    Since $\softmax$ is a Lipschitz function with Lipschitz constant $L\geq 0$, we get
    \begin{align*}
        \|g_{\widetilde{G}}-g_{\overline{G}}\|_{1}&\leq \sum_{i=1}^k\int\Big|\softmax\Big(\frac{(\beta_{1i})^{\top}X+\beta_{0i}}{\tau}\Big)-\softmax\Big(\frac{(\overline{\beta}_{1i})^{\top}X+\overline{\beta}_{0i}}{\overline{\tau}}\Big)\Big|\cdot f(Y|(\overline{a}_i)^{\top}X+\overline{b}_i,\overline{\nu}_i)\dint (X,Y)\nonumber\\
        &\lesssim L\cdot\sum_{i=1}^{k}\int\Big(\Big\|\frac{\beta_{1i}}{\tau}-\frac{\overline{\beta}_{1i}}{\overline{\tau}}\Big\|\cdot\|X\|+\Big|\frac{\beta_{0i}}{\tau}-\frac{\overline{\beta}_{0i}}{\overline{\tau}}\Big|\Big)\dint (X,Y)
    \end{align*}
    where the second inequality follows from the fact that the Gaussian density $f(Y|(\overline{a}_i)^{\top}X+\overline{b}_i,\overline{\nu}_i)$ is bounded. Note that
    \begin{align*}
        \Big\|\frac{\beta_{1i}}{\tau}-\frac{\overline{\beta}_{1i}}{\overline{\tau}}\Big\|=\Big\|\beta_{1i}\Big(\frac{1}{\tau}-\frac{1}{\overline{\tau}}\Big)+\frac{\beta_{1i}-\overline{\beta}_{1i}}{\overline{\tau}}\Big\|\leq\frac{\|\beta_{1i}\|}{\tau\overline{\tau}}\cdot|\tau-\overline{\tau}|+\frac{\|\beta_{1i}-\overline{\beta}_{1i}\|}{\overline{\tau}}\lesssim\eta.
    \end{align*}
    Similarly, we also get that $\Big|\frac{\beta_{0i}}{\tau}-\frac{\overline{\beta}_{0i}}{\overline{\tau}}\Big|\lesssim\eta$. Moreover, since $\mathcal{X}$ is a bounded set, there exists a constant $B>0$ such that $\|X\|\leq B$ for any $X\in\mathcal{X}$. As a result,
    \begin{align}
         \label{eq:second_term}
         \|g_{\widetilde{G}}-g_{\overline{G}}\|_{1}&\lesssim L\cdot\sum_{i=1}^{k}\int (\eta\cdot B+\eta)\dint(X,Y)\leq Lk\eta(B+1).
    \end{align}
    Putting the bounds in equations~\eqref{eq:first_term} and~\eqref{eq:second_term} together with the triangle inequality, we receive that 
    \begin{align*}
        \|g_{G}-g_{\overline{G}}\|_{1}\leq \|g_{G}-g_{\widetilde{G}}\|_{1}+\|g_{\widetilde{G}}-g_{\overline{G}}\|_{1}\lesssim\eta,
    \end{align*}
     which means that $\mathcal{R}$ is an $\eta$-cover (not necessarily smallest) of the metric space $(\mathcal{G}_k(\Theta),\|\cdot\|_1)$. By definition of the covering number, we know that
    \begin{align*}
        N(\eta,\mathcal{G}_k(\Theta),\|\cdot\|_{1})\leq |\mathcal{R}| = |\Delta_{\eta}|\times|\Omega_{\eta}|\leq \mathcal{O}(\eta^{-(d+2)k})\cdot \mathcal{O}(\eta^{(-d+2)k})\leq\mathcal{O}(\eta^{-(2d+4)k}),
    \end{align*}
    which implies that
    \begin{align*}
        \log  N(\eta,\mathcal{G}_k(\Theta),\|\cdot\|_{1})\lesssim \log(1/\eta).
    \end{align*}
    Hence, the proof is completed.

\subsection{Proof of Theorem~\ref{theorem:modified_density_rate}}
\label{appendix:modified_density_rate}
Based on the proof of Theorem~\ref{theorem:density_rate} in Appendix~\ref{appendix:density_rate}, it suffices to establish the following upper bound for the covering number of the metric space $(\mathcal{P}_k(\Theta),\|\cdot\|_1)$, where $\mathcal{P}_k(\Theta):=\{p_{G}(Y|X):G\in\mathcal{O}_k(\Theta)\}$, while other results can be demonstrated in a similar fashion:
\begin{align*}
     \log{N}(\eta,\mathcal{P}_k(\Theta),\|\cdot\|_{1})\lesssim \log(1/\eta).
\end{align*}
Recall that $\Theta$ is a compact set, it follows that $\Delta:=\{(\beta_0,\beta_1,\tau)\in\mathbb{R}\times\mathbb{R}^d\times\mathbb{R}_+:(\beta_0,\beta_1,\tau,a,b,\nu)\in\Theta\}$ and $\Omega:=\{(a,b,\nu)\in\mathbb{R}^d\times\mathbb{R}\times\mathbb{R}_+:(\beta_{0},\beta_{1},\tau,a,b,\nu)\in\Theta\}$ are also compact. Therefore, there exist $\eta$-covers $\Delta_{\eta}$ and ${\Omega}_{\eta}$ for $\Delta$ and $\Omega$, respectively, with a note that $|\Delta_{\eta}|\leq \mathcal{O}(\eta^{-(d+2)k})$ and $|\Omega_{\eta}|\leq \mathcal{O}(\eta^{-(d+2)k})$.
    
    Next, for each mixing measure $G=\sum_{i=1}^k\exp\Big(\frac{\beta_{0i}}{\tau}\Big)\delta_{(\beta_{1i},\tau,a_i,b_i,\nu_i)}\in\mathcal{O}_k(\Theta)$, we consider another one denoted by $\widetilde{G}:=\sum_{i=1}^k\exp\Big(\frac{\beta_{0i}}{\tau}\Big)\delta_{({\beta}_{1i},\tau,\overline{a}_i,\overline{b}_i,\overline{\nu}_i)}$, where $(\overline{a}_i,\overline{b}_i,\overline{\nu}_i)\in{\Omega}_{\eta}$ such that $(\overline{a}_i,\overline{b}_i,\overline{\nu}_i)$ are the closest to $(a_i,b_i,\nu_i)$ in that set for all $i\in[k]$. Additionally, we also take into account the mixing measure $\overline{G}:=\sum_{i=1}^k\exp\Big(\frac{\overline{\beta}_{0i}}{\overline{\tau}}\Big)\delta_{({\overline{\beta}}_{1i},\overline{\tau},\overline{a}_i,\overline{b}_i,\overline{\nu}_i)}$, where $(\overline{\beta}_{0i},\overline{\beta}_{1i},\overline{\tau})\in\Delta_{\eta}$ are the closest to $(\beta_{0i},\beta_{1i},\tau)$ in that set. It can be verified that the conditional density $p_{\overline{G}}$ belongs to the following set:
   \begin{align*}
        \mathcal{R}:=\left\{p_{G}\in\mathcal{G}_k(\Theta):(\beta_{0i},\beta_{1i},\tau)\in\Delta_{\eta},~(a_i,b_i,\nu_i)\in\Omega_{\eta}, \ \forall i\in[k]\right\}.
    \end{align*}
    From the formulation of $\widetilde{G}$, we have that
    \begin{align}
        \label{eq:first_term_over}
        \|p_{G}-p_{\widetilde{G}}\|_{1}&\leq \sum_{i=1}^{k}\int\softmax\Big(\frac{\sigma((\beta_{1i})^{\top}X)+\beta_{0i}}{\tau}\Big)\cdot\Big|f(Y|(a_i)^{\top}X+b_i,\nu_i)-f(Y|(\overline{a}_i)^{\top}X+\overline{b}_i,\overline{\nu}_i)\Big|\dint (X,Y)\nonumber\\
        &\leq \sum_{i=1}^{k}\int\Big|f(Y|(a_i)^{\top}X+b_i,\nu_i)-f(Y|(\overline{a}_i)^{\top}X+\overline{b}_i,\overline{\nu}_i)\Big|\dint (X,Y)\nonumber\\
        &\lesssim\sum_{i=1}^{k}(\|a_{i}-\overline{a}_i\|+|b_i-\overline{b}_i|+|\nu_i-\overline{\nu}_i|)\nonumber\\
        &\lesssim\eta,
    \end{align}
    Since $\softmax$ is a Lipschitz function with Lipschitz constant $L_1\geq 0$, we get
    \begin{align*}
        \|p_{\widetilde{G}}-p_{\overline{G}}\|_{1}&\leq \sum_{i=1}^k\int\Big|\softmax\Big(\frac{\sigma((\beta_{1i})^{\top}X)+\beta_{0i}}{\tau}\Big)-\softmax\Big(\frac{\sigma((\overline{\beta}_{1i})^{\top}X)+\overline{\beta}_{0i}}{\overline{\tau}}\Big)\Big|\cdot f(Y|(\overline{a}_i)^{\top}X+\overline{b}_i,\overline{\nu}_i)\dint (X,Y)\nonumber\\
        &\lesssim L_1\cdot\sum_{i=1}^{k}\int\Big(\Big|\frac{\sigma((\beta_{1i})^{\top}X)}{\tau}-\frac{\sigma((\overline{\beta}_{1i})^{\top}X)}{\overline{\tau}}\Big|+\Big|\frac{\beta_{0i}}{\tau}-\frac{\overline{\beta}_{0i}}{\overline{\tau}}\Big|\Big)\dint (X,Y)
    \end{align*}
    where the second inequality follows from the fact that the Gaussian density $f(Y|(\overline{a}_i)^{\top}X+\overline{b}_i,\overline{\nu}_i)$ is bounded. Note that
    \begin{align*}
        &\Big|\frac{\sigma((\beta_{1i})^{\top}X)}{\tau}-\frac{\sigma((\overline{\beta}_{1i})^{\top}X)}{\overline{\tau}}\Big|\\
        &=\Big|\sigma((\beta_{1i})^{\top}X)\Big(\frac{1}{\tau}-\frac{1}{\overline{\tau}}\Big)+\frac{\sigma((\beta_{1i})^{\top}X)-\sigma((\overline{\beta}_{1i})^{\top}X)}{\overline{\tau}}\Big|\\
        &\leq\frac{|\sigma((\beta_{1i})^{\top}X)|}{\tau\overline{\tau}}\cdot|\tau-\overline{\tau}|+\frac{|\sigma((\beta_{1i})^{\top}X)-\sigma((\overline{\beta}_{1i})^{\top}X)|}{\overline{\tau}}.
    \end{align*}
    Since the function $\sigma$ is differentiable, it is also Lipschitz with some Lipschitz constant $L_2>0$ and $|\sigma((\beta_{1i})^{\top}X)|$ is bounded. Furthermore, as $\mathcal{X}$ is a bounded set, it follows that $\|X\|$ is also bounded. Thus, we get
    \begin{align*}
        &\Big|\frac{\sigma((\beta_{1i})^{\top}X)}{\tau}-\frac{\sigma((\overline{\beta}_{1i})^{\top}X)}{\overline{\tau}}\Big|\\
        &\leq \frac{|\sigma((\beta_{1i})^{\top}X)|}{\tau\overline{\tau}}\cdot|\tau-\overline{\tau}|+L_2\cdot\frac{\|\beta_{1i}-\overline{\beta}_{1i}\|\cdot\|X\|}{\overline{\tau}}\\
        &\leq \frac{|\sigma((\beta_{1i})^{\top}X)|}{\tau\overline{\tau}}\cdot\eta+L_2\cdot\frac{\|X\|}{\overline{\tau}}\cdot\eta\\
        &\lesssim\eta.
    \end{align*}
    Analogously, we also have that $\Big|\frac{\beta_{0i}}{\tau}-\frac{\overline{\beta}_{0i}}{\overline{\tau}}\Big|\lesssim\eta$. Consequently,
    \begin{align}
         \label{eq:second_term_over}
         \|p_{\widetilde{G}}-p_{\overline{G}}\|_{1}&\lesssim L_1\cdot\sum_{i=1}^{k}\int (\eta+\eta)\dint(X,Y)\leq 2L_1k\eta.
    \end{align}
    Putting the bounds in equations~\eqref{eq:first_term_over} and~\eqref{eq:second_term_over} together with the triangle inequality, we receive that 
    \begin{align*}
        \|p_{G}-p_{\overline{G}}\|_{1}\leq \|p_{G}-p_{\widetilde{G}}\|_{1}+\|p_{\widetilde{G}}-p_{\overline{G}}\|_{1}\lesssim\eta,
    \end{align*}
     which means that $\mathcal{R}$ is an $\eta$-cover (not necessarily smallest) of the metric space $(\mathcal{P}_k(\Theta),\|\cdot\|_1)$. By definition of the covering number, we know that
    \begin{align*}
        N(\eta,\mathcal{P}_k(\Theta),\|\cdot\|_{1})\leq |\mathcal{R}| = |\Delta_{\eta}|\times|\Omega_{\eta}|\leq \mathcal{O}(\eta^{-(d+2)k})\cdot \mathcal{O}(\eta^{(-d+2)k})\leq\mathcal{O}(\eta^{-(2d+4)k}),
    \end{align*}
    which implies that
    \begin{align*}
        \log  N(\eta,\mathcal{P}_k(\Theta),\|\cdot\|_{1})\lesssim \log(1/\eta).
    \end{align*}
    Hence, the proof is completed.

\section{Proofs for Parameter Estimation Rates}
\label{appendix:parameter_estimations}
\subsection{Proof of Theorem~\ref{theorem:log_rate_exact}}
\label{appendix:log_rate_exact}
Before going to the main proof of Theorem~\ref{theorem:log_rate_exact} in Appendix~\ref{appendix:main_proof_exact}, let us introduce a key lemma for that proof as follows:
\begin{lemma}
    \label{lemma:minimax_exact}
    For any $r\geq 1$, if the following holds :
    \begin{align*}
        \lim_{\varepsilon\to0}\inf_{G\in\mathcal{E}_{k_*}(\Theta):\mathcal{D}_{1,r}(G,G_*)\leq\varepsilon}\frac{\bbE_X[V(g_{G}(\cdot|X),g_{G_*}(\cdot|X))]}{\mathcal{D}_{1,r}(G,G_*)}=0,
    \end{align*}
    then we achieve that 
    \begin{align*}
        \inf_{\overline{G}_n\in\mathcal{E}_{k_*}(\Theta)}\sup_{G\in\mathcal{E}_{k_*}(\Theta)}\bbE_{g_{G}}[\mathcal{D}_{1,r}(\overline{G}_n,G)]\gtrsim n^{-1/2}.
    \end{align*}
\end{lemma}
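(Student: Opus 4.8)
The plan is to establish Lemma~\ref{lemma:minimax_exact} by a Le~Cam two-point argument that takes the stated degeneracy of the total-variation-to-loss ratio as its only input; it does not re-derive that degeneracy, which is proved separately (the construction behind it is the one used in Appendix~\ref{appendix:log_rate_exact}). First I would unpack the hypothesis: since
$\inf_{G\in\mathcal{E}_{k_*}(\Theta):\,\mathcal{D}_{1,r}(G,G_*)\le\varepsilon}\bbE_X[V(g_{G}(\cdot|X),g_{G_*}(\cdot|X))]/\mathcal{D}_{1,r}(G,G_*)\to 0$
as $\varepsilon\to 0$, there is a family $\{G_{\varepsilon}\}\subset\mathcal{E}_{k_*}(\Theta)$, indexed by a shrinking scale $\varepsilon$, with $d_{\varepsilon}:=\mathcal{D}_{1,r}(G_{\varepsilon},G_*)\downarrow 0$ and $v_{\varepsilon}:=\bbE_X[V(g_{G_{\varepsilon}}(\cdot|X),g_{G_*}(\cdot|X))]$ obeying $v_{\varepsilon}/d_{\varepsilon}\to 0$. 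The concrete deformation witnessing this --- essentially a joint rescaling of $(\beta_1,\tau)$ along the characteristic of the PDE~\eqref{eq:PDE_1} --- has the sharper property that the Taylor expansion of $g_{G_{\varepsilon}}-g_{G_*}$ in the perturbation has vanishing first-order part while $\mathcal{D}_{1,r}(G_{\varepsilon},G_*)$ does not, so that the density discrepancy is quadratically (or exactly) negligible relative to the loss, i.e. $v_{\varepsilon}=\mathcal{O}(d_{\varepsilon}^{2})$; this quantitative strengthening is what I will actually use, and I would also let $\varepsilon$ vary continuously so that $v_{\varepsilon}$ (equivalently $d_{\varepsilon}$) sweeps through all small positive values.

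Given a sample size $n$, I would then choose the scale $\varepsilon=\varepsilon_n$ with $v_{\varepsilon_n}\asymp 1/n$ (or, in the exact-invariance case where $v_{\varepsilon}=0$, with $d_{\varepsilon_n}\asymp n^{-1/2}$) and run Le~Cam's inequality on the two hypotheses $\{G_*,G_{\varepsilon_n}\}$ from $n$ i.i.d.\ observations $(X_i,Y_i)$. Writing $P_G^{\otimes n}$ for the joint law of the sample, whose $X$-marginal is common to both hypotheses and hence cancels, one has for every estimator $\overline{G}_n$
\[
\sup_{G\in\{G_*,G_{\varepsilon_n}\}}\bbE_{g_G}\!\bigl[\mathcal{D}_{1,r}(\overline{G}_n,G)\bigr]\ \ge\ \tfrac14\,d_{\varepsilon_n}\bigl(1-V(P_{G_*}^{\otimes n},P_{G_{\varepsilon_n}}^{\otimes n})\bigr),
\]
and Hellinger tensorization together with the pointwise bound $h^2\le V$ yields
\[
V(P_{G_*}^{\otimes n},P_{G_{\varepsilon_n}}^{\otimes n})^2\ \le\ 2\bigl(1-(1-\bbE_X[h^2(g_{G_{\varepsilon_n}},g_{G_*})])^{n}\bigr)\ \le\ 2n\,v_{\varepsilon_n},
\]
so that with the proportionality constant in $v_{\varepsilon_n}\asymp 1/n$ chosen small enough, $1-V(P_{G_*}^{\otimes n},P_{G_{\varepsilon_n}}^{\otimes n})\ge\tfrac12$. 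Consequently the minimax risk under $\mathcal{D}_{1,r}$ at sample size $n$ is $\gtrsim d_{\varepsilon_n}$, and since $v_{\varepsilon_n}=\mathcal{O}(d_{\varepsilon_n}^{2})$ we obtain $d_{\varepsilon_n}\gtrsim\sqrt{v_{\varepsilon_n}}\asymp n^{-1/2}$, which is the asserted bound.

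I expect the crux to be the inequality $v_{\varepsilon}=\mathcal{O}(d_{\varepsilon}^{2})$: the bare ratio hypothesis supplies only $v_{\varepsilon}=o(d_{\varepsilon})$, and that alone would at best give $n v_{\varepsilon_n}\lesssim 1$ with $d_{\varepsilon_n}\gg 1/n$, hence a lower bound potentially weaker than $n^{-1/2}$. To land exactly at $n^{-1/2}$ one must genuinely exploit the structure of the degenerate direction coming from the PDE~\eqref{eq:PDE_1} --- the first-order cancellation that makes the density perturbation one order higher than the parameter perturbation --- so that the statistical cost of the two-point pair is controlled by $d_{\varepsilon}^{2}$ rather than by $d_{\varepsilon}$. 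Everything else (the explicit Le~Cam constant, the tensorization estimate, and the continuity in $\varepsilon$ needed to realize $v_{\varepsilon_n}\asymp 1/n$ for every $n$, with a monotonicity-in-$n$ argument to fill any remaining gaps) is routine.
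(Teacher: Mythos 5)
Your overall skeleton (a Le Cam two-point bound combined with a triangle-type inequality for $\mathcal{D}_{1,r}$) is the same as the paper's, but your argument does not prove the lemma as stated. The lemma is a self-contained conditional whose only hypothesis is that the ratio $\bbE_X[V(g_{G}(\cdot|X),g_{G_*}(\cdot|X))]/\mathcal{D}_{1,r}(G,G_*)$ can be made arbitrarily small over shrinking $\mathcal{D}_{1,r}$-balls, and the paper's proof uses nothing else: for a fixed $C_1>0$ it extracts a single competitor $G_*'$ with $\mathcal{D}_{1,r}(G_*',G_*)=2\varepsilon$ and $\bbE_X[V(g_{G_*'},g_{G_*})]\leq C_1\varepsilon$, applies Le Cam together with the weak triangle inequality to obtain the bound $C_2\varepsilon\bigl(1-\bbE_X[V(g^n_{G_*},g^n_{G_*'})]\bigr)$, controls the $n$-fold total variation by $\sqrt{1-(1-C_1^2\varepsilon^2)^n}$, and then sets $\varepsilon=n^{-1/2}/C_1$ so that $C_1^2\varepsilon^2=1/n$. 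In that calibration it is the \emph{square} of the per-sample bound $C_1\varepsilon$ that multiplies $n$, so a per-sample total variation of order $n^{-1/2}$ (not $n^{-1}$) already keeps the product measures non-separated, and no relation of the form $v_\varepsilon=\mathcal{O}(d_\varepsilon^2)$ is invoked anywhere. You instead calibrate the per-sample distance to $1/n$ via $h^2\le V$, and therefore must import the strictly stronger, construction-specific property $v_\varepsilon=\mathcal{O}(d_\varepsilon^2)$, which is not part of the lemma's hypothesis; your own write-up concedes that from the bare hypothesis your route would fall short of $n^{-1/2}$. As a proof of this lemma, that is a genuine gap: you prove a different implication with a stronger premise, which would not apply when the lemma is quoted with only its stated assumption (as it is in the paper, where it is also reused verbatim for the over-specified case).

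That said, your concern does touch the one delicate step. The paper's bound $\bbE_X[V(g^n_{G_*},g^n_{G_*'})]\le\sqrt{1-(1-C_1^2\varepsilon^2)^n}$ treats squared total variation as the quantity that tensorizes; spelled out, it is the standard chain from earlier minimax lemmas in this line of work and effectively amounts to a Hellinger-type control $h\lesssim C_1\varepsilon$, whereas the hypothesis literally supplies only $h^2\le V\le C_1\varepsilon$. Your route avoids that inequality, but at the price of the extra input $v_\varepsilon=\mathcal{O}(d_\varepsilon^2)$ taken from the explicit perturbation of Appendix~\ref{appendix:log_rate_exact} --- which is indeed available there, and trivially so, since that perturbation rescales $(\beta_{1},\beta_{0},\tau)$ jointly and leaves the density unchanged (so $v_\varepsilon=0$). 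To make your argument a substitute for the paper's, you should either restate the lemma with the strengthened hypothesis you actually use, or justify the squared-TV/Hellinger tensorization step so that the stated hypothesis alone suffices, which is what the paper's proof intends.
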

Proof of Lemma~\ref{lemma:minimax_exact} is deferred to Appendix~\ref{appendix:minimax_exact}. Now, we are ready to present the main proof of Theorem~\ref{theorem:log_rate_exact}.

\subsubsection{Main Proof}
\label{appendix:main_proof_exact}
Based on the result of Lemma~\ref{lemma:minimax_exact}, it is sufficient to construct a sequence of mixing measures $G_n$ such that $\mathcal{D}_{1,r}(G_n,G_*)\to0$ and 
\begin{align}
    \label{eq:ratio_zero}
    \frac{\bbE_X[V(g_{G_n}(\cdot|X),g_{G_*}(\cdot|X))]}{\mathcal{D}_{1,r}(G_n,G_*)}\to0,
\end{align}
as $n\to\infty$. For that purpose, we choose the following sequence:  $G_n=\sum_{i=1}^{k_*}\exp\Big(\frac{\bzin}{\tau^n}\Big)\delta_{(\boin,\tau^n,\ain,\bin,\vin)}$, where 
\begin{itemize}
    \item $a^n_{i}=a^*_{i}$, $b^n_{i}=b^*_{i}$, $\nu^n_{i}=\nu^*_{i}$ for any $i\in[k_*]$;
    \item $\beta^n_{1i}=\beta^*_{1i}+s_{n,i}$, for any $i\in[k_*]$;
    \item $\tau^n=\tau^*+t_n$;
    \item $\bzin=\Big(1+\dfrac{t_n}{\tau^*}\Big)\bzi$, which implies that $\exp\Big(\frac{\bzin}{\tau^n}\Big)=\exp\Big(\frac{\bzi}{\tau^*}\Big)$, for any $i\in[k_*]$
\end{itemize}
where $s_{n,i}:=(s^{(1)}_{n,i},\ldots,s^{(d)}_{n,i})\in\mathbb{R}^d$ and $t_n\in\mathbb{R}$ will be chosen later such that $s^{(u)}_{n,i}\to0$ and $t_n\to0$ as $n\to\infty$ for any $u\in[d]$ and $i\in[k_*]$. Then, the loss function $\mathcal{D}_{1,r}$ is reduced to
\begin{align}
    \label{eq:D_r_formulation}
    \mathcal{D}_{1,r}(G_n,G_*)=\sum_{i=1}^{k_*}\exp\Big(\frac{\bzi}{\tau^*}\Big)(\|s_{n,i}\|^r+t_n^r).
\end{align}
It is clear that $\mathcal{D}_{1,r}(G_n,G_*)\to0$ as $n\to\infty$. Now, we will show that $\bbE_X[V(g_{G_n}(\cdot|X),g_{G_*}(\cdot|X))]/\mathcal{D}_{1,r}(G_n,G_*)\to0$ as $n\to\infty$. Let us consider the quantity $Q_n:=\Big[\sum_{i=1}^{k_*}\exp\Big(\dfrac{(\boi)^{\top}X+\bzi}{\tau^*}\Big)\Big]\cdot\Big[g_{G_n}(Y|X)-g_{G_*}(Y|X)\Big]$, which can be decomposed as follows:
\begin{align*}
    Q_n&=\sum_{i=1}^{k_*}\exp\Big(\frac{\bzin}{\tau^n}\Big)\Big[\exp\Big(\frac{(\boin)^{\top}X}{\tau^n}\Big)f(Y|(\ain)^{\top}X+\bin,\vin)-\exp\Big(\frac{(\boi)^{\top}X}{\tau^*}\Big)f(Y|(\ai)^{\top}X+\bi,\vi)\Big]\\
    &-\sum_{i=1}^{k_*}\exp\Big(\frac{\bzin}{\tau^n}\Big)\Big[\exp\Big(\frac{(\boin)^{\top}X}{\tau^n}\Big)g_{G_n}(Y|X)-\exp\Big(\frac{(\boi)^{\top}X}{\tau^*}\Big)g_{G_n}(Y|X)\Big]\\
    &+\sum_{i=1}^{k_*}\Big[\exp\Big(\frac{\bzin}{\tau^n}\Big)-\exp\Big(\frac{\bzi}{\tau^*}\Big)\Big]\Big[\exp\Big(\frac{(\boi)^{\top}X}{\tau^*}\Big)f(Y|(\ai)^{\top}X+\bi,\vi)-\exp\Big(\frac{(\boi)^{\top}X}{\tau^*}\Big)g_{G_n}(Y|X)\Big]\\
    &:=A_n-B_n+E_n.
\end{align*}
Given the formulation of $G_n$, the term $A_n$ can be simplified as
\begin{align*}
    A_n=\sum_{i=1}^{k_*}\exp\Big(\frac{\bzi}{\tau^*}\Big)\Big[\exp\Big(\frac{(\boin)^{\top}X}{\tau^n}\Big)-\exp\Big(\frac{(\boi)^{\top}X}{\tau^*}\Big)\Big]f(Y|(\ai)^{\top}X+\bi,\vi)
\end{align*}
By means of first-order Taylor expansions, we can rewrite $A_n$ as
\begin{align*}
    A_n=\sum_{i=1}^{k_*}\sum_{u=1}^{d}\exp\Big(\frac{\bzi}{\tau^*}\Big)\Big[\frac{s^{(u)}_{n,i}}{\tau^*}-\frac{t_n(\boi)^{(u)}}{(\tau^*)^2}\Big]\cdot X^{(u)}\exp\Big(\frac{(\boi)^{\top}X}{\tau^*}\Big) f(Y|(\ai)^{\top}X+\bi,\vi) +R_1(X,Y),
\end{align*}
where $R_1(X,Y)$ is a Taylor remainder such that $R_1(X,Y)/\mathcal{D}_{1,r}(G_n,G_*)\to0$ as $n\to\infty$. Then, by choosing 
\begin{align*}
    t_n=\frac{1}{n}; \qquad s^{(u)}_{n,i}=\dfrac{t_n(\boi)^{(u)}}{\tau^*}=\dfrac{(\boi)^{(u)}}{n\tau^*},
\end{align*}
we obtain that $A_n/\mathcal{D}_{1,r}(G_n,G_*)\to0$ as $n\to\infty$.

Next, we consider the term $B_n$:
\begin{align*}
    B_n=\sum_{i=1}^{k_*}\exp\Big(\frac{\bzi}{\tau^*}\Big)\Big[\exp\Big(\frac{(\boin)^{\top}X}{\tau^n}\Big)-\exp\Big(\frac{(\boi)^{\top}X}{\tau^*}\Big)\Big]g_{G_n}(Y|X).
\end{align*}
By arguing similarly, we also get that $B_n/\mathcal{D}_{1,r}(G_n,G_*)\to0$ as $n\to\infty$. Since we have $E_n=0$, it follows that $Q_n/\mathcal{D}_{1,r}(G_n,G_*)\to0$ as $n\to\infty$. Moreover, as the term $\Big[\sum_{i=1}^{k_*}\exp\Big(\dfrac{(\boi)^{\top}X+\bzi}{\tau^*}\Big)\Big]$ is bounded, we can deduce that $|g_{G_n}(\cdot|X)-g_{G_*}(\cdot|X)|/\mathcal{D}_{1,r}(G_n,G_*)\to0$ as $n\to\infty$ for almost surely $X$. As a consequence, we satisfy the condition in equation~\eqref{eq:ratio_zero}. Hence, the proof is completed.

\subsubsection{Proof of Lemma~\ref{lemma:minimax_exact}}
\label{appendix:minimax_exact}

For a sufficiently small $\varepsilon>0$ and a fixed constant $C_1>0$ that we will choose later, it follows from the assumption that we can find a mixing measure $G'_*\in\mathcal{E}_{k_*}(\Theta)$ that satisfies $\mathcal{D}_{1,r}(G'_*,G_*)=2\varepsilon$ and $\bbE_X[V(g_{G'_*}(\cdot|X),g_{G_*}(\cdot|X))\leq C_1\varepsilon$. Additionally, for any sequence $\overline{G}_n\in\mathcal{E}_{k_*}(\Theta)$, we have
    \begin{align*}
        2\max_{G\in\{G'_*,G_*\}}\bbE_{g_{G}}[\mathcal{D}_{1,r}(\overline{G}_n,G)]\geq \bbE_{g_{G_*}}[\mathcal{D}_{1,r}(\overline{G}_n,G_*)]+\bbE_{g_{G'_*}}[\mathcal{D}_{1,r}(\overline{G}_n,G'_*)],
    \end{align*}
    where $\bbE_{g_{G}}$ stands for the expectation taken w.r.t the product measure with density $g^n_{G}$. Furthermore, since the loss $\mathcal{D}_{1,r}$ satisfies the weak triangle inequality, we can find a constant $C_2>0$ such that
    \begin{align*}
        \mathcal{D}_{1,r}(\overline{G}_n,G_*)+\mathcal{D}_{1,r}(\overline{G}_n,G'_*)\geq C_2\mathcal{D}_{1,r}(G_*,G'_*)=2C_2\varepsilon.
    \end{align*}
    Consequently, it follows that
    \begin{align*}
        \max_{G\in\{G_*,G'_*\}}\bbE_{g_{G}}[\mathcal{D}_{1,r}(\overline{G}_n,G)]&\geq\frac{1}{2}\Big( \bbE_{g_{G_*}}[\mathcal{D}_{1,r}(\overline{G}_n,G_*)]+\bbE_{g_{G'_*}}[\mathcal{D}_{1,r}(\overline{G}_n,G'_*)]\Big)\\
        &\geq C_2\varepsilon\cdot\inf_{f_1,f_2}\left(\bbE_{g_{G_*}}[f_1]+\bbE_{g_{G'_*}}[f_2]\right).
    \end{align*}
    Here, $f_1$ and $f_2$ in the above infimum are measurable functions in terms of $X_1,X_2,\ldots,X_n$ that satisfy $f_1+f_2=1$. By the definition of Total Variation distance, the above infimum value is equal to $1-\bbE_X[V(g^n_{G_*}(\cdot|X),g^n_{G'_*}(\cdot|X))]$. Therefore, we obtain that
    \begin{align*}
        \max_{G\in\{G_*,G'_*\}}\bbE_{g_{G}}[\mathcal{D}_{1,r}(\overline{G}_n,G)]&\geq C_2\varepsilon\Big(1-\bbE_X[V(g^n_{G_*}(\cdot|X),g^n_{G'_*}(\cdot|X))]\Big)\\
        &\geq C_2\varepsilon\Big[1-\sqrt{1-(1-C_1^2\varepsilon^2)^n}\Big].
    \end{align*}
    By choosing $\varepsilon=n^{-1/2}/C_1$, we have $C_1^2\varepsilon^2=\frac{1}{n}$, which implies that
    \begin{align*}
         \sup_{G\in\mathcal{E}_{k_*}(\Theta)\setminus\mathcal{O}_{k_*-1}(\Theta)}\bbE_{g_{G}}[\mathcal{D}_{1,r}(\overline{G}_n,G)]\geq\max_{G\in\{G_*,G'_*\}}\bbE_{g_{G}}[\mathcal{D}_{1,r}(\overline{G}_n,G)]&\gtrsim n^{-1/2},
    \end{align*}
    for any mixing measure $\overline{G}_n\in\mathcal{E}_{k_*}(\Theta)$. Hence, we reach the conclusion of Lemma~\ref{lemma:minimax_exact}, that is,
    \begin{align*}
        \inf_{\overline{G}_n\in\mathcal{E}_{k_*}(\Theta)}\sup_{G\in\mathcal{E}_{k_*}(\Theta)\setminus\mathcal{O}_{k_*-1}(\Theta)}\bbE_{g_{G}}[\mathcal{D}_{1,r}(\overline{G}_n,G)]\gtrsim n^{-1/2},
    \end{align*}
for any $r\geq 1$.

\subsection{Proof of Theorem~\ref{theorem:tight_rate_exact}}
\label{appendix:tight_rate_exact}
In this proof, our main goal is to prove the following inequality:
\begin{align}
    \label{eq:tight_exact_universal_inequality}
    \inf_{G\in\mathcal{E}_{k_*}(\Theta)}\bbE_X[V(g_{G}(\cdot|X),g_{G_*}(\cdot|X))]/\mathcal{D}_2(G^{|\Psi},G^{|\Psi}_*)>0.
\end{align}
For that purpose, we divide the above inequality into local and global parts as below.

\textbf{Local part:} In this part, we aim to establish the following inequality:
\begin{align}
    \label{eq:local_tight_exact}
    \lim_{\varepsilon\to0}\inf_{G\in\mathcal{E}_{k_*}(\Theta):\mathcal{D}_2(G^{|\Psi},G^{|\Psi}_*)\leq\varepsilon}\bbE_X[V(g_{G}(\cdot|X),g_{G_*}(\cdot|X))]/\mathcal{D}_2(G^{|\Psi},G^{|\Psi}_*)>0.
\end{align}
Assume by contrary that the above inequality does not hold true, then there exists a sequence of mixing measures $G_n\in\mathcal{E}_{k_*}(\Theta)$ such that $G^{|\Psi}_n=\sum_{i=1}^{k_*}\exp(\beta^n_{0i}/\tau^n)\delta_{(a^n_i,b^n_i,\nu^n_i)}$ which satisfies $\mathcal{D}_{2n}:=\mathcal{D}_2(G^{|\Psi}_n,G^{|\Psi}_*)\to0$ and
\begin{align}
    \label{eq:ratio_tight_exact}
    \bbE_X[V(g_{G_n}(\cdot|X),g_{G_*}(\cdot|X))]/\mathcal{D}_{2n}\to0,
\end{align}
as $n\to\infty$. Recall that under the exact-specified settings, each Voronoi cell $\mathcal{A}^n_i=\mathcal{A}_i(G_n)$ has only one element. Therefore, we may assume without loss of generality (WLOG) that $\mathcal{A}^n_i=\{i\}$ for any $i\in[k_*]$. Thus, the loss function $\mathcal{D}_{2n}$ is reduced to
\begin{align}
    \label{eq:new_loss_tight_exact}
    \mathcal{D}_{2n}:=
    \sum_{i=1}^{k_*}\exp\Big(\frac{\bzin}{\tau^n}\Big)\Big[\|\dain\|+|\dbin|+|\dvin|\Big]+\sum_{i=1}^{k_*}\Big|\exp\Big(\frac{\bzin}{\tau^n}\Big)-\exp\Big(\frac{\bzj}{\tau^*}\Big)\Big|
\end{align}
Since $\mathcal{D}_{2n}\to0$, we get that $(\ain,\bin,\vin)\to(\ai,\bi,\vi)$ and $\exp(\bzin/\tau^n)\to\exp(\bzi/\tau^*)$ as $n\to\infty$. Now, we separate the proof of local part into three steps as follows:

\textbf{Step 1.} In this step, we decompose the quantity $Q_n:=\Big[\sum_{i=1}^{k_*}\exp\Big(\dfrac{(\boi)^{\top}X+\bzi}{\tau^*}\Big)\Big]\cdot[g_{G_n}(Y|X)-g_{G_*}(Y|X)]$ into a linear combination of linearly independent terms. For the ease of presentation, let us denote $F(Y;X,\beta_1,\tau,a,b,\nu):=\exp\Big(\frac{\beta_{1}^{\top}X}{\tau}\Big)f(Y|a^{\top}X+b,\nu)$ and $H(Y;X,\beta_{1},\tau):=\exp\Big(\frac{\sigma(\beta_{1}^{\top}X)}{\tau}\Big)g_{G_n}(Y|X)$. Then, it can be checked that
\begin{align*}
    Q_n&=\sum_{i=1}^{k_*}\exp\Big(\frac{\bzin}{\tau^n}\Big)\Big[F(Y;X,\boin,\tau^n,\ain,\bin,\vin)-F(Y;X,\boi,\tau^*,\ai,\bi,\vi)\Big]\\
    &-\sum_{i=1}^{k_*}\exp\Big(\frac{\bzin}{\tau^n}\Big)\Big[H(Y;X,\boin,\tau^n)-H(Y;X,\boi,\tau^*)\Big]\\
    &+\sum_{i=1}^{k_*}\Big[\exp\Big(\frac{\bzin}{\tau^n}\Big)-\exp\Big(\frac{\bzi}{\tau^*}\Big)\Big]\cdot\exp\Big(\frac{(\boi)^{\top}X}{\tau^*}\Big)f(Y|(\ai)^{\top}X+\bi,\vi)\\
    &-\sum_{i=1}^{k_*}\Big[\exp\Big(\frac{\bzin}{\tau^n}\Big)-\exp\Big(\frac{\bzi}{\tau^*}\Big)\Big]\cdot\exp\Big(\frac{(\boi)^{\top}X}{\tau^*}\Big)g_{G_n}(Y|X)\\
    :&=A_n-B_n+E_{n,1}-E_{n,2}.
\end{align*}
Next, by means of the first-order Taylor expansion, we get that
\begin{align*}
    A_n&=\sum_{i=1}^{k_*}\exp\Big(\frac{\bzin}{\tau^n}\Big)\sum_{|\alpha|=1}(\dboin)^{\alpha_1}(\dtn)^{\alpha_2}(\dain)^{\alpha_3}(\dbin)^{\alpha_4}(\dvin)^{\alpha_5}\\
    &\hspace{5cm}\times \frac{\partial F}{\partial\beta_{1}^{\alpha_1}~\partial\tau^{\alpha_2}~\partial a^{\alpha_3}~\partial b^{\alpha_4}~\partial \nu^{\alpha_5}}(Y;X,\boi,\tau^*,\ai,\bi,\vi) + R_1(X,Y),
\end{align*}
where $R_1(X,Y)$ is a Taylor remainder such that $R_1(X,Y)/\mathcal{D}_{2n}\to0$ as $n\to\infty$. Let us denote
\begin{align*}
    F^{(\eta)}(Y;X,\omega^*_i):=\exp\Big(\frac{(\boi)^{\top}X}{\tau^*}\Big)\cdot\frac{\partial^{\eta}f}{\partial h_1^{\eta}}(Y|(\ai)^{\top}X+\bi,\vi),
\end{align*}
for any $\eta\in\mathbb{N}$, where $\omega^*_i:=(\boi,\tau^*,\ai,\bi,\vi)$. Then, the first derivatives of function $F$ w.r.t its parameters are given by
\begin{align}
    \frac{\partial F}{\partial\beta_{1}^{(u)}}(Y;X,\omega^*_i)&=\frac{X^{(u)}}{\tau^*}F(Y;X,\omega^*_i),\qquad 
    \frac{\partial F}{\partial\tau}(Y;X,\omega^*_i)=-\frac{(\boi)^{\top}X}{(\tau^*)^2}F(Y;X,\omega^*_i),\nonumber\\
    \frac{\partial F}{\partial a^{(u)}}(Y;X,\omega^*_i)&=X^{(u)}F^{(1)}(Y;X,\omega^*_i),\quad 
    \frac{\partial F}{\partial b}(Y;X,\omega^*_i)=F^{(1)}(Y;X,\omega^*_i),\quad 
    \label{eq:F_first_derivatives_tight}
    \frac{\partial F}{\partial \nu}(Y;X,\omega^*_i)=\frac{1}{2}F^{(2)}(Y;X,\omega^*_i).
\end{align}
From this result, we can rewrite $A_n$ as
\begin{align*}
    A_n&=\sum_{i=1}^{k_*}\exp\Big(\frac{\bzin}{\tau^n}\Big)\Big[\sum_{u=1}^{d}\Big(\frac{(\dboin)^{(u)}}{\tau^*}-\frac{(\dtn)(\boi)^{(u)}}{(\tau^*)^2}\Big)X^{(u)}F(Y;X,\omega^*_i) +\sum_{u=1}^{d}(\dain)^{(u)}X^{(u)}F^{(1)}(Y;X,\omega^*_i)\\
    &\hspace{3cm}+(\dbin)F^{(1)}(Y;X,\omega^*_i)+\frac{1}{2}(\dvin)F^{(2)}(Y;X,\omega^*_i)\Big]+R_1(X,Y),
\end{align*}

Analogously, we also apply the first-order Taylor expansion to the term $B_n$ and get that
\begin{align*}
    B_n&=\sum_{i=1}^{k_*}\exp\Big(\frac{\bzin}{\tau^n}\Big)\sum_{|\gamma|=1}(\dboin)^{\gamma_1}(\dtn)^{\gamma_2}\cdot\frac{\partial H}{\partial \beta_1^{\gamma_1}~\partial\tau^{\gamma_2}}(Y;X,\boi,\tau^*) + R_2(X,Y),\\
    &=\sum_{i=1}^{k_*}\exp\Big(\frac{\bzin}{\tau^n}\Big)\sum_{u=1}^{d}\Big(\frac{\dboin}{\tau^*}-\frac{(\boi)^{(u)}}{(\tau^*)^2}\Big)X^{(u)}H(Y;X,\boi,\tau^*)+ R_2(X,Y)
\end{align*}
where $R_2(X,Y)$ is a Taylor remainder such that $R_2(X,Y)/\mathcal{D}_{2n}\to0$ as $n\to\infty$. 

As a result, we can represent $Q_n$ as
\begin{align}
    \label{eq:Qn_represent}
    Q_n=\sum_{i=1}^{k_*}\sum_{\eta=0}^{2}C_{n,\eta,i}(X)F^{(\eta)}(Y;X,\omega^*_i)-\sum_{i=1}^{k_*}C_{n,0,i}(X)H(Y;X,\boi,\tau^*)+R_1(X,Y)-R_2(X,Y),
\end{align}
where we define
\begin{align*}
    C_{n,0,i}(X)&:=\Big[\exp\Big(\frac{\bzin}{\tau^n}\Big)-\exp\Big(\frac{\bzi}{\tau^*}\Big)\Big]+\exp\Big(\frac{\bzin}{\tau^n}\Big)\sum_{u=1}^{d}\Big[\frac{(\dboin)^{(u)}}{\tau^*}-\frac{(\dtn)(\boi)^{(u)}}{(\tau^*)^2}\Big]\cdot X^{(u)},\nonumber\\
    C_{n,1,i}(X)&:=\exp\Big(\frac{\bzin}{\tau^n}\Big)\Big[\sum_{u=1}^{d}(\dain)^{(u)} X^{(u)}+(\dbin)\Big],\nonumber\\
    C_{n,2,i}(X)&:=\exp\Big(\frac{\bzin}{\tau^n}\Big)\cdot\frac{(\dvin)}{2}.
\end{align*}
From the above results, we can treat $[Q_n-R_1(X,Y)+R_2(X,Y)]/\mathcal{D}_{2n}$ as a combination of elements from the following set:
\begin{align*}
    \Big\{F(Y;X,\omega^*_i), \ X^{(u)}F(Y;X,\omega^*_i), \ F^{(1)}(Y;X,\omega^*_i), \ X^{(u)}F^{(1)}(Y;X,\omega^*_i), \ F^{(2)}(Y;X,\omega^*_i): u\in[d], i\in[k_*]\Big\}\\
    \cup~\Big\{H(Y;X,\boi,\tau^*), \ X^{(u)}H(Y;X,\boi,\tau^*):u\in[d], i\in[k_*]\Big\}.
\end{align*}
\textbf{Step 2.} In this step, we demonstrate that at least one among the coefficients in the representation of $[Q_n-R_1(X,Y)+R_2(X,Y)]/\mathcal{D}_{2n}$ does not converge to zero when $n\to\infty$. Assume by contrary that all of them go to 0 as $n\to\infty$. By taking the summation of the absolute values of the coefficients of $F(Y;X,\omega^*_i)$, we get that
\begin{align}
    \label{eq:limit_bias_tight_exact}
    \frac{1}{\mathcal{D}_{2n}}\cdot\sum_{i=1}^{k_*}\Big|\exp\Big(\frac{\bzin}{\tau^n}\Big)-\exp\Big(\frac{\bzi}{\tau^*}\Big)\Big|\to0.
\end{align}
Next, by taking the summation of the absolute values of the coefficients associated with 
\begin{itemize}
    \item $X^{(u)}F^{(1)}(Y;X,\omega^*_i)$: we have that $\frac{1}{\mathcal{D}_{2n}}\cdot\sum_{i=1}^{k_*}\exp\Big(\frac{\bzin}{\tau^n}\Big)\|\dain\|_1\to0$;
    \item $F^{(1)}(Y;X,\omega^*_i)$: we have that $\frac{1}{\mathcal{D}_{2n}}\cdot\sum_{i=1}^{k_*}\exp\Big(\frac{\bzin}{\tau^n}\Big)|\dbin|\to0$;
    \item $F^{(2)}(Y;X,\omega^*_i)$: we have that $\frac{1}{\mathcal{D}_{2n}}\cdot\sum_{i=1}^{k_*}\exp\Big(\frac{\bzin}{\tau^n}\Big)|\dvin|\to0$;
\end{itemize}
Due to the topological equivalence between $\ell_1$-norm and $\ell_2$-norm, it follows that
\begin{align}
    \label{eq:limit_parameter_tight_exact}
        1=\frac{1}{\mathcal{D}_{2n}}\cdot\sum_{i=1}^{k_*}\exp\Big(\frac{\bzin}{\tau^n}\Big)\Big(\|\dain\|+|\dbin|+|\dvin|\Big)\to0,
\end{align}
which is a contradiction. Consequently, not all the coefficients in the representation of $[Q_n-R_1(X,Y)+R_2(X,Y)]/\mathcal{D}_{2n}$ converge to zero when $n\to\infty$.

\textbf{Step 3.} In this step, we leverage the Fatou's lemma to show a result contradicting to that in Step 2. In particular, by the Fatou's lemma, we have
\begin{align*}
    \lim_{n\to\infty}\frac{ \bbE_X[V(g_{G_n}(\cdot|X),g_{G_*}(\cdot|X))]}{\mathcal{D}_{2n}}\geq\int\liminf_{n\to\infty}\frac{|g_{G_n}(Y|X)-g_{G_*}(Y|X)|}{2\mathcal{D}_{2n}}\dint(X,Y).
\end{align*}
Moreover, recall from the hypothesis in equation~\eqref{eq:ratio_tight_exact} that $\bbE_X[V(g_{G_n}(\cdot|X),g_{G_*}(\cdot|X))]/\mathcal{D}_{2n}\to0$ as $n\to\infty$. Therefore, we deduce that $$\frac{|g_{G_n}(Y|X)-g_{G_*}(Y|X)|}{\mathcal{D}_{2n}}\to0,$$ 
for almost surely $(X,Y)$. Since the term $\Big[\sum_{i=1}^{k_*}\exp\Big(\dfrac{(\boi)^{\top}X+\bzi}{\tau^*}\Big)\Big]$ is bounded, we also have that $\dfrac{Q_n}{\mathcal{D}_{2n}}\to0$ as $n\to\infty$. Following from the results in equation~\eqref{eq:Qn_represent}, $Q_n$ can be represented as
\begin{align*}
    Q_n&=\sum_{i=1}^{k_*}\sum_{\eta=0}^{2}C_{n,\eta,i}(X)F^{(\eta)}(Y;X,\omega^*_i)-\sum_{i=1}^{k_*}C_{n,0,i}(X)H(Y;X,\boi,\tau^*)+R_1(X,Y)-R_2(X,Y).
\end{align*}
Since $R_1(X,Y)/\mathcal{D}_{2n}\to0$ and $R_2(X,Y)/\mathcal{D}_{2n}\to0$ as $n\to\infty$, we can deduce that $C_{n,\eta,i}(X)/\mathcal{D}_{2n}$ must be bounded for any $\eta\in\{0,1,2\}$. Indeed, if at least one among them is not bounded, then that ratio will go to infinity, implying that $Q_n/\mathcal{D}_{2n}\not\to0$, which is a contradiction. Thus, for each $\eta\in\{0,1,2\}$, we can replace $C_{n,\eta,i}(X)$ by one of its subsequences such that the ratio $C_{n,\eta,i}(X)/\mathcal{D}_{2n}$ has a finite limit as $n\to\infty$. Let us denote, 
\begin{align*}
     \frac{1}{\mathcal{D}_{2n}}\cdot\Big[\exp\Big(\frac{\bzin}{\tau^n}\Big)-\exp\Big(\frac{\bzi}{\tau^*}\Big)\Big]&\to\phi_{0,i},\qquad 
    \frac{1}{\mathcal{D}_{2n}}\cdot\exp\Big(\frac{\bzin}{\tau^n}\Big)(\dboin)^{(u)}\to\phi^{(u)}_{1,i},\\
    \frac{1}{\mathcal{D}_{2n}}\cdot\exp\Big(\frac{\bzin}{\tau^n}\Big)(\dtn)&\to\phi_{2,i},\qquad
    \frac{1}{\mathcal{D}_{2n}}\cdot\exp\Big(\frac{\bzin}{\tau^n}\Big)(\dain)^{(u)}\to\phi^{(u)}_{3,i},\\
    \frac{1}{\mathcal{D}_{2n}}\cdot\exp\Big(\frac{\bzin}{\tau^n}\Big)(\dbin)&\to\phi_{4,i},\qquad
    \frac{1}{\mathcal{D}_{2n}}\cdot\exp\Big(\frac{\bzin}{\tau^n}\Big)(\dvin)\to\phi_{5,i}.
\end{align*}
Then, we have
\begin{align*}
    \frac{Q_n}{\mathcal{D}_{2n}}\to\sum_{i=1}^{k_*}\sum_{\eta=0}^{2}C^*_{\eta,i}(X)\cdot F^{(\eta)}(Y;X,\omega^*_i)-\sum_{i=1}^{k_*}C^*_{0,i}(X)\cdot H(Y;X,\boi,\tau^*),
\end{align*}
as $n\to\infty$, for almost surely $(X,Y)$, where we define
\begin{align}
    \label{eq:first_limit_tight_exact}
    C^*_{0,i}(X)&:=\phi_{0,i}+\sum_{u=1}^{d}\Big[\frac{\phi^{(u)}_{1,i}}{\tau^*}-\frac{\phi_{2,i}}{(\tau^*)^2}\cdot(\boi)^{(u)}\Big]\cdot X^{(u)},\\
    \label{eq:second_limit_tight_exact}
    C^*_{1,i}(X)&:=\sum_{u=1}^{d}\phi_{3,i}^{(u)}\cdot X^{(u)}+\phi_{4,i},\\
    \label{eq:third_limit_tight_exact}
    C^*_{2,i}(X)&:=\frac{1}{2}\phi_{5,i},
\end{align}
for any $i\in[k_*]$. In other words, we have
\begin{align*}
    \sum_{i=1}^{k_*}\sum_{\eta=0}^{2}C^*_{\eta,i}(X)\cdot F^{(\eta)}(Y;X,\omega^*_i)-\sum_{i=1}^{k_*}C^*_{0,i}(X)\cdot H(Y;X,\boi,\tau^*)=0,
\end{align*}
for almost surely $(X,Y)$. Since the set $\{F^{(\eta)}(Y;X,\omega^*_i), \ H(Y;X,\boi,\tau^*):\eta\in\{0,1,2\}, \ i\in[k_*]\}$ is linearly independent, we achieve that $C^*_{\eta,i}(X)=0$ for almost surely $X$ for any $\eta\in\{0,1,2\}$. As $C^*_{1,i}(X)=0$, we deduce that $\phi_{3,i}^{(u)}=\phi_{3,i}=0$ for any $u\in[d]$ and $i\in[k_*]$. Next, since $C^*_{2,i}(X)=0$, we have that $\phi_{5,i}=0$ for any $i\in[k_*]$. However, it follows from the results in Step 2 that at least one among $\phi_{3,i}^{(u)}$, $\phi_{4,i}$ and $\phi_{5,i}$ must be different from zero, which is a contradiction. Hence, we achieve the local inequality in equation~\eqref{eq:local_tight_exact}. Therefore, we can find a constant $\varepsilon'>0$ such that
\begin{align*}
    \inf_{G\in\mathcal{E}_{k_*}(\Theta):\mathcal{D}_2(G^{|\Psi},G^{|\Psi}_*)\leq\varepsilon'}\bbE_X[V(g_{G}(\cdot|X),g_{G_*}(\cdot|X))]/\mathcal{D}_2(G^{|\Psi},G^{|\Psi}_*)>0.
\end{align*}
\textbf{Global part.} As a consequence, it suffices to demonstrate the following inequality:
\begin{align}
    \label{global_tight_exact}
    \inf_{G\in\mathcal{E}_{k_*}(\Theta):\mathcal{D}_2(G^{|\Psi},G^{|\Psi}_*)>\varepsilon'}\bbE_X[V(g_{G}(\cdot|X),g_{G_*}(\cdot|X))]/\mathcal{D}_2(G^{|\Psi},G^{|\Psi}_*)>0.
\end{align}
Assume by contrary that the above claim does not hold true, then we can seek a sequence of mixing measures $G'_n\in\mathcal{E}_{k_*}(\Omega)$ such that $\mathcal{D}_2((G'_n)^{|\Psi},G^{|\Psi}_*)>\varepsilon'$ and
\begin{align*}
    \lim_{n\to\infty}\frac{\bbE_X[V(g_{G'_n}(\cdot|X),g_{G_*}(\cdot|X))]}{\mathcal{D}_2((G'_n)^{|\Psi},G^{|\Psi}_*)}=0,
\end{align*}
which directly implies that $\bbE_X[V(g_{G'_n}(\cdot|X),g_{G_*}(\cdot|X))]\to0$ as $n\to\infty$. Recall that $\Theta$ is a compact set, therefore, we can replace the sequence $G'_n$ by one of its subsequences that converges to a mixing measure $G'\in\mathcal{E}_{k_*}(\Omega)$. Since $\mathcal{D}_2((G'_n)^{|\Psi},G^{|\Psi}_*)>\varepsilon'$, this result induces that $\mathcal{D}_2((G')^{|\Psi},G^{|\Psi}_*)>\varepsilon'$. 

Next, by invoking the Fatou's lemma, it follows that
\begin{align*}
    0=\lim_{n\to\infty}\bbE_X[2V(g_{G'_n}(\cdot|X),g_{G_*}(\cdot|X))]\geq \int\liminf_{n\to\infty}\Big|g_{G'_n}(Y|X)-g_{G_*}(Y|X)\Big|~\dint(X,Y).
\end{align*}
Thus, we get that $p_{G'}(Y|X)=g_{G_*}(Y|X)$ for almost surely $(X,Y)$. From Proposition~\ref{prop:standard_identifiability}, we know that the model~\eqref{eq:standard_density} is identifiable, which indicates that $G'\equiv G_*$. As a consequence, we have that $\mathcal{D}_2((G')^{|\Psi},G^{|\Psi}_*)=0$, contradicting the fact that $\mathcal{D}_2((G')^{|\Psi},G^{|\Psi}_*)>\varepsilon'>0$. 

Hence, the proof is completed.

\subsection{Proof of Theorem~\ref{theorem:log_rate_over}}
\label{appendix:log_rate_over}
First of all, we provide a useful lemma that will be utilized for this proof as follows:
\begin{lemma}
    \label{lemma:minimax_over}
    For any $r\geq 1$, if the following holds :
    \begin{align*}
        \lim_{\varepsilon\to0}\inf_{G\in\mathcal{G}_{k}(\Theta):\mathcal{D}_{3,r}(G,G_*)\leq\varepsilon}\frac{\bbE_X[V(g_{G}(\cdot|X),g_{G_*}(\cdot|X))]}{\mathcal{D}_{3,r}(G,G_*)}=0,
    \end{align*}
    then we achieve that 
    \begin{align*}
        \inf_{\overline{G}_n\in\mathcal{G}_{k}(\Theta)}\sup_{G\in\mathcal{G}_{k}(\Theta)\setminus\mathcal{O}_{k_*-1}(\Theta)}\bbE_{g_{G}}[\mathcal{D}_{3,r}(\overline{G}_n,G)]\gtrsim n^{-1/2}.
    \end{align*}
\end{lemma}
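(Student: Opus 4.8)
The plan is to prove Lemma~\ref{lemma:minimax_over} by a Le Cam two-point reduction, following verbatim the structure of the proof of Lemma~\ref{lemma:minimax_exact} in Appendix~\ref{appendix:minimax_exact}, with the over-specified class $\mathcal{G}_{k}(\Theta)$ in place of $\mathcal{E}_{k_*}(\Theta)$ and the loss $\mathcal{D}_{3,r}$ (defined in equation~\eqref{eq:loss_log_over}) in place of $\mathcal{D}_{1,r}$. Fix a small $\varepsilon>0$ and a constant $C_1>0$ to be chosen. The hypothesis that $\inf_{G:\mathcal{D}_{3,r}(G,G_*)\le\varepsilon}\bbE_X[V(g_G(\cdot|X),g_{G_*}(\cdot|X))]/\mathcal{D}_{3,r}(G,G_*)\to 0$ as $\varepsilon\to 0$ lets me pick, for every sufficiently small $\varepsilon$, a mixing measure $G'_*\in\mathcal{G}_{k}(\Theta)\setminus\mathcal{O}_{k_*-1}(\Theta)$ with $\mathcal{D}_{3,r}(G'_*,G_*)=2\varepsilon$ and $\bbE_X[V(g_{G'_*}(\cdot|X),g_{G_*}(\cdot|X))]\le C_1\varepsilon$; one should check that $G'_*$ can always be taken to have exactly $k_*$ genuine atoms, so that it indeed lies in the punctured class $\mathcal{G}_{k}(\Theta)\setminus\mathcal{O}_{k_*-1}(\Theta)$ that appears in the supremum.

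Next I would lower bound the Bayes risk over the pair $\{G_*,G'_*\}$. For any estimator $\overline{G}_n\in\mathcal{G}_{k}(\Theta)$,
\[
2\max_{G\in\{G_*,G'_*\}}\bbE_{g_{G}}[\mathcal{D}_{3,r}(\overline{G}_n,G)]\ge \bbE_{g_{G_*}}[\mathcal{D}_{3,r}(\overline{G}_n,G_*)]+\bbE_{g_{G'_*}}[\mathcal{D}_{3,r}(\overline{G}_n,G'_*)],
\]
and the weak triangle inequality for $\mathcal{D}_{3,r}$ (the same algebraic property used for $\mathcal{D}_{1,r}$, to be verified from equation~\eqref{eq:loss_log_over}) gives $\mathcal{D}_{3,r}(\overline{G}_n,G_*)+\mathcal{D}_{3,r}(\overline{G}_n,G'_*)\ge C_2\,\mathcal{D}_{3,r}(G_*,G'_*)=2C_2\varepsilon$ for some $C_2>0$. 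Identifying $\inf_{f_1+f_2=1}\bigl(\bbE_{g_{G_*}}[f_1]+\bbE_{g_{G'_*}}[f_2]\bigr)$ with $1-V(g^n_{G_*},g^n_{G'_*})$ as in the exact-specified case, I obtain
\[
\max_{G\in\{G_*,G'_*\}}\bbE_{g_{G}}[\mathcal{D}_{3,r}(\overline{G}_n,G)]\ge C_2\varepsilon\bigl(1-\bbE_X[V(g^n_{G_*}(\cdot|X),g^n_{G'_*}(\cdot|X))]\bigr).
\]
Using the standard tensorization bound $V(g^n_{G_*},g^n_{G'_*})\le\sqrt{1-(1-C_1^2\varepsilon^2)^n}$ (via the Hellinger affinity, exactly as in Lemma~\ref{lemma:minimax_exact}) and choosing $\varepsilon=n^{-1/2}/C_1$ so that $C_1^2\varepsilon^2=1/n$, the bracket stays bounded away from zero, whence $\inf_{\overline{G}_n\in\mathcal{G}_{k}(\Theta)}\sup_{G\in\mathcal{G}_{k}(\Theta)\setminus\mathcal{O}_{k_*-1}(\Theta)}\bbE_{g_{G}}[\mathcal{D}_{3,r}(\overline{G}_n,G)]\gtrsim n^{-1/2}$.

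The main obstacle is not any single estimate — every ingredient is routine — but confirming the two structural facts the argument leans on in the over-specified regime: that $\mathcal{D}_{3,r}$ genuinely satisfies a weak (quasi-)triangle inequality with a uniform constant, where the subtlety is that the Voronoi-cell reindexings between $G_*$, $G'_*$, and $\overline{G}_n$ must be reconciled without the constant degenerating (the cells may now contain more than one component, unlike in $\mathcal{D}_{1,r}$); and that the near-minimizer $G'_*$ supplied by the hypothesis can always be arranged to have $k_*$ nondegenerate atoms so that it avoids $\mathcal{O}_{k_*-1}(\Theta)$. Both points are exactly where the over-specified setting differs, cosmetically, from the exact-specified one handled in Lemma~\ref{lemma:minimax_exact}.
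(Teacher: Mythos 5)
Your proposal follows essentially the same route as the paper: the paper handles Lemma~\ref{lemma:minimax_over} by exactly the Le Cam two-point argument of Appendix~\ref{appendix:minimax_exact} (it states only that the proof ``can be done similarly''), with the same construction of $G'_*$ from the hypothesis, the weak triangle inequality for the Voronoi loss, the identification of the testing infimum with $1-V(g^n_{G_*},g^n_{G'_*})$, the bound $\sqrt{1-(1-C_1^2\varepsilon^2)^n}$, and the choice $\varepsilon=n^{-1/2}/C_1$. The two structural points you flag (a uniform quasi-triangle constant for $\mathcal{D}_{3,r}$ despite multi-element Voronoi cells, and arranging $G'_*\notin\mathcal{O}_{k_*-1}(\Theta)$) are not elaborated in the paper either, so your write-up is, if anything, slightly more careful than the original.
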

The proof of Lemma~\ref{lemma:minimax_over} can be done similarly as in Appendix~\ref{appendix:minimax_exact}. Following from this lemma, it suffices to build a sequence of mixing measures $G_n$ that satisfies $\mathcal{D}_{3,r}(G_n,G_*)\to0$ and
\begin{align}
    \label{eq:ratio_zero_over}
    \frac{\bbE_X[V(g_{G_n}(\cdot|X),g_{G_*}(\cdot|X))]}{\mathcal{D}_{3,r}(G_n,G_*)}\to0,
\end{align}
as $n\to\infty$. To this end, we take into account the mixing measure sequence $G_n=\sum_{i=1}^{k_*}\exp\Big(\frac{\bzin}{\tau^n}\Big)\delta_{(\boin,\tau^n,\ain,\bin,\vin)}$, where we define for any $j\in[k_*]$ that
\begin{itemize}
    \item $a^n_{i}=a^*_{j}$, $b^n_{i}=b^*_{j}$, $\nu^n_{i}=\nu^*_{j}$ for any $i\in\mathcal{A}_j$;
    \item $\beta^n_{1i}=\beta^*_{1j}+s_{n,j}$, for any $i\in\mathcal{A}_j$;
    \item $\tau^n=\tau^*+t_n$;
    \item $\bzin=\tau^n\cdot\Big[\frac{\bzj}{\tau^*}-\log(|\mathcal{A}_j|)\Big]$, which implies that $\sum_{i\in\mathcal{A}_j}\exp\Big(\frac{\bzin}{\tau^n}\Big)=\exp\Big(\frac{\bzj}{\tau^*}\Big)$, for any $i\in\mathcal{A}_j$,
\end{itemize}
where $s_{n,j}:=(s^{(1)}_{n,j},\ldots,s^{(d)}_{n,j})\in\mathbb{R}^d$ and $t_n\in\mathbb{R}$ will be chosen later such that $s^{(u)}_{n,j}\to0$ and $t_n\to0$ as $n\to\infty$ for any $u\in[d]$ and $j\in[k_*]$. Then, the loss function $\mathcal{D}_{3,r}$ is reduced to
\begin{align*}
    \mathcal{D}_{3,r}(G_n,G_*)=\sum_{j=1}^{k_*}|\mathcal{A}_j|\cdot\exp\Big(\frac{\bzj}{\tau^*}\Big)(\|s_{n,j}\|^r+t_n^r).
\end{align*}
Obviously, we have that $\mathcal{D}_{3,r}(G_n,G_*)\to0$ as $n\to\infty$. 

Now, we will show that $\bbE_X[V(g_{G_n}(\cdot|X),g_{G_*}(\cdot|X))]/\mathcal{D}_{3,r}(G_n,G_*)\to0$ as $n\to\infty$. Let us consider the quantity $Q_n:=\Big[\sum_{j=1}^{k_*}\exp\Big(\dfrac{(\boi)^{\top}X+\bzi}{\tau^*}\Big)\Big]\cdot\Big[g_{G_n}(Y|X)-g_{G_*}(Y|X)\Big]$, which can be represented as as follows:
\begin{align*}
    Q_n&=\sum_{j=1}^{k_*}\sum_{i\in\mathcal{A}_j}\exp\Big(\frac{\bzin}{\tau^n}\Big)\Big[\exp\Big(\frac{(\boin)^{\top}X}{\tau^n}\Big)f(Y|(\ain)^{\top}X+\bin,\vin)-\exp\Big(\frac{(\boj)^{\top}X}{\tau^*}\Big)f(Y|(\aj)^{\top}X+\bj,\vj)\Big]\\
    &-\sum_{j=1}^{k_*}\sum_{i\in\mathcal{A}_j}\exp\Big(\frac{\bzin}{\tau^n}\Big)\Big[\exp\Big(\frac{(\boin)^{\top}X}{\tau^n}\Big)g_{G_n}(Y|X)-\exp\Big(\frac{(\boj)^{\top}X}{\tau^*}\Big)g_{G_n}(Y|X)\Big]\\
    &+\sum_{j=1}^{k_*}\Big[\sum_{i\in\mathcal{A}_j}\exp\Big(\frac{\bzin}{\tau^n}\Big)-\exp\Big(\frac{\bzj}{\tau^*}\Big)\Big]\Big[\exp\Big(\frac{(\boj)^{\top}X}{\tau^*}\Big)f(Y|(\aj)^{\top}X+\bj,\vj)-\exp\Big(\frac{(\boj)^{\top}X}{\tau^*}\Big)g_{G_n}(Y|X)\Big]\\
    &:=A_n-B_n+E_n.
\end{align*}
Following from the formulation of $G_n$, we can rewrite the term $A_n$ as
\begin{align*}
    A_n=\sum_{i=1}^{k_*}\sum_{i\in\mathcal{A}_j}\exp\Big(\frac{\bzin}{\tau^n}\Big)\Big[\exp\Big(\frac{(\boin)^{\top}X}{\tau^n}\Big)-\exp\Big(\frac{(\boj)^{\top}X}{\tau^*}\Big)\Big]f(Y|(\aj)^{\top}X+\bj,\vj).
\end{align*}
By means of first-order Taylor expansions, we can rewrite $A_n$ as
\begin{align*}
    A_n=\sum_{j=1}^{k_*}\sum_{i\in\mathcal{A}_j}\sum_{u=1}^{d}\exp\Big(\frac{\bzin}{\tau^n}\Big)\Big[\frac{s^{(u)}_{n,j}}{\tau^*}-\frac{t_n(\boj)^{(u)}}{(\tau^*)^2}\Big]\cdot X^{(u)}\exp\Big(\frac{(\boj)^{\top}X}{\tau^*}\Big) f(Y|(\aj)^{\top}X+\bj,\vj) +R_1(X,Y),
\end{align*}
where $R_1(X,Y)$ is a Taylor remainder such that $R_1(X,Y)/\mathcal{D}_{1,r}(G_n,G_*)\to0$ as $n\to\infty$. Then, by choosing 
\begin{align*}
    t_n=\frac{1}{n}; \qquad s^{(u)}_{n,j}=\dfrac{t_n(\boj)^{(u)}}{\tau^*}=\dfrac{(\boj)^{(u)}}{n\tau^*},
\end{align*}
we obtain that $A_n/\mathcal{D}_{3,r}(G_n,G_*)\to0$ as $n\to\infty$.

By arguing in the same fashion, we also get that $B_n/\mathcal{D}_{3,r}(G_n,G_*)\to0$ as $n\to\infty$. As we have $E_n=0$, it follows that $Q_n/\mathcal{D}_{3,r}(G_n,G_*)\to0$ as $n\to\infty$. Moreover, since the term $\Big[\sum_{j=1}^{k_*}\exp\Big(\dfrac{(\boj)^{\top}X+\bzj}{\tau^*}\Big)\Big]$ is bounded, we can deduce that $|g_{G_n}(Y|X)-g_{G_*}(Y|X)|/\mathcal{D}_{3,r}(G_n,G_*)\to0$ as $n\to\infty$ for almost surely $(X,Y)$. As a consequence, we satisfy the condition in equation~\eqref{eq:ratio_zero_over}. Hence, the proof is completed.

\subsection{Proof of Theorem~\ref{theorem:tight_rate_over}}
\label{appendix:tight_rate_over}

Analogous to the proof of Theorem~\ref{theorem:tight_rate_exact} in Appendix~\ref{appendix:tight_rate_exact}, we aim to prove the following inequality:
\begin{align}
    \label{eq:inequality_tight_over}
    \inf_{G\in\mathcal{G}_{k}(\Theta)}\bbE_X[V(g_{G}(\cdot|X),g_{G_*}(\cdot|X))]/\mathcal{D}_4(G^{|\Upsilon},G^{|\Upsilon}_*)>0.
\end{align}
Moreover, we also divide the above inequality into local and global parts. Since the global part can be argued in the same fashion as in Appendix~\ref{appendix:tight_rate_exact}, we will demonstrate only the local part, that is
\begin{align}
    \label{eq:local_tight_over}
    \lim_{\varepsilon\to0}\inf_{G\in\mathcal{G}_{k}(\Theta):\mathcal{D}_4(G^{|\Upsilon},G^{|\Upsilon}_*)\leq\varepsilon}\bbE_X[V(g_{G}(\cdot|X),g_{G_*}(\cdot|X))]/\mathcal{D}_4(G^{|\Upsilon},G^{|\Upsilon}_*)>0.
\end{align}
Assume by contrary that the above claim does not hold true, then we can find a sequence of mixing measures $G_n=\sum_{i=1}^{k_*}\exp(\beta^n_{0i}/\tau^n)\delta_{(\beta^n_{1i},\tau^n,a^n_i,b^n_i,\nu^n_i)}$ in $\mathcal{G}_{k}(\Theta)$ that satisfies $\mathcal{D}_{4n}:=\mathcal{D}_4(G^{|\Upsilon}_n,G^{|\Upsilon}_*)\to0$ and
\begin{align}
    \label{eq:ratio_tight_over}
    \bbE_X[V(g_{G_n}(\cdot|X),g_{G_*}(\cdot|X))]/\mathcal{D}_{4n}\to0,
\end{align}
as $n\to\infty$. Let us denote $\mathcal{A}^n_j=\mathcal{A}_j(G_n)$, then the loss function $\mathcal{D}_{6n}$ is reduced to
\begin{align}
    \label{eq:new_loss_tight_over}
    \mathcal{D}_{4n}:=&\sum_{j:|\mathcal{A}_j|>1}\sum_{i\in\mathcal{A}_j}\exp\Big(\frac{\bzin}{\tau^n}\Big)\Big[|\dbijn|^{\brj}+|\dvijn|^{\brj/2}\Big]\nonumber\\
    &+\sum_{j:|\mathcal{A}_j|=1}\sum_{i\in\mathcal{A}_j}\exp\Big(\frac{\bzin}{\tau^n}\Big)\Big[|\dbijn|+|\dvijn|\Big]+\sum_{j=1}^{k_*}\Big|\sum_{i\in\mathcal{A}_j}\exp\Big(\frac{\bzin}{\tau^n}\Big)-\exp\Big(\frac{\bzj}{\tau^*}\Big)\Big|
\end{align}
As $\mathcal{D}_{4n}\to0$, we get that $(\bin,\vin)\to(\bj,\vj)$ and $\sum_{i\in\mathcal{A}_j}\exp(\bzin/\tau^n)\to\exp(\bzj/\tau^*)$ as $n\to\infty$ for any $i\in\mathcal{A}_j$ and $j\in[k_*]$. Now, we divide the proof of local part into three steps as follows:

\textbf{Step 1.} In this step, we decompose the quantity $Q_n:=\Big[\sum_{j=1}^{k_*}\exp\Big(\dfrac{\boj X+\bzj}{\tau^*}\Big)\Big]\cdot[g_{G_n}(Y|X)-g_{G_*}(Y|X)]$ into a linear combination of linearly independent terms. Firstly, let $F(Y;X,\beta_{1},\tau,a,b,\nu):=\exp\Big(\frac{\beta_{1}X}{\tau}\Big)f(Y|aX+b,\nu)$ and $H(Y;X,\beta_{1},\tau):=\exp\Big(\frac{\beta_{1}X}{\tau}\Big)g_{G_n}(Y|X)$. Then, it can be verified that
\begin{align}
    \label{eq:Q_n_tight_formulation}
    Q_n&=\sum_{j=1}^{k_*}\sum_{i\in\mathcal{A}_j}\exp\Big(\frac{\bzin}{\tau^n}\Big)\Big[F(Y;X,\boin,\tau^n,\ain,\bin,\vin)-F(Y;X,\boj,\tau^*,\aj,\bj,\vj)\Big]\nonumber\\
    &-\sum_{j=1}^{k_*}\sum_{i\in\mathcal{A}_j}\exp\Big(\frac{\bzin}{\tau^n}\Big)\Big[H(Y;X,\boin,\tau^n)-H(Y;X,\boj,\tau^*)\Big]\nonumber\\
    &+\sum_{j=1}^{k_*}\Big[\sum_{i\in\mathcal{A}_j}\exp\Big(\frac{\bzin}{\tau^n}\Big)-\exp\Big(\frac{\bzi}{\tau^*}\Big)\Big]\cdot\exp(\boj X)f(Y|\aj X+\bj,\vj)\nonumber\\
    &-\sum_{j=1}^{k_*}\Big[\sum_{i\in\mathcal{A}_j}\exp\Big(\frac{\bzin}{\tau^n}\Big)-\exp\Big(\frac{\bzi}{\tau^*}\Big)\Big]\cdot\exp(\boj X)g_{G_n}(Y|X)\nonumber\\
    :&=A_n-B_n+E_{n,1}-E_{n,2}.
\end{align}
Next, we continue to separate $A_n$ into two terms as follows:
\begin{align*}
    A_n:&=\sum_{j:|\mathcal{A}_j|=1}\sum_{i\in\mathcal{A}_j}\exp\Big(\frac{\bzin}{\tau^n}\Big)\Big[F(Y;X,\boin,\tau^n,\ain,\bin,\vin)-F(Y;X,\boj,\tau^*,\aj,\bj,\vj)\Big]\\
    &+\sum_{j:|\mathcal{A}_j|>1}\sum_{i\in\mathcal{A}_j}\exp\Big(\frac{\bzin}{\tau^n}\Big)\Big[F(Y;X,\boin,\tau^n,\ain,\bin,\vin)-F(Y;X,\boj,\tau^*,\aj,\bj,\vj)\Big]\\
    :&=A_{n,1}+A_{n,2}.
\end{align*}
Let us denote
\begin{align*}
    F^{(\eta)}(Y;X,\omega^*_j):=\exp\Big(\frac{\boj X}{\tau^*}\Big)\frac{\partial^{\eta}f}{\partial h_1^{\eta}}(Y|\aj X+\bj,\vj),
\end{align*}
for any $\eta\in\mathbb{N}$, where $\omega^*_j:=(\boj,\tau^*,\aj,\bj,\vj)$. Then, by applying the first-order Taylor expansion as in equation~\eqref{eq:modified_first_order_exact}, the term $A_{n,1}$ can be decomposed as
\begin{align*}
    A_{n,1}&=\sum_{j:|\mathcal{A}_j|=1}\sum_{i\in\mathcal{A}_j}\exp\Big(\frac{\bzin}{\tau^n}\Big)\sum_{|\alpha|=1}\frac{1}{\alpha!}(\dboijn)^{\alpha_1}(\dtn)^{\alpha_2}(\daijn)^{\alpha_3}(\dbijn)^{\alpha_4}(\dvijn)^{\alpha_5}\nonumber\\
    &\hspace{5cm}\times \frac{\partial F}{\partial\beta_{1}^{\alpha_1}~\partial\tau^{\alpha_2}~\partial a^{\alpha_3}~\partial b^{\alpha_4}~\partial \nu^{\alpha_5}}(Y;X,\omega^*_j) + R_1(X,Y)\nonumber\\
    &=\sum_{j:|\mathcal{A}_j|=1}\sum_{i\in\mathcal{A}_j}\exp\Big(\frac{\bzin}{\tau^n}\Big)\sum_{|\alpha|=1}\frac{1}{\alpha!}(\dboijn)^{\alpha_1}(\dtn)^{\alpha_2}(\daijn)^{\alpha_3}(\dbijn)^{\alpha_4}(\dvijn)^{\alpha_5}\nonumber\\
    &\hspace{2cm}\times \frac{X^{\alpha_1}}{(\tau^*)^{\alpha_1}}\cdot\Big(\sum_{w=1}^{\alpha_2}\frac{c_{w,\boj,\tau^*}}{(\tau^*)^{\alpha_2}}X^{w}\Big)\cdot X^{\alpha_3}F^{(\alpha_3+\alpha_4+2\alpha_5)}(Y;X,\omega^*_j) + R_1(X,Y),
\end{align*}
where $R_1(X,Y)$ is a Taylor remainder such that $R_1(X,Y)/\mathcal{D}_{4n}\to0$ as $n\to\infty$. By letting $\ell_1=\alpha_1+w+\alpha_3$ and $\ell_2=\alpha_3+\alpha_4+2\alpha_5$, we obtain that
\begin{align}
    \label{eq:A_1_tight_over}
    A_{n,1}&=\sum_{j:|\mathcal{A}_j|=1}\sum_{\ell_1+\ell_2=1}^{2}\sum_{i\in\mathcal{A}_j}\sum_{\alpha\in\mathcal{I}_{\ell_1,\ell_2}}\exp\Big(\frac{\bzin}{\tau^n}\Big)\frac{c_{\ell_1-\alpha_1-\alpha_3,\boj,\tau^*}}{\alpha!2^{\alpha_5}(\tau^*)^{\alpha_1+\alpha_2}}(\dboijn)^{\alpha_1}(\dtn)^{\alpha_2}(\daijn)^{\alpha_3}(\dbijn)^{\alpha_4}(\dvijn)^{\alpha_5}\nonumber\\
    &\hspace{9cm}\times X^{\ell_1}F^{(\ell_2)}(Y;X,\omega^*_j)+R_1(X,Y),
\end{align}
where
\begin{align*}
    \mathcal{I}_{\ell_1,\ell_2}:=\{(\alpha_1,\alpha_2,\alpha_3,\alpha_4,\alpha_5)\in\mathbb{N}^5:\alpha_1+\alpha_2+\alpha_3\geq\ell_1, \ \alpha_3+\alpha_4+\alpha_5=\ell_2\}.
\end{align*}
Regarding $A_{n,2}$, for each $j:|\mathcal{A}_j|>1$, by means of Taylor expansion of order $\brj$, we have
\begin{align}
    \label{eq:A_2_tight_over}
    A_{n,2}&=\sum_{j:|\mathcal{A}_j|>1}\sum_{\ell_1+\ell_2=1}^{2\brj}\sum_{i\in\mathcal{A}_j}\sum_{\alpha\in\mathcal{I}_{\ell_1,\ell_2}}\exp\Big(\frac{\bzin}{\tau^n}\Big)\frac{c_{\ell_1-\alpha_1-\alpha_3,\boj,\tau^*}}{\alpha!2^{\alpha_5}(\tau^*)^{\alpha_1+\alpha_2}}(\dboijn)^{\alpha_1}(\dtn)^{\alpha_2}(\daijn)^{\alpha_3}(\dbijn)^{\alpha_4}(\dvijn)^{\alpha_5}\nonumber\\
    &\hspace{9cm}\times X^{\ell_1}F^{(\ell_2)}(Y;X,\omega^*_j)+R_2(X,Y),
\end{align}
where $R_2(X,Y)$ is a Taylor remainder such that $R_2(X,Y)/\mathcal{D}_{4n}\to0$ as $n\to\infty$. From the results in equations~\eqref{eq:A_1_tight_over}, \eqref{eq:A_2_tight_over} and the definition of $E_{n,1}$, we get
\begin{align}
    \label{eq:A_n_tight_over}
    A_{n}+E_{n,1}&=\sum_{j=1}^{k_*}\sum_{\ell_1+\ell_2=0}^{2\brj}Z^n_{\ell_1,\ell_2,j}\cdot X^{\ell_1}F^{(\ell_2)}(Y;X,\omega^*_j)+R_1(X,Y)+R_2(X,Y),
\end{align}
where
\begin{align*}
    Z^n_{\ell_1,\ell_2,j}:=\begin{cases}
        \sum_{i\in\mathcal{A}_j}\sum_{\alpha\in\mathcal{I}_{\ell_1,\ell_2}}\exp\Big(\dfrac{\bzin}{\tau^n}\Big)\dfrac{c_{\ell_1-\alpha_1-\alpha_3,\boj,\tau^*}}{\alpha!2^{\alpha_5}(\tau^*)^{\alpha_1+\alpha_2}}(\dboijn)^{\alpha_1}(\dtn)^{\alpha_2}(\daijn)^{\alpha_3}(\dbijn)^{\alpha_4}(\dvijn)^{\alpha_5}, \\
        \hspace{11.1cm}(\ell_1,\ell_2)\neq (0,0);\\
        \textbf{}\\
        \sum_{i\in\mathcal{A}_j}\exp\Big(\dfrac{\bzin}{\tau^n}\Big)-\exp\Big(\dfrac{\bzi}{\tau^*}\Big), \hspace{6.3cm} (\ell_1,\ell_2)=(0,0).
    \end{cases}
\end{align*}
Subsequently, we also separate $B_n$ into two terms:
\begin{align*}
    B_n&:=\sum_{j:|\mathcal{A}_j|=1}\sum_{i\in\mathcal{A}_j}\exp\Big(\frac{\bzin}{\tau^n}\Big)\Big[H(Y;X,\boin,\tau^n)-H(Y;X,\boj,\tau^*)\Big]\\
    &+\sum_{j:|\mathcal{A}_j|>1}\sum_{i\in\mathcal{A}_j}\exp\Big(\frac{\bzin}{\tau^n}\Big)\Big[H(Y;X,\boin,\tau^n)-H(Y;X,\boj,\tau^*)\Big]\\
    &:=B_{n,1}+B_{n,2}.
\end{align*}
By applying the first-order Taylor expansion to $B_{n,1}$, we have
\begin{align*}
    B_{n,1}&=\sum_{j:|\mathcal{A}_j|=1}\sum_{i\in\mathcal{A}_j}\exp\Big(\frac{\bzin}{\tau^n}\Big)\sum_{|\gamma|=1}\frac{1}{\gamma!}(\dboijn)^{\gamma_1}(\dtn)^{\gamma_2}\cdot\frac{\partial^{\gamma_1+\gamma_2}H}{\partial\beta_1^{\gamma_1}\partial\tau^{\gamma_2}}(Y;X,\boj,\tau^*)+R_3(X,Y)\nonumber\\
    &=\sum_{j:|\mathcal{A}_j|=1}\sum_{i\in\mathcal{A}_j}\exp\Big(\frac{\bzin}{\tau^n}\Big)\sum_{|\gamma|=1}\frac{1}{\gamma!}(\dboijn)^{\gamma_1}(\dtn)^{\gamma_2}\nonumber\\
    &\hspace{4cm}\times \frac{X^{\gamma_1}}{(\tau^*)^{\gamma_1}}\cdot\Big(\sum_{w=1}^{\gamma_2}\frac{c_{w,\boj,\tau^*}}{(\tau^*)^{\gamma_2}}X^{w}\Big)H(Y;X,\boj,\tau^*) + R_3(X,Y),
\end{align*}
where $R_3(X,Y)$ is a Taylor remainder such that $R_3(X,Y)/\mathcal{D}_{4n}\to0$ as $n\to\infty$. By letting $\ell=\gamma_1+w$, we rewrite $B_{n,1}$ as
\begin{align}
     \label{eq:B_1_tight_over}
    B_{n,1}&=\sum_{j:|\mathcal{A}_j|=1}\sum_{\ell=1}\sum_{i\in\mathcal{A}_j}\sum_{\gamma\in\mathcal{J}_{\ell}}\frac{c_{\ell-\gamma_1,\boj,\tau^*}}{\gamma!(\tau^*)^{\gamma_1+\gamma_2}}\exp\Big(\frac{\bzin}{\tau^n}\Big)(\dboijn)^{\gamma_1}(\dtn)^{\gamma_2}\cdot X^{\ell}H(Y;X,\boj,\tau^*) +R_3(X,Y).
\end{align}
Regarding $B_{n,2}$, by means of the second-order Taylor expansion, we get
\begin{align}
     \label{eq:B_2_tight_over}
    B_{n,2}=\sum_{j:|\mathcal{A}_j|>1}\sum_{\ell=1}^{2}\sum_{i\in\mathcal{A}_j}\sum_{\gamma\in\mathcal{J}_{\ell}}\frac{c_{\ell-\gamma_1,\boj,\tau^*}}{\gamma!(\tau^*)^{\gamma_1+\gamma_2}}\exp\Big(\frac{\bzin}{\tau^n}\Big)(\dboijn)^{\gamma_1}(\dtn)^{\gamma_2}\cdot X^{\ell}H(Y;X,\boj,\tau^*) +R_4(X,Y),
\end{align}
where $R_4(X,Y)$ is a Taylor remainder such that $R_4(X,Y)/\mathcal{D}_{4n}\to0$ as $n\to\infty$. From the results in equations~\eqref{eq:B_1_tight_over}, \eqref{eq:B_2_tight_over} and the definition of $E_{n,2}$, we obtain that
\begin{align}
    \label{eq:B_n_tight_over}
    B_n+E_{n,2}=\sum_{j=1}^{k_*}\sum_{\ell=0}^{1+\mathbf{1}_{|\mathcal{A}_j|>1}}Z^n_{\ell,0,j}\cdot X^{\ell}H(Y;X,\boj,\tau^*)+R_3(X,Y) +R_4(X,Y).
\end{align}
Combine equation~\eqref{eq:A_n_tight_over} with equation~\eqref{eq:B_n_tight_over}, we have
\begin{align}
    \label{eq:Q_n_tight_over}
    Q_n=\sum_{j=1}^{k_*}\sum_{\ell_1+\ell_2=0}^{2\brj}Z^n_{\ell_1,\ell_2,j}\cdot X^{\ell_1}F^{(\ell_2)}(Y;X,\omega^*_j)-\sum_{j=1}^{k_*}\sum_{\ell=0}^{1+\mathbf{1}_{|\mathcal{A}_j|>1}}Z^n_{\ell,0,j}\cdot X^{\ell}H(Y;X,\boj,\tau^*)\nonumber\\
    +R_1(X,Y)+R_2(X,Y)-R_3(X,Y)-R_4(X,Y).
\end{align}
As a consequence, we can view $[Q_n-R_1(X,Y)-R_2(X,Y)+R_3(X,Y)+R_4(X,Y)]/\mathcal{D}_{4n}$ as a combination of elements from the following set:
\begin{align*}
    \mathcal{S}:=\Big\{X^{\ell_1}F^{(\ell_2)}(Y;X,\omega^*_j), \ X^{\ell}H(Y;X,\boj,\tau^*):j\in[k_*], \ 0\leq\ell_1+\ell_2\leq 2\brj, \ 0\leq\ell\leq 1+\mathbf{1}_{|\mathcal{A}_j|>1}\Big\}.
\end{align*}
\textbf{Step 2.} In this step, we show that at least one among the ratios $Z^n_{\ell_1,\ell_2,j}/\mathcal{D}_{4n}$ does not converge to zero as $n\to\infty$. Assume by contrary that all of them go to zero. Then, by taking the summation of the absolute values of 
\begin{itemize}
    \item $Z^n_{0,0,j}/\mathcal{D}_{4n}$ for $j\in[k_*]$, we have $\frac{1}{\mathcal{D}_{4n}}\cdot\sum_{j=1}^{k_*}\Big|\sum_{i\in\mathcal{A}_j}\exp\Big(\frac{\bzin}{\tau^n}\Big)-\exp\Big(\frac{\bzj}{\tau^*}\Big)\Big|\to0$;
    \item $Z^n_{0,1,j}/\mathcal{D}_{4n}$ for $j:|\mathcal{A}_j|=1$, we have $\frac{1}{\mathcal{D}_{4n}}\cdot\sum_{j:|\mathcal{A}_j|=1}\sum_{i\in\mathcal{A}_j}\exp\Big(\frac{\bzin}{\tau^n}\Big)|\dbijn|\to0$;
    \item $Z^n_{0,2,j}/\mathcal{D}_{4n}$ for $j:|\mathcal{A}_j|=1$, we have $\frac{1}{\mathcal{D}_{4n}}\cdot\sum_{j:|\mathcal{A}_j|=1}\sum_{i\in\mathcal{A}_j}\exp\Big(\frac{\bzin}{\tau^n}\Big)|\dvijn|\to0$.
\end{itemize}
From the above limits and the formulation of $\mathcal{D}_{4n}$ in equation~\eqref{eq:new_loss_tight_over}, we deduce that
\begin{align*}
    \frac{1}{\mathcal{D}_{4n}}\cdot\sum_{j:|\mathcal{A}_j|>1}\sum_{i\in\mathcal{A}_j}\exp\Big(\frac{\bzin}{\tau^n}\Big)\Big(|\dbijn|+|\dvijn|\Big)\to1.
\end{align*}
This implies that there exists an index $j^*:|\mathcal{A}_j|>1$ (WLOG assume that $j^*=1$) such that 
\begin{align*}
    \frac{1}{\mathcal{D}_{4n}}\cdot\sum_{i\in\mathcal{A}_1}\exp\Big(\frac{\bzin}{\tau^n}\Big)\Big(|\dbione|+|\dvione|\Big)\not\to0.
\end{align*}
Moreover, since
\begin{align*}
    \frac{Z^n_{0,\ell_2,1}}{\mathcal{D}_{4n}}-\frac{Z^n_{1,\ell_2,1}}{\mathcal{D}_{4n}}=\frac{1}{\mathcal{D}_{4n}}\cdot\sum_{i\in\mathcal{A}_1}\sum_{\substack{\alpha_4+2\alpha_5=\ell_2, \\ 1\leq\alpha_4+\alpha_5\leq\brone}}\frac{1}{\alpha_4!\alpha_5!2^{\alpha_5}}(\dbione)^{\alpha_4}(\dvione)^{\alpha_5}\to0,
\end{align*}
for any $1\leq\ell_2\leq\brone$, we obtain that
\begin{align}
    \label{eq:vanish_ratio_tight}
    \frac{\sum_{i\in\mathcal{A}_1}\sum_{\substack{\alpha_4+2\alpha_5=\ell_2, \\ 1\leq\alpha_4+\alpha_5\leq\brone}}\frac{1}{\alpha_4!\alpha_5!2^{\alpha_5}}(\dbione)^{\alpha_4}(\dvione)^{\alpha_5}}{\sum_{i\in\mathcal{A}_1}\exp\Big(\frac{\bzin}{\tau^n}\Big)\Big(|\dbione|+|\dvione|\Big)}\to0.
\end{align}
Let us define $\overline{M}_n:=\max\{|\dbione|,|\dvione|^{1/2}:i\in\mathcal{A}_1\}$ and $\overline{\pi}_n:=\max_{i\in\mathcal{A}_1}\exp(\frac{\bzin}{\tau^n})$. Since the sequence $\exp\Big(\frac{\bzin}{\tau^n}\Big)/\overline{\pi}_n$ is bounded, it is possible to replace it by its subsequence that has a positive limit $q^2_{3i}:=\lim_{n\to\infty}\exp(\frac{\bzin}{\tau^n})/\overline{\pi}_n$. Thus, at least one among $q^2_{3i}$, for $i\in\mathcal{A}_1$, is equal to one. 

In addition, we also define
\begin{align*}
    (\dbione)/\overline{M}_n\to q_{4i},& \quad (\dvione)/[2\overline{M}_n]\to q_{5i}.
\end{align*}
It is worth noting that at least one among $q_{4i}$ and $q_{5i}$ for $i\in\mathcal{A}_1$ is equal to either $1$ or $-1$. Subsequently, we divide both the numerator and the denominator of the ratio in equation~\eqref{eq:vanish_ratio_tight} by $\overline{\pi}_n\overline{M}_n^{\ell_2}$, and then obtain the following system of polynomial equations:
\begin{align*}
    \sum_{i\in\mathcal{A}_1}\sum_{\substack{\alpha_4+2\alpha_5=\ell, \\ 1\leq\alpha_4+\alpha_5\leq\brone}}\frac{q^2_{3i}~q_{4i}^{\alpha_4}~q_{5i}^{\alpha_5}}{\alpha_4!~\alpha_5!}=0,
\end{align*}
for all $1\leq\ell_2\leq\brone$. However, from the definition of $\bar{r}(|\mathcal{A}_1|)$, the above system does not have any non-trivial solutions, which contradicts to the fact that at least one among $q_{4i}$ and $q_{5i}$ for $i\in\mathcal{A}_1$ is non-zero. Therefore, not all the ratios $Z^n_{\ell_1,\ell_2,j}/\mathcal{D}_{4n}$ converge to zero as $n\to\infty$.

\textbf{Step 3.} In this step, we leverage the Fatou's lemma to show a result contradicting to that in Step 2. In particular, by the Fatou's lemma, we have
\begin{align*}
    \lim_{n\to\infty}\frac{ \bbE_X[V(g_{G_n}(\cdot|X),g_{G_*}(\cdot|X))]}{\mathcal{D}_{4n}}\geq\int\liminf_{n\to\infty}\frac{|g_{G_n}(Y|X)-g_{G_*}(Y|X)|}{2\mathcal{D}_{4n}}\dint(X,Y).
\end{align*}
Moreover, recall from the hypothesis in equation~\eqref{eq:ratio_tight_over} that $\bbE_X[V(g_{G_n}(\cdot|X),g_{G_*}(\cdot|X))]/\mathcal{D}_{4n}\to0$ as $n\to\infty$. Therefore, we deduce that $$\frac{|g_{G_n}(Y|X)-g_{G_*}(Y|X)|}{\mathcal{D}_{4n}}\to0,$$ 
for almost surely $(X,Y)$. Since the term $\Big[\sum_{j=1}^{k_*}\exp\Big(\dfrac{\boj X+\bzj}{\tau^*}\Big)\Big]$ is bounded, we also have that $\dfrac{Q_n}{\mathcal{D}_{4n}}\to0$ as $n\to\infty$. Following from the results in equation~\eqref{eq:Q_n_tight_over}, we have
\begin{align}
    \label{eq:zero_tight_limit}
    \sum_{j=1}^{k_*}\sum_{\ell_1+\ell_2=0}^{2\brj}\frac{Z^n_{\ell_1,\ell_2,j}}{\mathcal{D}_{4n}}\cdot X^{\ell_1}F^{(\ell_2)}(Y;X,\omega^*_j)-\sum_{j=1}^{k_*}\sum_{\ell=0}^{1+\mathbf{1}_{|\mathcal{A}_j|>1}}\frac{Z^n_{\ell,0,j}}{\mathcal{D}_{4n}}\cdot X^{\ell}H(Y;X,\boj,\tau^*)\to0.
\end{align}
Therefore, $Z^n_{\ell_1,\ell_2,j}/\mathcal{D}_{4n}$ must be bounded for any $j\in[k_*]$ and $0\leq\ell_1+\ell_2\leq 2\brj$. Indeed, if at least one among them is not bounded, then the ratio $Z^n_{\ell_1,\ell_2,j}/\mathcal{D}_{4n}$ will go to infinity, implying that the left hand side of equation~\eqref{eq:zero_tight_limit} does not go to zero, which is a contradiction. Thus, for each $j\in[k_*]$ and $0\leq\ell_1+\ell_2\leq 2\brj$, we can replace $Z^n_{\ell_1,\ell_2,j}$ by one of its subsequences such that the ratio $Z^n_{\ell_1,\ell_2,j}/\mathcal{D}_{4n}$ has a finite limit as $n\to\infty$. Let us denote $Z^n_{\ell_1,\ell_2,j}/\mathcal{D}_{4n}\to Z^*_{\ell_1,\ell_2,j}$, then it follows from the results in Step 2 that at least one among them is non-zero. Additionally, equation~\eqref{eq:zero_tight_limit} indicates that
\begin{align*}
    \sum_{j=1}^{k_*}\sum_{\ell_1+\ell_2=0}^{2\brj}Z^*_{\ell_1,\ell_2,j}\cdot X^{\ell_1}F^{(\ell_2)}(Y;X,\omega^*_j)-\sum_{j=1}^{k_*}\sum_{\ell=0}^{1+\mathbf{1}_{|\mathcal{A}_j|>1}}Z^*_{\ell,0,j}\cdot X^{\ell}H(Y;X,\boj,\tau^*)=0,
\end{align*}
for almost surely $(X,Y)$. Since the set 
\begin{align*}
     \mathcal{S}=\Big\{X^{\ell_1}F^{(\ell_2)}(Y;X,\omega^*_j), \ X^{\ell}H(Y;X,\boj,\tau^*):j\in[k_*], \ 0\leq\ell_1+\ell_2\leq 2\brj, \ 0\leq\ell\leq 1+\mathbf{1}_{|\mathcal{A}_j|>1}\Big\}
\end{align*}
is linearly independent, we deduce that $Z^*_{\ell_1,\ell_2,j}=0$ for any $j\in[k_*]$ and $0\leq\ell_1+\ell_2\leq 2\brj$, which contradicts the fact that at least one among them is different from zero. Hence, the proof is completed.

\subsection{Proof of Theorem~\ref{theorem:modified_exact}}
\label{appendix:modified_exact}
In this proof, our main goal is to demonstrate the following inequality:
\begin{align}
    \label{eq:universal_inequality}
    \inf_{G\in\mathcal{E}_{k_*}(\Theta)}\bbE_X[V(p_{G}(\cdot|X),p_{G_*}(\cdot|X))]/\mathcal{D}_5(G,G_*)>0.
\end{align}
For that purpose, we separate the above inequality into local and global parts.

\textbf{Local part:} In this part, we aim to show that
\begin{align}
    \label{eq:local_modified_exact}
    \lim_{\varepsilon\to0}\inf_{G\in\mathcal{E}_{k_*}(\Theta):\mathcal{D}_5(G,G_*)\leq\varepsilon}\bbE_X[V(p_{G}(\cdot|X),p_{G_*}(\cdot|X))]/\mathcal{D}_5(G,G_*)>0.
\end{align}
Assume by contrary that the above claim does not hold true, then we can find a sequence of mixing measures $G_n=\sum_{i=1}^{k_*}\exp(\beta^n_{0i}/\tau^n)\delta_{(\beta^n_{1i},\tau^n,a^n_i,b^n_i,\nu^n_i)}$ in $\mathcal{E}_{k_*}(\Theta)$ that satisfies $\mathcal{D}_{5n}:=\mathcal{D}_5(G_n,G_*)\to0$ and
\begin{align}
    \label{eq:ratio_modified_exact}
    \bbE_X[V(p_{G_n}(\cdot|X),p_{G_*}(\cdot|X))]/\mathcal{D}_{5n}\to0,
\end{align}
as $n\to\infty$. Recall that under the exact-specified settings, each Voronoi cell $\mathcal{A}^n_i=\mathcal{A}_i(G_n)$ has only one element. Therefore, we may assume without loss of generality (WLOG) that $\mathcal{A}^n_i=\{i\}$ for any $i\in[k_*]$. Thus, the loss function $\mathcal{D}_{5n}$ is reduced to
\begin{align}
    \label{eq:new_loss_modified_exact}
    \mathcal{D}_{5n}:=
    \sum_{i=1}^{k_*}\exp\Big(\frac{\bzin}{\tau^n}\Big)\Big[\|\dboin\|+|\dtn|+\|\dain\|+|\dbin|+|\dvin|\Big]+\sum_{i=1}^{k_*}\Big|\exp\Big(\frac{\bzin}{\tau^n}\Big)-\exp\Big(\frac{\bzj}{\tau^*}\Big)\Big|
\end{align}
Since $\mathcal{D}_{5n}\to0$, we get that $(\boin,\tau^n,\ain,\bin,\vin)\to(\boi,\tau^*,\ai,\bi,\vi)$ and $\exp(\bzin/\tau^n)\to\exp(\bzi/\tau^*)$ as $n\to\infty$. Now, we divide the proof of local part into three steps as follows:

\textbf{Step 1.} In this step, we decompose the quantity $Q_n:=\Big[\sum_{i=1}^{k_*}\exp\Big(\dfrac{\sigma((\boi)^{\top}X)+\bzi}{\tau^*}\Big)\Big]\cdot[p_{G_n}(Y|X)-p_{G_*}(Y|X)]$ into a linear combination of linearly independent terms. Firstly, let $\bsigma(X,\beta_1):=\sigma(\beta_1^{\top}X)$, $F(Y;X,\beta_{1},\tau,a,b,\nu):=\exp\Big(\frac{\bsigma(X,\beta_1)}{\tau}\Big)f(Y|a^{\top}X+b,\nu)$ and $H(Y;X,\beta_{1},\tau):=\exp\Big(\frac{\bsigma(X,\beta_1)}{\tau}\Big)p_{G_n}(Y|X)$. Then, it can be verified that
\begin{align*}
    Q_n&=\sum_{i=1}^{k_*}\exp\Big(\frac{\bzin}{\tau^n}\Big)\Big[F(Y;X,\boin,\tau^n,\ain,\bin,\vin)-F(Y;X,\boi,\tau^*,\ai,\bi,\vi)\Big]\\
    &-\sum_{i=1}^{k_*}\exp\Big(\frac{\bzin}{\tau^n}\Big)\Big[H(Y;X,\boin,\tau^n)-H(Y;X,\boi,\tau^*)\Big]\\
    &+\sum_{i=1}^{k_*}\Big[\exp\Big(\frac{\bzin}{\tau^n}\Big)-\exp\Big(\frac{\bzi}{\tau^*}\Big)\Big]\cdot\exp\Big(\frac{\bsigma(X,\boi)}{\tau^*}\Big)f(Y|(\ai)^{\top}X+\bi,\vi)\\
    &-\sum_{i=1}^{k_*}\Big[\exp\Big(\frac{\bzin}{\tau^n}\Big)-\exp\Big(\frac{\bzi}{\tau^*}\Big)\Big]\cdot\exp\Big(\frac{\bsigma(X,\boi)}{\tau^*}\Big)p_{G_n}(Y|X)\\
    :&=A_n-B_n+E_{n,1}-E_{n,2}.
\end{align*}
Next, by means of the first-order Taylor expansion, we get that
\begin{align*}
    A_n&=\sum_{i=1}^{k_*}\exp\Big(\frac{\bzin}{\tau^n}\Big)\sum_{|\alpha|=1}\frac{1}{\alpha!}(\dboin)^{\alpha_1}(\dtn)^{\alpha_2}(\dain)^{\alpha_3}(\dbin)^{\alpha_4}(\dvin)^{\alpha_5}\\
    &\hspace{5cm}\times \frac{\partial F}{\partial\beta_{1}^{\alpha_1}~\partial\tau^{\alpha_2}~\partial a^{\alpha_3}~\partial b^{\alpha_4}~\partial \nu^{\alpha_5}}(Y;X,\boi,\tau^*,\ai,\bi,\vi) + R_1(X,Y),
\end{align*}
where $R_1(X,Y)$ is a Taylor remainder such that $R_1(X,Y)/\mathcal{D}_{5n}\to0$ as $n\to\infty$. Let us denote
\begin{align*}
    F^{(\eta)}(Y;X,\omega^*_i):=\exp\Big(\frac{\bsigma(X,\boi)}{\tau^*}\Big)\frac{\partial^{\eta}f}{\partial h_1^{\eta}}(Y|(\ai)^{\top}X+\bi,\vi),
\end{align*}
for any $\eta\in\mathbb{N}$, where $\omega^*_i:=(\boi,\tau^*,\ai,\bi,\vi)$. Then, the first derivatives of function $F$ w.r.t its parameters are given by
\begin{align}
    \frac{\partial F}{\partial\beta_{1}^{(u)}}(Y;X,\omega^*_i)&=\frac{1}{\tau^*}\cdot\frac{\partial\bsigma}{\partial \beta_1^{(u)}}(X,\boi)F(Y;X,\omega^*_i),\nonumber\\
    \frac{\partial F}{\partial\tau}(Y;X,\omega^*_i)&=-\frac{\bsigma(X,\boi)}{(\tau^*)^2}F(Y;X,\omega^*_i),\nonumber\\
    \frac{\partial F}{\partial a^{(u)}}(Y;X,\omega^*_i)&=X^{(u)}F^{(1)}(Y;X,\omega^*_i),\nonumber\\
    \frac{\partial F}{\partial b}(Y;X,\omega^*_i)&=F^{(1)}(Y;X,\omega^*_i),\nonumber\\
    \label{eq:F_first_derivatives}
    \frac{\partial F}{\partial \nu}(Y;X,\omega^*_i)&=\frac{1}{2}F^{(2)}(Y;X,\omega^*_i),
\end{align}
for any $u\in[d]$ and $i\in[k_*]$. Then, the terms $A_n$ and $E_{n,1}$ can be represented as
\begin{align}
    \label{eq:modified_first_order_exact}
    A_n&=\sum_{i=1}^{k_*}\sum_{\eta=0}^{2}C_{n,\eta,i}(X)F^{(\eta)}(Y;X,\omega^*_i)+R_1(X,Y),
\end{align}
where we define for any $i\in[k_*]$ and $X\in\mathcal{X}$ that
\begin{align}
    \label{eq:modified_coefficients_exact}
    C_{n,0,i}(X)&:=\exp\Big(\frac{\bzin}{\tau^n}\Big)\Big[\sum_{u=1}^{d}\frac{(\dboin)^{(u)}}{\tau^*}\cdot \frac{\partial\bsigma}{\partial \beta_1^{(u)}}(X,\boi)-\frac{(\dtn)}{(\tau^*)^2}\cdot\bsigma(X,\boi)\Big],\nonumber\\
    C_{n,1,i}(X)&:=\exp\Big(\frac{\bzin}{\tau^n}\Big)\Big[\sum_{u=1}^{d}(\dain)^{(u)} X^{(u)}+(\dbin)\Big],\nonumber\\
    C_{n,2,i}(X)&:=\exp\Big(\frac{\bzin}{\tau^n}\Big)\cdot\frac{(\dvin)}{2}.
\end{align}
Thus, we can view the terms $[A_n-R_1(X,Y)]/\mathcal{D}_{5n}$ and $E_{n,1}/\mathcal{D}_{5n}$ as a linear combination of elements from the set $\mathcal{F}:=\cup_{i=1}^{k_*}\cup_{\eta=0}^{2}\mathcal{F}_{i,\eta}$ in which
\begin{align*}
    \mathcal{F}_{i,0}:=&~\Big\{\frac{\partial \bsigma}{\partial \beta_1^{(u)}}(X,\boi)F(Y;X,\omega^*_i):u\in[d]\Big\}
    \cup\Big\{\bsigma(X,\boi)F(Y;X,\omega^*_i)\Big\}
    \cup\Big\{F(Y;X,\omega^*_i)\Big\},\\
    \mathcal{F}_{i,1}:=&~\Big\{X^{(u)}\cdot F^{(1)}(Y;X,\omega^*_i):u\in[d]\Big\}
    \cup\Big\{F^{(1)}(Y;X,\omega^*_i)\Big\},\\
    \mathcal{F}_{i,2}:=&~\Big\{F^{(2)}(Y;X,\omega^*_i)\Big\},
\end{align*}
where $\omega^*_i:=(\boi,\tau^*,\ai,\bi,\vi)$, for any $i\in[k_*]$. Subsequently, we apply the first-order Taylor expansion to $B_n$ as follows:
\begin{align*}
    B_n:&=\sum_{i=1}^{k_*}\exp\Big(\frac{\bzin}{\tau^n}\Big)\sum_{|\gamma|=1}\frac{1}{\gamma!}(\dboin)^{\gamma_1}(\dtn)^{\gamma_2}\cdot\frac{\partial H}{\partial \beta_1^{\gamma_1}~\partial\tau^{\gamma_2}}(Y;X,\boi,\tau^*) + R_2(X,Y),
\end{align*}
where $R_2(X,Y)$ is a Taylor remainder such that $R_2(X,Y)/\mathcal{D}_{5n}\to0$ as $n\to\infty$. 
Then, the term $B_n$ can be represented as
\begin{align}
    \label{eq:modified_first_order_2}
    B_n&=\sum_{i=1}^{k_*}C_{n,0,i}(X)H(Y;X,\boi,\tau^*)+R_2(X,Y),
\end{align}
where $C_{n,0,i}(X)$ is defined in equation~\eqref{eq:modified_coefficients_exact}.  
Therefore, the terms $[B_n-R_2(X,Y)]/\mathcal{D}_{5n}$ and $E_{n,2}/\mathcal{D}_{5n}$ can be treated as a linear combination of elements from the set $\mathcal{H}:=\cup_{i=1}^{k_*}\mathcal{H}_i$, where we define for $i\in[k_*]$ that
\begin{align*}
    \mathcal{H}_i:=&~\Big\{\frac{\partial\bsigma}{\partial \beta_1}(X,\boi)H(Y;X,\boi,\tau^*):u\in[d]\Big\}
    \cup\Big\{\bsigma(X,\boi)H(Y;X,\boi,\tau^*)\Big\}
    \cup\Big\{H(Y;X,\boi,\tau^*)\Big\}.
\end{align*}
\textbf{Step 2.} In this step, we prove by contradiction that at least one among the coefficients in the representations of $[A_n-R_1(X,Y)]/\mathcal{D}_{5n}$, $[B_n-R_2(X,Y)]/\mathcal{D}_{5n}$, $E_{n,1}/\mathcal{D}_{5n}$ and $E_{n,2}/\mathcal{D}_{5n}$ does not converge to zero when $n\to\infty$. Assume by contrary that all of them go to 0 as $n\to\infty$. In the term $E_{n,1}/\mathcal{D}_{5n}$, by taking the summation of the absolute values of the coefficients of $F(Y;X,\omega^*_i)$, we get that
\begin{align}
    \label{eq:limit_bias_exact}
    \frac{1}{\mathcal{D}_{5n}}\cdot\sum_{i=1}^{k_*}\Big|\exp\Big(\frac{\bzin}{\tau^n}\Big)-\exp\Big(\frac{\bzi}{\tau^*}\Big)\Big|\to0.
\end{align}
Next, by taking the summation of the absolute values of the coefficients associated with 
\begin{itemize}
    \item $\dfrac{\partial \bsigma}{\partial \beta_1^{(u)}}(X,\boi)F(Y;X,\omega^*_i)$ in $\mathcal{F}_{0}$: we have that  $\frac{1}{\mathcal{D}_{5n}}\cdot\sum_{i=1}^{k_*}\exp\Big(\frac{\bzin}{\tau^n}\Big)\|\dboin\|_1\to0$; 
    \item $\bsigma(X,\boi)F(Y;X,\omega^*_i)$ in $\mathcal{F}_{0}$: we have that $\frac{1}{\mathcal{D}_{5n}}\cdot\sum_{i=1}^{k_*}\exp\Big(\frac{\bzin}{\tau^n}\Big)|\dtn|\to0$;
    \item $X^{(u)}F^{(1)}(Y;X,\omega^*_i)$ in $\mathcal{F}_{1}$: we have that $\frac{1}{\mathcal{D}_{5n}}\cdot\sum_{i=1}^{k_*}\exp\Big(\frac{\bzin}{\tau^n}\Big)\|\dain\|_1\to0$;
    \item $F^{(1)}(Y;X,\omega^*_i)$ in $\mathcal{F}_{1}$: we have that $\frac{1}{\mathcal{D}_{5n}}\cdot\sum_{i=1}^{k_*}\exp\Big(\frac{\bzin}{\tau^n}\Big)|\dbin|\to0$;
    \item $F^{(2)}(Y;X,\omega^*_i)$ in $\mathcal{F}_{2}$: we have that $\frac{1}{\mathcal{D}_{5n}}\cdot\sum_{i=1}^{k_*}\exp\Big(\frac{\bzin}{\tau^n}\Big)|\dvin|\to0$;
\end{itemize}
Due to the topological equivalence between $\ell_1$-norm and $\ell_2$-norm, it follows that
\begin{align}
    \label{eq:limit_parameter_exact}
        \frac{1}{\mathcal{D}_{5n}}\cdot\sum_{i=1}^{k_*}\exp\Big(\frac{\bzin}{\tau^n}\Big)\Big(\|\dboin\|+|\dtn|+\|\dain\|+|\dbin|+|\dvin|\Big)\to0.
\end{align}
Putting the results in equations~\eqref{eq:limit_bias_exact} and \eqref{eq:limit_parameter_exact} and the formulation of the loss $\mathcal{D}_{5n}$ in equation~\eqref{eq:new_loss_modified_exact} together, we deduce that $1=\mathcal{D}_{5n}/\mathcal{D}_{5n}\to0$ as $n\to\infty$, which is a contradiction. Consequently, not all the coefficients in the representations of $[A_n-R_1(X,Y)]/\mathcal{D}_{5n}$, $[B_n-R_2(X,Y)]/\mathcal{D}_{5n}$, $E_{n,1}/\mathcal{D}_{5n}$ and $E_{n,2}/\mathcal{D}_{5n}$ converge to zero when $n\to\infty$.

\textbf{Step 3.} In this step, we utilize the Fatou's lemma to demonstrate a result contradicting to that in Step 2. In particular, let us denote $m_n$ as the maximum of the absolute values of the coefficients in the representations of $[A_n-R_1(X,Y)]/\mathcal{D}_{5n}$, $[B_n-R_2(X,Y)]/\mathcal{D}_{5n}$, $E_{n,1}/\mathcal{D}_{5n}$ and $E_{n,2}/\mathcal{D}_{5n}$. From the conclusion of Step 2, we know that $1/m_n\not\to\infty$. Next, we denote
\begin{align*}
    \frac{1}{m_n\mathcal{D}_{5n}}\cdot\Big[\exp\Big(\frac{\bzin}{\tau^n}\Big)-\exp\Big(\frac{\bzi}{\tau^*}\Big)\Big]&\to\phi_{0,i},\qquad 
    \frac{1}{m_n\mathcal{D}_{5n}}\cdot\exp\Big(\frac{\bzin}{\tau^n}\Big)(\dboin)^{(u)}\to\phi^{(u)}_{1,i},\\
    \frac{1}{m_n\mathcal{D}_{5n}}\cdot\exp\Big(\frac{\bzin}{\tau^n}\Big)(\dtn)&\to\phi_{2,i},\qquad
    \frac{1}{m_n\mathcal{D}_{5n}}\cdot\exp\Big(\frac{\bzin}{\tau^n}\Big)(\dain)^{(u)}\to\phi^{(u)}_{3,i},\\
    \frac{1}{m_n\mathcal{D}_{5n}}\cdot\exp\Big(\frac{\bzin}{\tau^n}\Big)(\dbin)&\to\phi_{4,i},\qquad
    \frac{1}{m_n\mathcal{D}_{5n}}\cdot\exp\Big(\frac{\bzin}{\tau^n}\Big)(\dvin)\to\phi_{5,i},
\end{align*}
as $n\to\infty$ for any $u\in[d]$ and $i\in[k_*]$. Note that at least one among the terms $\phi_{0,i},\phi^{(u)}_{1,i},\phi_{2,i},\phi^{(u)}_{3,i},\phi_{4,i}$ and $\phi_{5,i}$ is different from zero. By means of the Fatou's lemma, we have that
\begin{align*}
    \lim_{n\to\infty}\frac{ \bbE_X[V(p_{G_n}(\cdot|X),p_{G_*}(\cdot|X))]}{m_n\mathcal{D}_{5n}}\geq\int\liminf_{n\to\infty}\frac{|p_{G_n}(Y|X)-p_{G_*}(Y|X)|}{2m_n\mathcal{D}_{5n}}\dint(X,Y).
\end{align*}
Recall from equation~\eqref{eq:ratio_modified_exact} that the limit the left hand side is equal to zero, which implies that $\frac{|p_{G_n}(Y|X)-p_{G_*}(Y|X)|}{m_n\mathcal{D}_{5n}}\to0$. 
as $n\to\infty$ for almost surely $(X,Y)$. Thus, we also have that $\frac{Q_n}{m_n\mathcal{D}_{5n}}\to0$ as $n\to\infty$. On the other hand, we have
\begin{align*}
    \frac{Q_n}{m_n\mathcal{D}_{5n}}\to\sum_{i=1}^{k_*}\sum_{\eta=0}^{2}C^*_{\eta,i}(X)\cdot F^{(\eta)}(Y;X,\omega^*_i)-\sum_{i=1}^{k_*}C^*_{0,i}(X)\cdot H(Y;X,\boi,\tau^*),
\end{align*}
for almost surely $(X,Y)$, where we define
\begin{align}
    \label{eq:first_limit_exact}
    C^*_{0,i}(X)&:=\phi_{0,i}+\sum_{u=1}^{d}\frac{\phi^{(u)}_{1,i}}{\tau^*}\cdot \dfrac{\partial \bsigma}{\partial \beta_1^{(u)}}(X,\boi)-\phi_{2,i}\cdot\frac{\bsigma(X,\boi)}{(\tau^*)^2},\\
    \label{eq:second_limit_exact}
    C^*_{1,i}(X)&:=\sum_{u=1}^{d}\phi_{3,i}^{(u)}\cdot X^{(u)}+\phi_{4,i},\\
    \label{eq:third_limit_exact}
    C^*_{2,i}(X)&:=\frac{1}{2}\phi_{5,i},
\end{align}
for any $i\in[k_*]$. As a result, we achieve that
\begin{align*}
    \sum_{i=1}^{k_*}\sum_{\eta=0}^{2}C^*_{\eta,i}(X)\cdot F^{(\eta)}(Y;X,\omega^*_i)-\sum_{i=1}^{k_*}C^*_{0,i}(X)\cdot H(Y;X,\boi,\tau^*)=0,
\end{align*}
for almost surely $(X,Y)$. Since the following set is linearly independent w.r.t $Y$:
\begin{align*}
    \left\{F^{(\eta)}(Y;X,\omega^*_i), \ H(Y;X,\boi,\tau^*):0\leq\eta\leq 2, \  i\in[k_*]\right\},
\end{align*}
it leads to $C^*_{\eta,i}(X)=0$ for any $0\leq\eta\leq 2$ and $i\in[k_*]$ for almost surely $X$. 
\begin{itemize}
    \item When $C^*_{0,i}(X)=0$ for almost surely $X$: as the function $\sigma$ satisfies the conditions in Definition~\ref{def:modified_function_exact}, i.e. the set $$\Big\{\frac{\partial\bsigma}{\partial \beta_1^{(u)}}(X,\boi), \ \bsigma(X,\boi), \ 1:1\leq u\leq d\Big\}$$ is linearly independent w.r.t $X$, it follows from equation~\eqref{eq:first_limit_exact} that $\phi_{0,i}=\phi^{(u)}_{1,i}=\phi_{2,i}=0$ for any $u\in[d]$ and $i\in[k_*]$. 
    \item When $C^*_{1,i}(X)=0$ for almost surely $X$: since the set $\{X^{(u)},1:u\in[d]\}$ is linearly independent w.r.t $X$, equation~\eqref{eq:second_limit_exact} indicates that $\phi^{(u)}_{3,i}=\phi_{4,i}=0$ for any $u\in[d]$ and $i\in[k_*]$. 
    \item When $C^*_{2,i}(X)=0$ for almost surely $X$: it can be seen from equation~\eqref{eq:third_limit_exact} that $\phi_{5,i}=0$ for any $i\in[k_*]$.
\end{itemize}
However, the above results contradict the fact that at least one among the terms $\phi_{0,i},\phi^{(u)}_{1,i},\phi_{2,i},\phi^{(u)}_{3,i},\phi_{4,i}$ and $\phi_{5,i}$ is non-zero. Hence, we reach the conclusion of the local part in equation~\eqref{eq:local_modified_exact}, which means that there exists a constant $\varepsilon'>0$ such that
\begin{align*}
    \inf_{G\in\mathcal{E}_{k_*}(\Theta):\mathcal{D}_5(G,G_*)\leq\varepsilon'}\bbE_X[V(p_{G}(\cdot|X),p_{G_*}(\cdot|X))]/\mathcal{D}_5(G,G_*)>0.
\end{align*}
\textbf{Global part.} As a consequence, it suffices to demonstrate the following inequality:
\begin{align}
    \label{global_modified_exact}
    \inf_{G\in\mathcal{E}_{k_*}(\Theta):\mathcal{D}_5(G,G_*)>\varepsilon'}\bbE_X[V(p_{G}(\cdot|X),p_{G_*}(\cdot|X))]/\mathcal{D}_5(G,G_*)>0.
\end{align}
Assume by contrary that the above claim does not hold true, then we can seek a sequence of mixing measures $G'_n\in\mathcal{E}_{k_*}(\Omega)$ such that $\mathcal{D}_5(G'_n,G_*)>\varepsilon'$ and
\begin{align*}
    \lim_{n\to\infty}\frac{\bbE_X[V(p_{G'_n}(\cdot|X),p_{G_*}(\cdot|X))]}{\mathcal{D}_5(G'_n,G_*)}=0,
\end{align*}
which directly implies that $\bbE_X[V(p_{G'_n}(\cdot|X),p_{G_*}(\cdot|X))]\to0$ as $n\to\infty$. Recall that $\Theta$ is a compact set, therefore, we can replace the sequence $G'_n$ by one of its subsequences that converges to a mixing measure $G'\in\mathcal{E}_{k_*}(\Omega)$. Since $\mathcal{D}_5(G'_n,G_*)>\varepsilon'$, this result induces that $\mathcal{D}_5(G',G_*)>\varepsilon'$. 

Next, by invoking the Fatou's lemma, it follows that
\begin{align*}
    0=\lim_{n\to\infty}\bbE_X[2V(p_{G'_n}(\cdot|X),p_{G_*}(\cdot|X))]\geq \int\liminf_{n\to\infty}\Big|p_{G'_n}(Y|X)-p_{G_*}(Y|X)\Big|~\dint(X,Y).
\end{align*}
Thus, we get that $p_{G'}(Y|X)=p_{G_*}(Y|X)$ for almost surely $(X,Y)$. From Proposition~\ref{prop:modified_identifiability}, we know that the model~\eqref{eq:modified_density} is identifiable, which indicates that $G'\equiv G_*$. As a consequence, we have that $\mathcal{D}_5(G',G_*)=0$, contradicting the fact that $\mathcal{D}_5(G',G_*)>\varepsilon'>0$. 

Hence, the proof is completed.


\subsection{Proof of Theorem~\ref{theorem:modified_over}}
\label{appendix:modified_over}

Similar to the proof of Theorem~\ref{theorem:modified_over} in Appendix~\ref{appendix:modified_exact}, we aim to prove the following inequality:
\begin{align}
    \label{eq:modified_inequality_over}
    \inf_{G\in\mathcal{G}_{k}(\Theta)}\bbE_X[V(p_{G}(\cdot|X),p_{G_*}(\cdot|X))]/\mathcal{D}_6(G,G_*)>0.
\end{align}
Moreover, we also divide the above inequality into local and global parts. Since the global part can be argued in the same fashion as in Appendix~\ref{appendix:modified_exact}, we will demonstrate only the local part, that is
\begin{align}
    \label{eq:local_modified_over}
    \lim_{\varepsilon\to0}\inf_{G\in\mathcal{G}_{k}(\Theta):\mathcal{D}_6(G,G_*)\leq\varepsilon}\bbE_X[V(p_{G}(\cdot|X),p_{G_*}(\cdot|X))]/\mathcal{D}_6(G,G_*)>0.
\end{align}
Assume by contrary that the above claim does not hold true, then we can find a sequence of mixing measures $G_n=\sum_{i=1}^{k_*}\exp(\beta^n_{0i}/\tau^n)\delta_{(\beta^n_{1i},\tau^n,a^n_i,b^n_i,\nu^n_i)}$ in $\mathcal{G}_{k}(\Theta)$ that satisfies $\mathcal{D}_{6n}:=\mathcal{D}_6(G_n,G_*)\to0$ and
\begin{align}
    \label{eq:ratio_modified_over}
    \bbE_X[V(p_{G_n}(\cdot|X),p_{G_*}(\cdot|X))]/\mathcal{D}_{6n}\to0,
\end{align}
as $n\to\infty$. Let us denote $\mathcal{A}^n_j=\mathcal{A}_j(G_n)$, then the loss function $\mathcal{D}_{6n}$ is reduced to
\begin{align}
    \label{eq:new_loss_modified_over}
    &\mathcal{D}_{6n}:=\sum_{j:|\mathcal{A}_j|>1}\sum_{i\in\mathcal{A}_j}\exp\Big(\frac{\bzin}{\tau^n}\Big)\Big[\|\dboijn\|^2+|\dtn|^2+\|\daijn\|^2+|\dbijn|^{\brj}+|\dvijn|^{\brj/2}\Big]\nonumber\\
    &+\sum_{j:|\mathcal{A}_j|=1}\sum_{i\in\mathcal{A}_j}\exp\Big(\frac{\bzin}{\tau^n}\Big)\Big[\|\dboijn\|+|\dtn|+\|\daijn\|+|\dbijn|+|\dvijn|\Big]+\sum_{j=1}^{k_*}\Big|\sum_{i\in\mathcal{A}_j}\exp\Big(\frac{\bzin}{\tau^n}\Big)-\exp\Big(\frac{\bzj}{\tau^*}\Big)\Big|
\end{align}
As $\mathcal{D}_{6n}\to0$, we get that $(\boin,\tau^n,\ain,\bin,\vin)\to(\boj,\tau^*,\aj,\bj,\vj)$ and $\sum_{i\in\mathcal{A}_j}\exp(\bzin/\tau^n)\to\exp(\bzj/\tau^*)$ as $n\to\infty$ for any $i\in\mathcal{A}_j$ and $j\in[k_*]$. Now, we divide the proof of local part into three steps as follows:

\textbf{Step 1.} In this step, we decompose the quantity $Q_n:=\Big[\sum_{j=1}^{k_*}\exp\Big(\dfrac{\sigma((\boj)^{\top}X)+\bzj}{\tau}\Big)\Big]\cdot[p_{G_n}(Y|X)-p_{G_*}(Y|X)]$ into a linear combination of linearly independent terms. Firstly, let $\bsigma(X,w):=\sigma(w^{\top}X)$, $F(Y;X,\beta_{1},\tau,a,b,\nu):=\exp\Big(\frac{\bsigma(X,\beta_1)}{\tau}\Big)f(Y|a^{\top}X+b,\nu)$ and $H(Y;X,\beta_{1},\tau):=\exp\Big(\frac{\bsigma(X,\beta_1)}{\tau}\Big)p_{G_n}(Y|X)$. Then, it can be verified that
\begin{align}
    \label{eq:Q_n_formulation}
    Q_n&=\sum_{j=1}^{k_*}\sum_{i\in\mathcal{A}_j}\exp\Big(\frac{\bzin}{\tau^n}\Big)\Big[F(Y;X,\boin,\tau^n,\ain,\bin,\vin)-F(Y;X,\boj,\tau^*,\aj,\bj,\vj)\Big]\nonumber\\
    &-\sum_{j=1}^{k_*}\sum_{i\in\mathcal{A}_j}\exp\Big(\frac{\bzin}{\tau^n}\Big)\Big[H(Y;X,\boin,\tau^n)-H(Y;X,\boj,\tau^*)\Big]\nonumber\\
    &+\sum_{j=1}^{k_*}\Big[\sum_{i\in\mathcal{A}_j}\exp\Big(\frac{\bzin}{\tau^n}\Big)-\exp\Big(\frac{\bzi}{\tau^*}\Big)\Big]\cdot\exp(\bsigma(X,\boj))f(Y|(\aj)^{\top}X+\bj,\vj)\nonumber\\
    &-\sum_{j=1}^{k_*}\Big[\sum_{i\in\mathcal{A}_j}\exp\Big(\frac{\bzin}{\tau^n}\Big)-\exp\Big(\frac{\bzi}{\tau^*}\Big)\Big]\cdot\exp(\bsigma(X,\boj))p_{G_n}(Y|X)\nonumber\\
    :&=A_n-B_n+E_{n,1}-E_{n,2}.
\end{align}
Next, we continue to separate $A_n$ into two terms as follows:
\begin{align*}
    A_n:&=\sum_{j:|\mathcal{A}_j|=1}\sum_{i\in\mathcal{A}_j}\exp\Big(\frac{\bzin}{\tau^n}\Big)\Big[F(Y;X,\boin,\tau^n,\ain,\bin,\vin)-F(Y;X,\boj,\tau^*,\aj,\bj,\vj)\Big]\\
    &+\sum_{j:|\mathcal{A}_j|>1}\sum_{i\in\mathcal{A}_j}\exp\Big(\frac{\bzin}{\tau^n}\Big)\Big[F(Y;X,\boin,\tau^n,\ain,\bin,\vin)-F(Y;X,\boj,\tau^*,\aj,\bj,\vj)\Big]\\
    :&=A_{n,1}+A_{n,2}.
\end{align*}
Let us denote
\begin{align*}
    F^{(\eta)}(Y;X,\boj,\tau^*,\aj,\bj,\vj):=\exp\Big(\frac{\bsigma(X,\boj)}{\tau^*}\Big)\frac{\partial^{\eta}f}{\partial h_1^{\eta}}(Y|(\aj)^{\top}X+\bj,\vj),
\end{align*}
for any $\eta\in\mathbb{N}$. Then, by applying the first-order Taylor expansion as in equation~\eqref{eq:modified_first_order_exact}, the term $A_{n,1}$ can be decomposed as
\begin{align}
    \label{eq:A_1_over}
    A_{n,1}&=\sum_{j:|\mathcal{A}_j|=1}\sum_{i\in\mathcal{A}_j}\exp\Big(\frac{\bzin}{\tau^n}\Big)\sum_{|\alpha|=1}\frac{1}{\alpha!}(\dboijn)^{\alpha_1}(\dtn)^{\alpha_2}(\daijn)^{\alpha_3}(\dbijn)^{\alpha_4}(\dvijn)^{\alpha_5}\nonumber\\
    &\hspace{5cm}\times \frac{\partial F}{\partial\beta_{1}^{\alpha_1}~\partial\tau^{\alpha_2}~\partial a^{\alpha_3}~\partial b^{\alpha_4}~\partial \nu^{\alpha_5}}(Y;X,\boj,\tau^*,\aj,\bj,\vj) + R_1(X,Y)\nonumber\\
    &=\sum_{j:|\mathcal{A}_j|=1}\sum_{\eta=0}^{2}C_{n,\eta,j}(X)F^{(\eta)}(Y;X,\boj,\tau^*,\aj,\bj,\vj)+R_1(X,Y),\\
\end{align}
where $R_1(X,Y)$ is a Taylor remainder such that $R_1(X,Y)/\mathcal{D}_{6n}\to0$ as $n\to\infty$ and 
\begin{align}
    C_{n,0,j}(X)&:=\sum_{i\in\mathcal{A}_j}\exp\Big(\frac{\bzin}{\tau^n}\Big)\Big[\frac{\dboijn}{\tau^*}\cdot \frac{\partial\bsigma}{\partial \beta_1^{(u)}}(X,\boj)-\frac{\dtn}{(\tau^*)^2}\bsigma(X,\boj)\Big],\nonumber\\
    C_{n,1,j}(X)&:=\sum_{i\in\mathcal{A}_j}\exp\Big(\frac{\bzin}{\tau^n}\Big)\Big[\daijn\cdot X^{(u)}+\dbijn\Big],\nonumber\\
    \label{eq:modified_coefficients_over}
    C_{n,2,j}(X)&:=\sum_{i\in\mathcal{A}_j}\exp\Big(\frac{\bzin}{\tau^n}\Big)\cdot\frac{\dvijn}{2},
\end{align}
for any $j\in[k_*]:|\mathcal{A}_j|=1$.
Meanwhile, for each $j:|\mathcal{A}_j|>1$, by means of the Taylor expansion of order $\brj$, we can rewrite $A_{n,2}$ as 
\begin{align*}
    A_{n,2}&=\sum_{j:|\mathcal{A}_j|>1}\sum_{i\in\mathcal{A}_j}\exp\Big(\frac{\bzin}{\tau^n}\Big)\sum_{|\alpha|=1}^{\brj}\frac{1}{\alpha!}(\dboijn)^{\alpha_1}(\dtn)^{\alpha_2}(\daijn)^{\alpha_3}(\dbijn)^{\alpha_4}(\dvijn)^{\alpha_5}\\
    &\hspace{5cm}\times \frac{\partial^{|\alpha_1|+\alpha_2+|\alpha_3|+\alpha_4+\alpha_5} F}{\partial\beta_{1}^{\alpha_1}~\partial\tau^{\alpha_2}~\partial a^{\alpha_3}~\partial b^{\alpha_4}~\partial \nu^{\alpha_5}}(Y;X,\boj,\tau^*,\aj,\bj,\vj) + R_2(X,Y),\\
    &=\sum_{j:|\mathcal{A}_j|>1}\sum_{i\in\mathcal{A}_j}\exp\Big(\frac{\bzin}{\tau^n}\Big)\sum_{|\alpha|=1}^{\brj}\frac{1}{\alpha!}(\dboijn)^{\alpha_1}(\dtn)^{\alpha_2}(\daijn)^{\alpha_3}(\dbijn)^{\alpha_4}(\dvijn)^{\alpha_5}\\
    &\hspace{2.2cm}\times \frac{X^{\alpha_3}}{2^{\alpha_5}}\cdot\frac{\partial^{|\alpha_1|+\alpha_2}L}{\partial\beta_1^{\alpha_1}\partial\tau^{\alpha_2}}(X,\boj,\tau^*)\cdot\frac{\partial^{|\alpha_3|+\alpha_4+2\alpha_5}f}{\partial h_1^{|\alpha_3|+\alpha_4+2\alpha_5}}(Y;X,\boj,\tau^*,\aj,\bj,\vj)+R_2(X,Y).
\end{align*}
where $L(X,\beta_1,\tau):=\exp\Big(\frac{\bsigma(X,\beta_1)}{\tau}\Big)$ and  $R_2(X,Y)$ is a Taylor remainder such that $R_2(X,Y)/\mathcal{D}_{6n}\to0$ as $n\to\infty$. For each $j:|\mathcal{A}_j|>1$, by letting $|\alpha_1|+\alpha_2=s$, where $0\leq s\leq \brj$, then we have $1-s\leq |\alpha_3|+\alpha_4+\alpha_5\leq \brj-s$. Next, we denote $\alpha_4+2\alpha_5=\ell$, where $0\leq\ell\leq 2(\brj-s-|\alpha_3|)$. Then, $A_{n,2}$ can be represented as
\begin{align}
    \label{eq:A_2_over}
    A_{n,2}&=\sum_{j:|\mathcal{A}_j|>1}\sum_{s=0}^{\brj}\sum_{|\alpha_3|=0}^{\brj-s}\sum_{\ell=0}^{2(\brj-s-|\alpha_3|)}T_{n,s,\alpha_3,\ell,j}(X) \cdot X^{\alpha_3}{F}^{(|\alpha_3|+\ell)}(Y;X,\boj,\tau^*,\aj,\bj,\vj)+R_2(X,Y),
\end{align}
where
\begin{align*}
    T_{n,s,\alpha_3,\ell,j}(X)&:=\sum_{\substack{\alpha_4+2\alpha_5=\ell, \\ 1-s\leq\alpha_4+\alpha_5\leq \brj-s}}\sum_{i\in\mathcal{A}_j}\exp\Big(\frac{\bzin}{\tau^n}\Big)\frac{1}{2^{\alpha_5}\alpha_3!\alpha_4!\alpha_5!}(\daijn)^{\alpha_3}(\dbijn)^{\alpha_4}(\dvijn)^{\alpha_5}\\
&\hspace{2cm}\times \Big[\sum_{|\alpha_1|+\alpha_2=s}\frac{1}{\alpha_1!\alpha_2!}(\dboijn)^{\alpha_1}(\dtn)^{\alpha_2}\cdot\frac{\partial^{|\alpha_1|+\alpha_2}L}{\partial\beta_1^{\alpha_1}\partial\tau^{\alpha_2}}(X,\boj,\tau^*)\cdot\frac{1}{L(X,\boj,\tau^*)}\Big].
\end{align*}
Now, we provide the explicit formulations of $T_{n,s,\alpha_3,\ell,j}(X)$ for $s\in\{0,1,2\}$. First, the term $T_{n,0,\alpha_3,\ell,j}(X)$ is given by:

\begin{align*}
    T_{n,0,\alpha_3,\ell,j}(X):=\sum_{\substack{\alpha_4+2\alpha_5=\ell, \\ 1\leq\alpha_4+\alpha_5\leq \brj}}\sum_{i\in\mathcal{A}_j}\exp\Big(\frac{\bzin}{\tau^n}\Big)\frac{1}{2^{\alpha_5}\alpha_3!\alpha_4!\alpha_5!}(\daijn)^{\alpha_3}(\dbijn)^{\alpha_4}(\dvijn)^{\alpha_5}
\end{align*}
For the term $T_{n,1,\alpha_3,\ell,j}(X)$, let us derive the first derivatives of function $L$ w.r.t its parameters as
\begin{align*}
     \frac{\partial L}{\partial\beta_{1}^{(u)}}(X,\boj,\tau^*)&=\frac{1}{\tau^*}\cdot\frac{\partial\bsigma}{\partial \beta_1^{(u)}}(X,\boj)L(X,\boj,\tau^*),\nonumber\\
    \frac{\partial L}{\partial\tau}(X,\boj,\tau^*)&=-\frac{\bsigma(X,\boj)}{(\tau^*)^2}L(X,\boj,\tau^*).
\end{align*}
Thus, the formulation of $T_{n,1,\alpha_3,\ell,j}(X)$ reads as
\begin{align*}
T_{n,1,\alpha_3,\ell,j}(X)&=\sum_{\substack{\alpha_4+2\alpha_5=\ell, \\ 1-1\leq\alpha_4+\alpha_5\leq \brj-1}}\sum_{i\in\mathcal{A}_j}\exp\Big(\frac{\bzin}{\tau^n}\Big)\frac{1}{2^{\alpha_5}\alpha_3!\alpha_4!\alpha_5!}(\daijn)^{\alpha_3}(\dbijn)^{\alpha_4}(\dvijn)^{\alpha_5}\\
&\hspace{4cm} \times\Big[\sum_{u=1}^{d}\frac{(\dboijn)^{(u)}}{\tau^*}\cdot \frac{\partial\bsigma}{\partial \beta_1^{(u)}}(X,\boj)-\frac{(\dtn)}{(\tau^*)^2}\cdot\bsigma(X,\boj)\Big].
\end{align*}
Similarly, for the term $T_{n,2,\alpha_3,\ell,j}$, we derive the second derivatives of function $L$ w.r.t its parameters as
\begin{align*}
    \frac{\partial^2L}{\partial\beta_1^{(u)}\partial\beta_1^{(v)}}(X,\boj,\tau^*)&=\frac{1}{\tau^*}\cdot \frac{\partial^2\sigma}{\partial \beta_1^{(u)}\partial\beta_1^{(v)}}((\boj)^{\top}X)L(X,\boj,\tau^*) \\
    &\hspace{3cm}+\frac{1}{(\tau^*)^2}\cdot \Big[\frac{\partial\bsigma}{\partial \beta_1^{(u)}}(X,\boj)\Big]\Big[\frac{\partial\sigma}{\partial \beta_1^{(v)}}((\boj)^{\top}X)\Big]L(X,\boj,\tau^*),\\
    \frac{\partial^2L}{\partial\tau^2}(X,\boj,\tau^*)&=\frac{2}{(\tau^*)^3}\cdot\bsigma(X,\boj)L(X,\boj,\tau^*) + \frac{1}{(\tau^*)^4}\cdot\bsigma^2(X,\boj)L(X,\boj,\tau^*),\\
    \frac{\partial^2L}{\partial\beta_1^{(u)}\partial \tau}(X,\boj,\tau^*)&=-\frac{1}{(\tau^*)^2}\cdot \frac{\partial\bsigma}{\partial \beta_1^{(u)}}(X,\boj)L(X,\boj,\tau^*) \\
    &\hspace{5cm}-\frac{1}{(\tau^*)^3}\cdot \bsigma(X,\boj)\frac{\partial\bsigma}{\partial \beta_1^{(u)}}(X,\boj)L(X,\boj,\tau^*).
\end{align*}
Then, $T_{n,2,\alpha_3,\ell,j}$ can be written as
\begin{align*}
    &T_{n,2,\alpha_3,\ell,j}(X):=\sum_{\substack{\alpha_4+2\alpha_5=\ell, \\ 0\leq\alpha_4+\alpha_5\leq \brj-2}}\sum_{i\in\mathcal{A}_j}\exp\Big(\frac{\bzin}{\tau^n}\Big)\frac{1}{2^{\alpha_5}\alpha_3!\alpha_4!\alpha_5!}(\daijn)^{\alpha_3}(\dbijn)^{\alpha_4}(\dvijn)^{\alpha_5}\\
    &\times\Bigg\{\sum_{1\leq u,v\leq d}\frac{(\dboijn)^{(u)}(\dboijn)^{(v)}}{1+\mathbf{1}_{\{u=v\}}}\Big[\frac{1}{\tau^*}\cdot \frac{\partial^2\bsigma}{\partial\beta_1^{(u)}\partial\beta_1^{(v)}}+\frac{1}{(\tau^*)^2}\cdot X^{(u)}X^{(v)}\Big(\frac{\partial\sigma}{\partial g}((\boj)^{\top}X)\Big)^2\Big]\nonumber\\
    &\hspace{6cm}+\frac{1}{2}(\dtn)^2\Big[\frac{2}{(\tau^*)^3}\cdot\bsigma(X,\boj)+\frac{1}{(\tau^*)^4}\cdot \bsigma^2(X,\boj)\Big]\nonumber\\
    &\hspace{2cm}-\sum_{u=1}^{d}(\dboijn)^{(u)}(\dtn)\Big[\frac{1}{(\tau^*)^2}\cdot \frac{\partial\bsigma}{\partial\beta_1^{(u)}}+\frac{1}{(\tau^*)^3}\cdot \bsigma(X,\boj)\frac{\partial\bsigma}{\partial \beta_1^{(u)}}(X,\boj)\Big]\Bigg\}.
\end{align*}
Thus, the term $[A_n-R_1(X,Y)-R_2(X,Y)]/\mathcal{D}_{6n}$ can be viewed as a linear combination of elements from the union of the following sets:
\begin{align*}
    \mathcal{F}_{1,0}&:=\Big\{\frac{\partial \bsigma}{\partial \beta_1^{(u)}}(X,\boj)F(Y;X,\omega^*_j):u\in[d], \ j:|\mathcal{A}_j|=1\Big\}
    \cup\Big\{\bsigma(X,\boj)F(Y;X,\omega^*_j): j:|\mathcal{A}_j|=1\Big\},\\
    \mathcal{F}_{1,1}&:=\Big\{X^{(u)}F^{(1)}(Y;X,\omega^*_j): u\in[d], \  j:|\mathcal{A}_j|=1\Big\}\cup\Big\{F^{(1)}(Y;X,\omega^*_j):  j:|\mathcal{A}_j|=1\Big\},\\
    \mathcal{F}_{1,2}&:=\Big\{F^{(2)}(Y;X,\omega^*_j):j:|\mathcal{A}_j|=1\Big\},\\
    \mathcal{F}_{2,0}&:=\Big\{X^{\alpha_3}F^{(|\alpha_3|+\ell)}(Y;X,\omega^*_j):j:|\mathcal{A}_j|>1,\ 0\leq|\alpha_3|\leq\brj, \ 0\leq\ell\leq 2(\brj-|\alpha_3|)\Big\},\\
    \mathcal{F}_{2,1}&:=\Big\{X^{\alpha_3}\cdot\frac{\partial\bsigma}{\partial \beta_1^{(u)}}(X,\boj)F^{(|\alpha_3|+\ell)}(Y;X,\omega^*_j), \ X^{\alpha_3}\cdot\bsigma(X,\boj)F^{(|\alpha_3|+\ell)}(Y;X,\omega^*_j):\\
    &\hspace{5cm}u\in[d], \ j:|\mathcal{A}_j|>1,\ 0\leq|\alpha_3|\leq\brj-1, \ 0\leq\ell\leq 2(\brj-1-|\alpha_3|)\Big\},\\
    \mathcal{F}_{2,2}&:=\Big\{X^{\alpha_3}\cdot\frac{\partial^2\sigma}{\partial \beta_1^{(u)}\partial\beta_1^{(v)}}((\boj)^{\top}X)F^{(|\alpha_3|+\ell)}(Y;X,\omega^*_j), \\
    &\qquad X^{\alpha_3}\cdot\Big[\frac{\partial\bsigma}{\partial \beta_1^{(u)}}(X,\boj)\Big]\Big[\frac{\partial\sigma}{\partial \beta_1^{(v)}}((\boj)^{\top}X)\Big]F^{(|\alpha_3|+\ell)}(Y;X,\omega^*_j),\\
    &\qquad X^{\alpha_3}\bsigma(X,\boj)F^{(|\alpha_3|+\ell)}(Y;X,\omega^*_j), \ X^{\alpha_3}\bsigma^2(X,\boj)F^{(|\alpha_3|+\ell)}(Y;X,\omega^*_j),\\
    &\qquad X^{\alpha_3}\frac{\partial \bsigma}{\partial \beta_1^{(u)}}(X,\boj)F^{(|\alpha_3|+\ell)}(Y;X,\omega^*_j), \ X^{\alpha_3}\bsigma(X,\boj)\frac{\partial \bsigma}{\partial \beta_1^{(u)}}(X,\boj)F^{(|\alpha_3|+\ell)}(Y;X,\omega^*_j):\\
    &\hspace{6cm} u,v\in[d], \ j:|\mathcal{A}_j|>1,\ 0\leq|\alpha_3|\leq\brj-2, \ 0\leq\ell\leq 2(\brj-2-|\alpha_3|)\Big\},\\
    \mathcal{F}_{2,s}&:=\Big\{\sum_{|\alpha_1|+\alpha_2=s}X^{\alpha_3}\cdot\frac{\partial^{|\alpha_1|+\alpha_2}L}{\partial\beta_1^{\alpha_1}\partial\tau^{\alpha_2}}(X,\boj,\tau^*)\cdot\frac{1}{L(X,\boj,\tau^*)}F^{(|\alpha_3|+\ell)}(Y;X,\omega^*_j):\\
    &\hspace{7cm} j:|\mathcal{A}_j|>1,\ 0\leq|\alpha_3|\leq\brj-s, \ 0\leq\ell\leq 2(\brj-s-|\alpha_3|)\Big\}.
\end{align*}
for any $s\geq 3$, where $\omega^*_j:=(\boj,\tau^*,\aj,\bj,\vj)$ for any $j\in[k_*]$. Additionally, it is also worth noting that $E_{n,1}/\mathcal{D}_{6n}$ can be seen as a linear combination of elements from the set $\{F(Y;X,\omega^*_j):j\in[k_*]\}$.

Similarly, we also decompose $B_n$ into two terms as follows:
\begin{align*}
    B_n&:=\sum_{j:|\mathcal{A}_j|=1}\sum_{i\in\mathcal{A}_j}\exp\Big(\frac{\bzin}{\tau^n}\Big)\Big[H(Y;X,\boin,\tau^n)-H(Y;X,\boj,\tau^*)\Big]\\
    &+\sum_{j:|\mathcal{A}_j|>1}\sum_{i\in\mathcal{A}_j}\exp\Big(\frac{\bzin}{\tau^n}\Big)\Big[H(Y;X,\boin,\tau^n)-H(Y;X,\boj,\tau^*)\Big]\\
    &:=B_{n,1}+B_{n,2}.
\end{align*}
Subsequently, we apply the first-order Taylor expansion to $B_{n,1}$ as in equation~\eqref{eq:modified_first_order_2}, and get that
\begin{align}
    \label{eq:B_1_over}
    B_{n,1}&=\sum_{j:|\mathcal{A}_j|=1}\sum_{i\in\mathcal{A}_j}\exp\Big(\frac{\bzin}{\tau^n}\Big)\sum_{|\gamma|=1}\frac{1}{\gamma!}(\dboijn)^{\gamma_1}(\dtn)^{\gamma_2}\cdot\frac{\partial H}{\partial \beta_1^{\gamma_1}~\partial\tau^{\gamma_2}}(Y;X,\boj,\tau^*) + R_3(X,Y)\nonumber\\
    &=\sum_{j:|\mathcal{A}_j|=1}C_{n,0,j}(X)\cdot H(Y;X,\boj,\tau^*)+R_3(X,Y),
\end{align}
where $C_{n,0,j}(X)$ is defined in equation~\eqref{eq:modified_coefficients_exact} and $R_3(X,Y)$ is a Taylor remainder such that $R_3(X,Y)/\mathcal{D}_{6n}\to0$ as $n\to\infty$. On the other hand, by means of the second-order Taylor expansion, we rewrite $B_{n,2}$ as
\begin{align}
    \label{eq:B_2_over}
    B_{n,2}&=\sum_{j:|\mathcal{A}_j|>1}\sum_{i\in\mathcal{A}_j}\exp\Big(\frac{\bzin}{\tau^n}\Big)\sum_{|\gamma|=1}^{2}\frac{1}{\gamma!}(\dboijn)^{\gamma_1}(\dtn)^{\gamma_2}\cdot\frac{\partial^{|\gamma_1|+\gamma_2} H}{\partial \beta_1^{\gamma_1}~\partial\tau^{\gamma_2}}(Y;X,\boj,\tau^*) + R_4(X,Y)\nonumber\\
    &=\sum_{j:|\mathcal{A}_j|>1}S_{n,j}(X)\cdot H(Y;X,\boj,\tau^*)+R_4(X,Y),
\end{align}
where $R_4(X,Y)$ is a Taylor remainder such that $R_4(X,Y)/\mathcal{D}_{6n}\to0$ as $n\to\infty$ and
\begin{align}
    \label{eq:modified_coefficients_over_2}
    S_{n,j}(X)&:=\sum_{i\in\mathcal{A}_j}\exp\Big(\frac{\bzin}{\tau^n}\Big)\Bigg\{\sum_{u=1}^{d}\frac{(\dboijn)^{(u)}}{\tau^*}\cdot \frac{\partial\bsigma}{\partial \beta_1^{(u)}}(X,\boj)-\frac{\dtn}{(\tau^*)^2}\cdot\bsigma(X,\boj)\nonumber\\
    &-\sum_{u=1}^{d}(\dboijn)^{(u)}(\dtn)\Big[\frac{1}{(\tau^*)^2}\cdot \frac{\partial\bsigma}{\partial\beta_1^{(u)}}+\frac{1}{(\tau^*)^3}\cdot \bsigma(X,\boj)\frac{\partial\bsigma}{\partial \beta_1^{(u)}}(X,\boj)\Big]\nonumber\\
    &+\sum_{1\leq u,v\leq d}\frac{(\dboijn)^{(u)}(\dboijn)^{(v)}}{1+\mathbf{1}_{\{u=v\}}}\Big[\frac{1}{\tau^*}\cdot \frac{\partial^2\bsigma}{\partial\beta_1^{(u)}\partial\beta_1^{(v)}}+\frac{1}{(\tau^*)^2}\cdot \frac{\partial\bsigma}{\partial\beta_1^{(u)}}(X,\boj)\frac{\partial\bsigma}{\partial\beta_1^{(v)}}(X,\boj)\Big]\nonumber\\
    &+\frac{1}{2}(\dtn)^2\Big[\frac{2}{(\tau^*)^3}\cdot\bsigma(X,\boj)+\frac{1}{(\tau^*)^4}\cdot \bsigma^2(X,\boj)\Big]\Bigg\},
\end{align}
for any $j:|\mathcal{A}_j|>1$. Therefore, the term $[B_n-R_3(X,Y)-R_4(X,Y)]/\mathcal{D}_{6n}$ can be treated as a linear combination of elements from the following set:
\begin{align*}
    \mathcal{H}&:=\Big\{\frac{\partial \bsigma}{\partial \beta_1^{(u)}}(X,\boj)H(Y;X,\boj,\tau^*), \ \bsigma(X,\boj)H(Y;X,\boj,\tau^*):u\in[d], j\in[k_*]\Big\}\\
    &~\cup\Big\{\bsigma(X,\boj)\frac{\partial \bsigma}{\partial \beta_1^{(u)}}(X,\boj)H(Y;X,\boj,\tau^*), \ \bsigma^2(X,\boj)H(Y;X,\boj,\tau^*):u\in[d], \ j:|\mathcal{A}_j|>1\Big\}\\
    &~\cup\Big\{\frac{\partial^2\bsigma}{\partial \beta_1^{(u)}\partial\beta_1^{(v)}}(X,\boj)H(Y;X,\boj,\tau^*), \ \Big[\frac{\partial\bsigma}{\partial \beta_1^{(u)}}(X,\boj)\Big]\Big[\frac{\partial\bsigma}{\partial \beta_1^{(u)}}(X,\boj)\Big]H(Y;X,\boj,\tau^*)\\
    &\hspace{12cm}:u,v\in[d], \ j:|\mathcal{A}_j|>1\Big\}.
\end{align*}
In addition, we can view the term $E_{n,2}/\mathcal{D}_{6n}$ as a linear combination of elements from the set $\{H(Y;X,\boj,\tau^*):j\in[k_*]\}$.

\textbf{Step 2.} In this step, we prove by contradiction that at least one among the coefficients in the representations of $[A_n-R_1(X,Y)-R_2(X,Y)]/\mathcal{D}_{6n}$, $[B_n-R_3(X,Y)-R_4(X,Y)]/\mathcal{D}_{6n}$, $E_{n,1}/\mathcal{D}_{6n}$ and $E_{n,2}/\mathcal{D}_{6n}$ does not converge to zero when $n\to\infty$. Assume that all of them go to 0 as $n\to\infty$. 
By using the same arguments for showing the results in equations~\eqref{eq:limit_bias_exact} and \eqref{eq:limit_parameter_exact}, we get that
\begin{align}
    \label{eq:limit_bias_over}
    \frac{1}{\mathcal{D}_{6n}}\cdot\sum_{j=1}^{k_*}\Big|\sum_{i\in\mathcal{A}_j}\exp\Big(\frac{\bzin}{\tau^n}\Big)-\exp\Big(\frac{\bzj}{\tau^*}\Big)\Big|\to0,
\end{align}
and
\begin{align}
    \label{eq:limit_parameter_over_1}
    \frac{1}{\mathcal{D}_{6n}}\cdot\sum_{j:|\mathcal{A}_j|=1}\sum_{i\in\mathcal{A}_j}\exp\Big(\frac{\bzin}{\tau^n}\Big)\Big(\|\dboijn\|+|\dtn|+\|\daijn\|+|\dbijn|+|\dvijn|\Big)\to0.
\end{align}
Next, by taking the summation of the absolute values of the coefficients associated with 
\begin{itemize}
    \item $\dfrac{\partial^2\sigma}{\partial [\beta_1^{(u)}]^2}(X,\boj)H(Y;X,\boj,\tau^*)$ in $\mathcal{H}$: we have that $\frac{1}{\mathcal{D}_{6n}}\cdot\sum_{j:|\mathcal{A}_j|>1}\sum_{i\in\mathcal{A}_j}\exp\Big(\frac{\bzin}{\tau^n}\Big)\|\dboijn\|^2\to0$;
    \item $\bsigma^2(X,\boj)H(Y;X,\boj,\tau^*)$ in $\mathcal{H}$: we have that $\frac{1}{\mathcal{D}_{6n}}\cdot\sum_{j:|\mathcal{A}_j|>1}\sum_{i\in\mathcal{A}_j}\exp\Big(\frac{\bzin}{\tau^n}\Big)|\dtn|^2\to0$;
    \item $[X^{(u)}]^2F^{(2)}(Y;X,\omega^*_j)$ in $\mathcal{F}_{2,0}$: we have that $\frac{1}{\mathcal{D}_{6n}}\cdot\sum_{j:|\mathcal{A}_j|>1}\sum_{i\in\mathcal{A}_j}\exp\Big(\frac{\bzin}{\tau^n}\Big)\|\daijn\|^2\to0$.
\end{itemize}
As a result, we obtain that
\begin{align}
    \label{eq:limit_parameter_over_2}
    \frac{1}{\mathcal{D}_{6n}}\cdot\sum_{j:|\mathcal{A}_j|>1}\sum_{i\in\mathcal{A}_j}\exp\Big(\frac{\bzin}{\tau^n}\Big)\Big[\|\dboijn\|^2+|\dtn|^2+\|\daijn\|^2\Big]\to0
\end{align}
From the results in equations~\eqref{eq:limit_bias_over}, \eqref{eq:limit_parameter_over_1} and \eqref{eq:limit_parameter_over_2}, we deduce that 
\begin{align*}
    \frac{1}{\mathcal{D}_{6n}}\cdot\sum_{j:|\mathcal{A}_j|>1}\sum_{i\in\mathcal{A}_j}\exp\Big(\frac{\bzin}{\tau^n}\Big)\Big[|\dbijn|^{\brj}+|\dvijn|^{\brj/2}\Big]\to1,
\end{align*}
which means that there exists an index $j:|\mathcal{A}_j|>1$, which can be assumed WLOG to be $j=1$, such that 
\begin{align}
    \label{eq:non_zero_limit}
    \frac{1}{\mathcal{D}_{6n}}\cdot\sum_{i\in\mathcal{A}_1}\exp\Big(\frac{\bzin}{\tau^n}\Big)\Big[|\dbione|^{\brone}+|\dvione|^{\brone/2}\Big]\not\to0.
\end{align}
Moreover, since the coefficients of elements $F^{(|\alpha_3|+\ell)}(Y;X,\omega^*_j)$, for $j=1$, $\alpha_3=\zerod$ and $0\leq\ell\leq 2\brj$ in the set $\mathcal{F}_{2,0}$ converges to zero, i.e.
\begin{align}
    \label{eq:zero_limits}
    \frac{1}{\mathcal{D}_{6n}}\cdot\sum_{i\in\mathcal{A}_1}\sum_{\substack{\alpha_4+2\alpha_5=\ell, \\ 1\leq\alpha_4+\alpha_5\leq \brone}}\frac{\exp\Big(\frac{\bzin}{\tau^n}\Big)}{2^{\alpha_5}\alpha_3!\alpha_4!\alpha_5!}(\dbijn)^{\alpha_4}(\dvijn)^{\alpha_5}\to0,
\end{align}
for any $1\leq\ell\leq\brone$. Then, we divide the left hand side of equation~\eqref{eq:zero_limits} by that of equation~\eqref{eq:non_zero_limit}, and achieve that
\begin{align}
    \label{eq:vanish_ratio}
    \dfrac{\sum_{i\in\mathcal{A}_1}\sum_{\substack{\alpha_4+2\alpha_5=\ell, \\ 1\leq\alpha_4+\alpha_5\leq \brone}}\dfrac{\exp\Big(\dfrac{\bzin}{\tau^n}\Big)}{2^{\alpha_5}\alpha_4!\alpha_5!}(\dbijn)^{\alpha_4}(\dvijn)^{\alpha_5}}{\sum_{i\in\mathcal{A}_1}\exp\Big(\frac{\bzin}{\tau^n}\Big)\Big[|\dbione|^{\brone}+|\dvione|^{\brone/2}\Big]}\to0,
\end{align}
for any $1\leq\ell\leq\brone$.

Let us define $\overline{M}_n:=\max\{|\dbione|,|\dvione|^{1/2}:i\in\mathcal{A}_1\}$ and $\overline{\pi}_n:=\max_{i\in\mathcal{A}_1}\exp(\frac{\bzin}{\tau^n})$. Since the sequence $\exp\Big(\frac{\bzin}{\tau^n}\Big)/\overline{\pi}_n$ is bounded, it is possible to replace it by its subsequence that has a positive limit $q^2_{3i}:=\lim_{n\to\infty}\exp(\frac{\bzin}{\tau^n})/\overline{\pi}_n$. Thus, at least one among $q^2_{3i}$, for $i\in\mathcal{A}_1$, is equal to one. 

In addition, we also define
\begin{align*}
    (\dbione)/\overline{M}_n\to q_{4i},& \quad (\dvione)/[2\overline{M}_n]\to q_{5i}.
\end{align*}
It is worth noting that at least one among $q_{4i}$ and $q_{5i}$ for $i\in\mathcal{A}_1$ is equal to either $1$ or $-1$. Subsequently, we divide both the numerator and the denominator of the ratio in equation~\eqref{eq:vanish_ratio} by $\overline{\pi}_n\overline{M}_n^{\ell}$, and then obtain the following system of polynomial equations:
\begin{align*}
    \sum_{i\in\mathcal{A}_1}\sum_{\substack{\alpha_4+2\alpha_5=\ell, \\ 1\leq\alpha_4+\alpha_5\leq\brone}}\frac{q^2_{3i}~q_{4i}^{\alpha_4}~q_{5i}^{\alpha_5}}{\alpha_4!~\alpha_5!}=0,
\end{align*}
for all $1\leq\ell\leq\brone$. However, from the definition of $\bar{r}(|\mathcal{A}_1|)$, the above system does not have any non-trivial solutions, which contradicts to the fact that at least one among $q_{4i}$ and $q_{5i}$ for $i\in\mathcal{A}_1$ is non-zero. Therefore, not all the coefficients in the representations of $[A_n-R_1(X,Y)-R_2(X,Y)]/\mathcal{D}_{6n}$, $[B_n-R_3(X,Y)-R_4(X,Y)]/\mathcal{D}_{6n}$, $E_{n,1}/\mathcal{D}_{6n}$ and $E_{n,2}/\mathcal{D}_{6n}$ converge to zero as $n\to\infty$.

\textbf{Step 3.} In this step, we use the Fatou's lemma to show that all the coefficients in the representations of $[A_n-R_1(X,Y)-R_2(X,Y)]/\mathcal{D}_{6n}$, $[B_n-R_3(X,Y)-R_4(X,Y)]/\mathcal{D}_{6n}$, $E_{n,1}/\mathcal{D}_{6n}$ and $E_{n,2}/\mathcal{D}_{6n}$ converge to zero as $n\to\infty$, which leads to a contradiction to the results in Step 2. In particular, let us denote $m_n$ as the maximum of the absolute values of those coefficients. It follows from the claim in Step 2 that $1/m_n\not\to\infty$. Next, we denote
\begin{align*}
    \frac{1}{m_n\mathcal{D}_{6n}}\cdot\Big[\sum_{i\in\mathcal{A}_j}\exp\Big(\frac{\bzin}{\tau^n}\Big)-\exp\Big(\frac{\bzj}{\tau^*}\Big)\Big]&\to\phi_{0,j}, \qquad 
    \frac{1}{m_n\mathcal{D}_{6n}}\cdot\sum_{i\in\mathcal{A}_j}\exp\Big(\frac{\bzin}{\tau^n}\Big)(\dboijn)^{(u)}\to\phi^{(u)}_{1,j},\\
    \frac{1}{m_n\mathcal{D}_{6n}}\cdot\sum_{i\in\mathcal{A}_j}\exp\Big(\frac{\bzin}{\tau^n}\Big)(\dtn)&\to\phi_{2,j},\qquad 
    \frac{1}{m_n\mathcal{D}_{6n}}\cdot\sum_{i\in\mathcal{A}_j}\exp\Big(\frac{\bzin}{\tau^n}\Big)(\daijn)^{(u)}\to\phi^{(u)}_{3,j},\\
    \frac{1}{m_n\mathcal{D}_{6n}}\cdot\sum_{i\in\mathcal{A}_j}\exp\Big(\frac{\bzin}{\tau^n}\Big)(\dbijn)&\to\phi_{4,j},\qquad 
    \frac{1}{m_n\mathcal{D}_{6n}}\cdot\sum_{i\in\mathcal{A}_j}\exp\Big(\frac{\bzin}{\tau^n}\Big)(\dvijn)\to\phi_{5,j},
\end{align*}
as $n\to\infty$ for any $u\in[d]$ and $j\in[k_*]$. 
By means of the Fatou's lemma, we have that
\begin{align*}
    \lim_{n\to\infty}\frac{ \bbE_X[V(p_{G_n}(\cdot|X),p_{G_*}(\cdot|X))]}{m_n\mathcal{D}_{6n}}\geq\int\liminf_{n\to\infty}\frac{|p_{G_n}(Y|X)-p_{G_*}(Y|X)|}{2m_n\mathcal{D}_{6n}}\dint(X,Y).
\end{align*}
Recall from equation~\eqref{eq:ratio_modified_over} that the limit the left hand side is equal to zero, which implies that $\frac{|p_{G_n}(Y|X)-p_{G_*}(Y|X)|}{m_n\mathcal{D}_{6n}}\to0$. 
as $n\to\infty$ for almost surely $(X,Y)$. Thus, we also get that $\frac{Q_n}{m_n\mathcal{D}_{6n}}\to0$ as $n\to\infty$, which implies that
\begin{align}
    \label{eq:zero_limit}
    \lim_{n\to\infty}\frac{1}{m_n\mathcal{D}_{6n}}\cdot[A_{n,1}+A_{n,2}-B_{n,1}-B_{n,2}+E_{n,1}-E_{n,2}]=\lim_{n\to\infty}\frac{Q_n}{m_n\mathcal{D}_{6n}}=0,
\end{align}
for almost surely $(X,Y)$. Now, we derive the limits of terms in the above right hand side. In particular, from the formulations of
\begin{itemize}
    \item $E_{n,1}$ and $E_{n,2}$ in equation~\eqref{eq:Q_n_formulation}, we have 
\begin{align}
    \label{eq:limit_E}
    \frac{E_{n,1}}{m_n\mathcal{D}_{6n}}\to\sum_{j=1}^{k_*}\phi_{0,j}F(Y;X,\omega^*_j), \qquad \frac{E_{n,2}}{m_n\mathcal{D}_{6n}}\to\sum_{j=1}^{k_*}\phi_{0,j}H(Y;X,\boj,\tau^*).
\end{align}
    \item $A_{n,1}$ in equation~\eqref{eq:A_1_over}, we deduce that
\begin{align}
    \label{eq:limit_A_1}
    \frac{A_{n,1}}{m_n\mathcal{D}_{6n}}\to \sum_{j:|\mathcal{A}_j|=1}\sum_{\eta=0}^{2}C^*_{\eta,j}(X)\cdot F^{(\eta)}(Y;X,\omega^*_j),
\end{align}
where 
\begin{align*}
    C^*_{0,j}(X)&:=\sum_{u=1}^{d}\frac{\phi^{(u)}_{1,j}}{\tau^*}\cdot \frac{\partial \bsigma}{\partial \beta_1^{(u)}}(X,\boj)-\phi_{2,j}\cdot\frac{\bsigma(X,\boj)}{(\tau^*)^2},\\
    C^*_{1,j}(X)&:=\sum_{u=1}^{d}\phi_{3,j}^{(u)}\cdot X^{(u)}+\phi_{4,j},\\
    C^*_{2,j}(X)&:=\frac{1}{2}\phi_{5,j},
\end{align*}
for any $j:|\mathcal{A}_j|=1$.
    \item $B_{n,1}$ in equation~\eqref{eq:B_1_over}, we get
    \begin{align}
    \label{eq:limit_B_1}
    \frac{B_{n,1}}{\mathcal{D}_{6n}}\to\sum_{j:|\mathcal{A}_j|=1}C^*_{0,j}(X)H(Y;X,\boj,\tau^*).
    \end{align}
    \item $A_{n,2}$ in equation~\eqref{eq:A_2_over}, we have
    \begin{align*}
        \lim_{n\to\infty}\frac{A_{n,2}}{\mathcal{D}_{6n}}=\sum_{j:|\mathcal{A}_j|>1}\sum_{s=0}^{\brj}\sum_{|\alpha_3|=0}^{\brj-s}\sum_{\ell=0}^{2(\brj-s-|\alpha_3|)}\lim_{n\to\infty}\frac{T_{n,s,\alpha_3,\ell,j}(X)}{\mathcal{D}_{6n}} \cdot X^{\alpha_3}{F}^{(|\alpha_3|+\ell)}(Y;X,\omega^*_j).
    \end{align*}
    From the arguments in Step 2, we deduce that the value of $m_n$ is the ratio between one element of the following set and the loss $\mathcal{D}_{6n}$:
    \begin{align}
        \label{eq:non_vanish_elements}
        \Big\{\exp\Big(\frac{\bzin}{\tau^n}\Big)|\dboijn|, \exp\Big(\frac{\bzin}{\tau^n}\Big)|\dtn|,  \exp\Big(\frac{\bzin}{\tau^n}\Big)|\daijn|,  \exp\Big(\frac{\bzin}{\tau^n}\Big)|\dbijn|, \exp\Big(\frac{\bzin}{\tau^n}\Big)|\dvijn|, i\in\mathcal{A}_j, j:|\mathcal{A}_j|>1\Big\}\nonumber\\
        \cup~\Big\{\sum_{i\in\mathcal{A}_j}(\dboijn)^2, \sum_{i\in\mathcal{A}_j}(\dtn)^2, \sum_{i\in\mathcal{A}_j}(\daijn)^2, j:|\mathcal{A}_j|>1\Big\}.
    \end{align}
    Thus, the associated coefficients $T_{n,s,\alpha_3,\ell,j}/\mathcal{D}_{6n}$ in the representation of $A_{n,2}/\mathcal{D}_{6n}$ converge to zero as $n\to\infty$ for any $s\geq 3$. Therefore, we consider only the limits of $T_{n,s,\alpha_3,\ell,j}/\mathcal{D}_{6n}$ for $s\in\{0,1,2\}$. In particular, let us denote
    \begin{align*}
       \frac{1}{m_n\mathcal{D}_{6n}}\cdot\sum_{i\in\mathcal{A}_j} \sum_{\substack{\alpha_4+2\alpha_5=\ell, \\ 1\leq\alpha_4+\alpha_5\leq \brj}}\exp\Big(\frac{\bzin}{\tau^n}\Big)\frac{1}{2^{\alpha_5}\alpha_3!\alpha_4!\alpha_5!}(\daijn)^{\alpha_3}(\dbijn)^{\alpha_4}(\dvijn)^{\alpha_5}\to\psi_{0,\alpha_3,\ell,j},\\
        \frac{1}{m_n\mathcal{D}_{6n}}\cdot\sum_{i\in\mathcal{A}_j} \sum_{\substack{\alpha_4+2\alpha_5=\ell, \\ 1\leq\alpha_4+\alpha_5\leq \brj}}\exp\Big(\frac{\bzin}{\tau^n}\Big)\frac{1}{2^{\alpha_5}\alpha_3!\alpha_4!\alpha_5!}(\daijn)^{\alpha_3}(\dbijn)^{\alpha_4}(\dvijn)^{\alpha_5}(\dboijn)^{(u)}\to\psi^{(u)}_{1,\alpha_3,\ell,j},\\
        \frac{1}{m_n\mathcal{D}_{6n}}\cdot\sum_{i\in\mathcal{A}_j} \sum_{\substack{\alpha_4+2\alpha_5=\ell, \\ 1\leq\alpha_4+\alpha_5\leq \brj}}\exp\Big(\frac{\bzin}{\tau^n}\Big)\frac{1}{2^{\alpha_5}\alpha_3!\alpha_4!\alpha_5!}(\daijn)^{\alpha_3}(\dbijn)^{\alpha_4}(\dvijn)^{\alpha_5}(\dtn)\to\psi_{2,\alpha_3,\ell,j},\\
        \frac{1}{m_n\mathcal{D}_{6n}}\cdot\sum_{i\in\mathcal{A}_j} \sum_{\substack{\alpha_4+2\alpha_5=\ell, \\ 1\leq\alpha_4+\alpha_5\leq \brj}}\exp\Big(\frac{\bzin}{\tau^n}\Big)\frac{1}{2^{\alpha_5}\alpha_3!\alpha_4!\alpha_5!}(\daijn)^{\alpha_3}(\dbijn)^{\alpha_4}(\dvijn)^{\alpha_5}(\dboijn)^{(u)}(\dboijn)^{(v)}\to\psi^{(u,v)}_{3,\alpha_3,\ell,j},\\
        \frac{1}{m_n\mathcal{D}_{6n}}\cdot\sum_{i\in\mathcal{A}_j} \sum_{\substack{\alpha_4+2\alpha_5=\ell, \\ 1\leq\alpha_4+\alpha_5\leq \brj}}\exp\Big(\frac{\bzin}{\tau^n}\Big)\frac{1}{2^{\alpha_5}\alpha_3!\alpha_4!\alpha_5!}(\daijn)^{\alpha_3}(\dbijn)^{\alpha_4}(\dvijn)^{\alpha_5}(\dboijn)^{(u)}(\dtn)\to\psi^{(u)}_{4,\alpha_3,\ell,j},\\
        \frac{1}{m_n\mathcal{D}_{6n}}\cdot\sum_{i\in\mathcal{A}_j} \sum_{\substack{\alpha_4+2\alpha_5=\ell, \\ 1\leq\alpha_4+\alpha_5\leq \brj}}\exp\Big(\frac{\bzin}{\tau^n}\Big)\frac{1}{2^{\alpha_5}\alpha_3!\alpha_4!\alpha_5!}(\daijn)^{\alpha_3}(\dbijn)^{\alpha_4}(\dvijn)^{\alpha_5}(\dtn)^2\to\psi_{5,\alpha_3,\ell,j},
    \end{align*}
    for any $j:|\mathcal{A}_j|>1$ and $u,v\in[d]$. Then, we have that
    \begin{align}
        \label{eq:limit_A_2}
        \frac{A_{n,2}}{\mathcal{D}_{6n}}\to\sum_{j:|\mathcal{A}_j|>1}\sum_{s=0}^{2}\sum_{|\alpha_3|=0}^{\brj-s}\sum_{\ell=0}^{2(\brj-s-|\alpha_3|)}T^*_{s,\alpha_3,\ell,j}\cdot X^{\alpha_3}{F}^{(|\alpha_3|+\ell)}(Y;X,\omega^*_j),
    \end{align}
    where
    \begin{align*}
    T^*_{0,\alpha_3,\ell,j}&:=\psi_{0,\alpha_3,\ell,j},\\
        T^*_{1,\alpha_3,\ell,j}&:=\sum_{u=1}^{d}\frac{\psi^{(u)}_{1,\alpha_3,\ell,j}}{\tau^*}\cdot \frac{\partial\bsigma}{\partial \beta_1^{(u)}}(X,\boj)-\frac{\psi_{2,\alpha_3,\ell,j}}{(\tau^*)^2}\cdot\bsigma(X,\boj),\\
        T^*_{2,\alpha_3,\ell,j}&:=\sum_{1\leq u,v\leq d}\frac{\psi^{(u,v)}_{3,\alpha_3,\ell,j}}{1+\mathbf{1}_{\{u=v\}}}\Big[\frac{1}{\tau^*}\cdot \frac{\partial^2\sigma}{\partial \beta_1^{(u)}\partial \beta_1^{(v)}}((\boj)^{\top}X)+\frac{1}{(\tau^*)^2}\cdot \Big(\frac{\partial\bsigma}{\partial \beta_1^{(u)}}(X,\boj)\Big)\Big(\frac{\partial\sigma}{\partial \beta_1^{(v)}}((\boj)^{\top}X)\Big)\Big]\nonumber\\
    &\hspace{5.2cm}+\frac{1}{2}\psi_{5,\alpha_3,\ell,j}\Big[\frac{2}{(\tau^*)^3}\cdot\bsigma(X,\boj)+\frac{1}{(\tau^*)^4}\cdot \bsigma^2(X,\boj)\Big]\nonumber\\
    &\hspace{2.8cm}-\sum_{u=1}^{d}\psi^{(u)}_{4,\alpha_3,\ell,j}\Big[\frac{1}{(\tau^*)^2}\cdot \frac{\partial\bsigma}{\partial \beta_1^{(u)}}(X,\boj)+\frac{1}{(\tau^*)^3}\cdot \bsigma(X,\boj)\frac{\partial\bsigma}{\partial \beta_1^{(u)}}(X,\boj)\Big].
    \end{align*}
    \item $B_{n,2}$ in equation~\eqref{eq:B_2_over}, by denoting 
    \begin{align*}
        \frac{1}{m_n\mathcal{D}_{6n}}\cdot\sum_{i\in\mathcal{A}_j}\exp\Big(\frac{\bzin}{\tau^n}\Big)(\dboijn)^{(u)}(\dboijn)^{(v)}\to\varphi^{(u,v)}_{0,j},\\
        \frac{1}{m_n\mathcal{D}_{6n}}\cdot\sum_{i\in\mathcal{A}_j}\exp\Big(\frac{\bzin}{\tau^n}\Big)(\dboijn)^{(u)}(\dtn)\to\varphi^{(u)}_{1,j},\\
        \frac{1}{m_n\mathcal{D}_{6n}}\cdot\sum_{i\in\mathcal{A}_j}\exp\Big(\frac{\bzin}{\tau^n}\Big)(\dtn)^2\to\varphi_{2,j},
    \end{align*}
    we have
    \begin{align}
        \label{eq:limit_B_2}
        \frac{B_{n,2}}{\mathcal{D}_{6n}}\to\sum_{j:|\mathcal{A}_j|>1}S^*_{j}(X)\cdot H(Y;X,\boj,\tau^*),
    \end{align}
    where
    \begin{align*}
        S^*_{j}(X)&:=\Bigg\{\sum_{u=1}^{d}\frac{\phi^{(u)}_{1,j}}{\tau^*}\cdot \frac{\partial\bsigma}{\partial \beta_1^{(u)}}(X,\boj)-\frac{\phi_{2,j}}{(\tau^*)^2}\cdot\bsigma(X,\boj)-\sum_{u=1}^{d}\varphi_{1,j}^{(u)}\Big[\frac{1}{(\tau^*)^2}\cdot \frac{\partial\bsigma}{\partial \beta_1^{(u)}}(X,\boj)\nonumber\\
    &+\frac{1}{(\tau^*)^3}\cdot \bsigma(X,\boj)\frac{\partial\bsigma}{\partial \beta_1^{(u)}}(X,\boj)\Big]+\sum_{1\leq u,v\leq d}\frac{\varphi_{0,j}^{(u,v)}}{1+\mathbf{1}_{\{u=v\}}}\Big[\frac{1}{\tau^*}\cdot \frac{\partial^2\sigma}{\partial \beta_1^{(u)}\partial\beta_1^{(v)}}((\boj)^{\top}X)\nonumber\\
    &+\frac{1}{(\tau^*)^2}\cdot \Big(\frac{\partial\bsigma}{\partial \beta_1^{(u)}}(X,\boj)\Big)\Big(\frac{\partial\sigma}{\partial \beta_1^{(v)}}((\boj)^{\top}X)\Big)\Big]+\frac{1}{2}\varphi_{2,j}\Big[\frac{2}{(\tau^*)^3}\cdot\bsigma(X,\boj)+\frac{1}{(\tau^*)^4}\cdot \bsigma^2(X,\boj)\Big]\Bigg\}.
    \end{align*}
\end{itemize}
Recall that not all the ratios between elements in the set~\eqref{eq:non_vanish_elements} and the loss $\mathcal{D}_{6n}$ converge to zero as $n\to\infty$. Thus, at least one element of the following set union is non-zero:
\begin{align}
    \label{eq:at_least_one_zero}
   \Big\{\phi_{0,j},\phi^{(u)}_{1,j},\phi_{2,j},\phi^{(u)}_{3,j},\phi_{4,j},\phi_{5,j}:u\in[d],j:|\mathcal{A}_j|=1\Big\}\cup\Big\{\phi_{0,j},\psi_{0,2\mathbf{e}_{u},0,j}, \varphi^{(u,u)}_{0,j},\varphi_{2,j}:u\in[d],j:|\mathcal{A}_j|>1\Big\}
\end{align} 
Now, we show that all elements in the union~\eqref{eq:at_least_one_zero} must be zero. Indeed, putting the results in equations~\eqref{eq:zero_limit}, \eqref{eq:limit_E}, \eqref{eq:limit_A_1}, \eqref{eq:limit_B_1}, \eqref{eq:limit_A_2} and \eqref{eq:limit_B_2}, we obtain that
\begin{align}
    \label{eq:linearly_independent}
    &\sum_{j=1}^{k_*}\phi_{0,j}F(Y;X,\omega^*_j)+\sum_{j:|\mathcal{A}_j|=1}\sum_{\eta=0}^{2}C^*_{\eta,j}(X)F^{(\eta)}(Y;X,\omega^*_j)-\sum_{j:|\mathcal{A}_j|=1}(C^*_{0,j}(X)+\phi_{0,j})H(Y;X,\boj,\tau^*)\nonumber\\
    &+\sum_{j:|\mathcal{A}_j|>1}\sum_{s=0}^{2}\sum_{|\alpha_3|=0}^{\brj-s}\sum_{\ell=0}^{2(\brj-s-|\alpha_3|)}T^*_{s,\alpha_3,\ell,j}(X)\cdot X^{\alpha_3}{F}^{(|\alpha_3|+\ell)}(Y;X,\omega^*_j)\nonumber\\
    &\hspace{8cm}-\sum_{j:|\mathcal{A}_j|>1}(S^*_{j}(X)+\phi_{0,j})\cdot H(Y;X,\boj,\tau^*)=0,
\end{align}
for almost surely $(X,Y)$. It can be verified that the set
\begin{align*}
    &\Big\{F^{(\eta)}(Y;X,\omega^*_j), \ H(Y;X,\boj,\tau^*):0\leq\eta\leq 2, j:|\mathcal{A}_j|=1\Big\}\\
    \cup~&\Big\{X^{\alpha_3}F^{(|\alpha_3|+\ell)}(Y;X,\omega^*_j), \ H(Y;X,\boj,\tau^*):j:|\mathcal{A}_j|>1, \ 0\leq\alpha_3\leq \brj, \ 0\leq\ell\leq2(\brj-|\alpha_3|)\Big\}.
\end{align*}
is linearly independent w.r.t $Y$. Thus, in the left hand side of equation~\eqref{eq:linearly_independent}, the coefficients associated with the following terms must be zero. 
\begin{itemize}
    \item $F(Y;X,\omega^*_j)$, where $j:|\mathcal{A}_j|=1$: $\phi_{0,j}+C^*_{0,j}(X)=0$. More explicitly, we have
\begin{align*}
    \phi_{0,j}+\sum_{u=1}^{d}\frac{\phi^{(u)}_{1,j}}{\tau^*}\cdot \frac{\partial \bsigma}{\partial \beta_1^{(u)}}(X,\boj)-\phi_{2,j}\cdot\frac{\bsigma(X,\boj)}{(\tau^*)^2}=0,
\end{align*}
for any $j:|\mathcal{A}_j|=1$, for almost surely $X$. Since the function $\sigma$ satisfies the conditions in Definition~\ref{def:modified_function_over}, we deduce that $\phi_{0,j}=\phi_{1,j}^{(u)}=\phi_{2,j}=0$, for any $j:|\mathcal{A}_j|=1$.
    \item $F^{(1)}(Y;X,\omega^*_j)$, where $j:|\mathcal{A}_j|=1$: $C^*_{1,j}(X)=0$. More explicitly, we have 
    \begin{align*}
        \sum_{u=1}^{d}\phi_{3,j}^{(u)}\cdot X^{(u)}+\phi_{4,j}=0,
    \end{align*}
    for almost surely $X$. Since the set $\{X^{(u)},1:u\in[d]\}$ is linearly independent, the above equation implies that $\phi_{3,j}^{(u)}=\phi_{4,j}=0$ for any $j:|\mathcal{A}_j|=1$.
    \item $F^{(2)}(Y;X,\omega^*_j)$, where $j:|\mathcal{A}_j|=1$: $C^*_{2,j}(X)=0$, or equivalently, $\phi_{5,j}=0$ for any $j:|\mathcal{A}_j|=1$.
    \item $H(Y;X,\boj,\tau^*)$, where $j:|\mathcal{A}_j|>1$: $S^*_{j}(X)+\phi_{0,j}=0$. More explicitly, we have
    \begin{align*}
        &\phi_{0,j}+\Bigg\{\sum_{u=1}^{d}\frac{\phi^{(u)}_{1,j}}{\tau^*}\cdot \frac{\partial\bsigma}{\partial \beta_1^{(u)}}(X,\boj)-\frac{\phi_{2,j}}{(\tau^*)^2}\cdot\bsigma(X,\boj)-\sum_{u=1}^{d}\varphi_{1,j}^{(u)}\Big[\frac{1}{(\tau^*)^2}\cdot \frac{\partial\bsigma}{\partial \beta_1^{(u)}}(X,\boj)\nonumber\\
    &+\frac{1}{(\tau^*)^3}\cdot \bsigma(X,\boj)\frac{\partial\bsigma}{\partial \beta_1^{(u)}}(X,\boj)\Big]+\sum_{1\leq u,v\leq d}\frac{\varphi_{0,j}^{(u,v)}}{1+\mathbf{1}_{\{u=v\}}}\Big[\frac{1}{\tau^*}\cdot \frac{\partial^2\sigma}{\partial \beta_1^{(u)}\partial\beta_1^{(v)}}((\boj)^{\top}X)\nonumber\\
    &+\frac{1}{(\tau^*)^2}\cdot \Big(\frac{\partial\bsigma}{\partial \beta_1^{(u)}}(X,\boj)\Big)\Big(\frac{\partial\sigma}{\partial \beta_1^{(v)}}((\boj)^{\top}X)\Big)\Big]+\frac{1}{2}\varphi_{2,j}\Big[\frac{2}{(\tau^*)^3}\cdot\bsigma(X,\boj)+\frac{1}{(\tau^*)^4}\cdot \bsigma^2(X,\boj)\Big]\Bigg\}=0,
    \end{align*}
    for almost surely $X$. As the function $\sigma$ meets the conditions in Definition~\eqref{def:modified_function_over}, i.e. the set
    \begin{align*}
        &\Big\{1, \ \bsigma(X,\boj), \ \bsigma^2(X,\boj),\ \frac{\partial\bsigma}{\partial \beta_1^{(u)}}(X,\boj), \ \bsigma(X,\boj)\frac{\partial\bsigma}{\partial \beta_1^{(u)}}(X,\boj),\\
        &\hspace{3cm}\frac{\partial^2\sigma}{\partial \beta_1^{(u)}\partial\beta_1^{(v)}}((\boj)^{\top}X), \ \Big(\frac{\partial\bsigma}{\partial \beta_1^{(u)}}(X,\boj)\Big)\Big(\frac{\partial\sigma}{\partial \beta_1^{(v)}}((\boj)^{\top}X)\Big): u,v\in[d]\Big\}
    \end{align*}
    is linearly independent, the coefficients associated with $1$, $\bsigma^2(X,\boj)$, $\bsigma(X,\boj)\frac{\partial\bsigma}{\partial \beta_1^{(u)}}(X,\boj)$ and $\frac{\partial^2\sigma}{\partial \beta_1^{(u)}\partial\beta_1^{(v)}}((\boj)^{\top}X)$ must be zero, i.e. $\phi_{0,j}=\varphi_{2,j}=\varphi_{1,j}^{(u)}=\varphi_{0,j}^{(u,v)}=0$ for any $u,v\in[d]$ and $j:|\mathcal{A}_j|>1$. 
    \item $X^{\alpha_3}F^{(|\alpha_3|+\ell)}(Y;X,\boj,\tau^*)$, where $\alpha_3=2\mathbf{e}_{u}$, $\ell=0$, $j:|\mathcal{A}_j|>1$: $T^*_{0,\alpha_3,\ell,j}(X)+T^*_{1,\alpha_3,\ell,j}(X)+T^*_{2,\alpha_3,\ell,j}(X)=0$. More explicitly, we have
    \begin{align*}
        &\psi_{0,\alpha_3,\ell,j}+
        \sum_{u=1}^{d}\frac{\psi^{(u)}_{1,\alpha_3,\ell,j}}{\tau^*}\cdot \frac{\partial\bsigma}{\partial \beta_1^{(u)}}(X,\boj)-\frac{\psi_{2,\alpha_3,\ell,j}}{(\tau^*)^2}\cdot\bsigma(X,\boj),\\
        &+\sum_{1\leq u,v\leq d}\frac{\psi^{(u,v)}_{3,\alpha_3,\ell,j}}{1+\mathbf{1}_{\{u=v\}}}\Big[\frac{1}{\tau^*}\cdot \frac{\partial^2\sigma}{\partial \beta_1^{(u)}\partial\beta_1^{(v)}}((\boj)^{\top}X)+\frac{1}{(\tau^*)^2}\cdot \Big(\frac{\partial\bsigma}{\partial \beta_1^{(u)}}(X,\boj)\Big)\Big(\frac{\partial\sigma}{\partial \beta_1^{(v)}}((\boj)^{\top}X)\Big)\Big]\nonumber\\
    &\hspace{5.2cm}+\frac{1}{2}\psi_{5,\alpha_3,\ell,j}\Big[\frac{2}{(\tau^*)^3}\cdot\bsigma(X,\boj)+\frac{1}{(\tau^*)^4}\cdot \bsigma^2(X,\boj)\Big]\nonumber\\
    &\hspace{2.8cm}-\sum_{u=1}^{d}\psi^{(u)}_{4,\alpha_3,\ell,j}\Big[\frac{1}{(\tau^*)^2}\cdot \frac{\partial\bsigma}{\partial \beta_1^{(u)}}(X,\boj)+\frac{1}{(\tau^*)^3}\cdot \bsigma(X,\boj)\frac{\partial\bsigma}{\partial \beta_1^{(u)}}(X,\boj)\Big]=0,
    \end{align*}
    for almost surely $X$. Since the function $\sigma$ satisfies the conditions in Definition~\ref{def:modified_function_over}, we deduce that $\psi_{0,\alpha_3,\ell,j}=\psi_{0,2\mathbf{e}_{u},0,j}=0$, for any $u\in[d]$ and $j:|\mathcal{A}_j|>1$.
\end{itemize}
Gather the above results, we see that all elements in the set~\eqref{eq:non_vanish_elements} are equal to zero, which is a contradiction. Hence, we reach the conclusion of the theorem.

\section{Identifiability of the (Activation) Dense-to-sparse Gating Gaussian Mixture of Experts}
\label{sec:identifiability}
\begin{proposition}
    \label{prop:standard_identifiability}
    Assume that $G$ is a mixing measure in $\mathcal{O}_k(\Theta)$ that satisfy $g_{G}(Y|X)=g_{G_*}(Y|X)$ for almost surely $(X,Y)$. Then, we obtain that $G\equiv G_*(\lambda)$, where $G_*(\lambda):=\sum_{i=1}^{k_*}\exp(\bzi/\tau^*)\delta_{(\lambda\boi,\lambda\tau^*,\ai,\bi,\vi)}$, for some $\lambda\neq 0$.
\end{proposition}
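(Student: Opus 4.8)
The plan is to start from the pointwise identity $g_G(Y|X)=g_{G_*}(Y|X)$, clear both softmax normalizing sums, and then extract information in two stages: first in the $Y$‑variable, using linear independence of Gaussian densities, to pin down the expert parameters and to group the atoms of $G$; then in the $X$‑variable, using linear independence of the exponentials $X\mapsto e^{c^\top X}$, to pin down the gating parameters and the temperature up to the one‑parameter family $G_*(\lambda)$. Writing $D_G(X)$ and $D_{G_*}(X)$ for the two softmax denominators, the hypothesis rewrites (for a.e.\ $(X,Y)$) as
\[
D_{G_*}(X)\sum_{i}e^{(\beta_{1i}^\top X+\beta_{0i})/\tau}f(Y\mid a_i^\top X+b_i,\nu_i)=D_G(X)\sum_{j}e^{((\beta^*_{1j})^\top X+\beta^*_{0j})/\tau^*}f(Y\mid (a^*_j)^\top X+b^*_j,\nu^*_j).
\]
The key structural remark is that, once $D_G$ and $D_{G_*}$ are expanded, every coefficient on either side is a strictly positive exponential in $X$; hence no cancellation can occur, which makes the matching arguments below clean.

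\emph{Matching experts via Gaussians in $Y$.} Fix $X$ outside a null set so that the identity holds for all $Y$ and so that two affine functions $a^\top X+b$ occurring among the parameters of $G$ and $G_*$ coincide only when the underlying $(a,b)$ do. Since $\{f(\cdot\mid\mu,\nu)\}$ is linearly independent over distinct $(\mu,\nu)$, combined with the no‑cancellation remark, the set of distinct triples $\{(a_i,b_i,\nu_i)\}$ equals $\{(a^*_j,b^*_j,\nu^*_j)\}$, which by Assumption (A.3) has exactly $k_*$ elements. Thus, after relabeling, the atoms of $G$ partition into nonempty cells $\mathcal I_1,\dots,\mathcal I_{k_*}$ with $(a_i,b_i,\nu_i)=(a^*_j,b^*_j,\nu^*_j)$ for $i\in\mathcal I_j$. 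Comparing the coefficient of $f(Y\mid (a^*_j)^\top X+b^*_j,\nu^*_j)$ then gives, for every $j$ and a.e.\ $X$,
\[
\sum_{i\in\mathcal I_j}e^{(\beta_{1i}^\top X+\beta_{0i})/\tau}=\rho(X)\,e^{((\beta^*_{1j})^\top X+\beta^*_{0j})/\tau^*},\qquad \rho(X):=D_G(X)/D_{G_*}(X).
\]

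\emph{Pinning gating parameters and temperature via exponentials in $X$.} Setting $j=k_*$ and using (A.2) identifies $\rho(X)=\sum_{i\in\mathcal I_{k_*}}e^{(\beta_{1i}^\top X+\beta_{0i})/\tau}$, a positive combination of exponentials of affine forms. Substituting this back into the displayed relation for general $j$ and invoking linear independence of $\{X\mapsto e^{c^\top X}\}$ over distinct $c$ (again with no cancellation possible), one reads off that the quantities $\beta_{1j}/\tau-\beta^*_{1j}/\tau^*$ and $\beta_{0j}/\tau-\beta^*_{0j}/\tau^*$ are independent of $j$, that each cell $\mathcal I_j$ reduces to a single atom, and — using (A.2), (A.4), and the normalization carried by $\mathcal O_k(\Theta)$ — that this common value must vanish. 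Writing $\lambda:=\tau/\tau^*$, this says exactly $\beta_{1j}=\lambda\beta^*_{1j}$, $\tau=\lambda\tau^*$ and $\beta_{0j}/\tau=\beta^*_{0j}/\tau^*$ (so the weights $e^{\beta_{0j}/\tau}=e^{\beta^*_{0j}/\tau^*}$ agree), while $(a_j,b_j,\nu_j)=(a^*_j,b^*_j,\nu^*_j)$ from the previous step; hence $G\equiv G_*(\lambda)$.

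\emph{Main obstacle.} The delicate point is the last step. The softmax gate is invariant under the whole group $(\beta_{1i},\beta_{0i},\tau)\mapsto(\lambda\beta_{1i}+c,\ \lambda\beta_{0i}+c_0,\ \lambda\tau)$, since adding $c^\top X+c_0$ to every logit is erased by the normalization; so the conditional density by itself only determines the gating parameters up to this $(d+2)$‑parameter family, and moreover does not by itself forbid $G$ from carrying ``spurious'' extra atoms whose weights secretly recombine. The substance of the proposition is that (A.2) together with the normalization in $\mathcal O_k(\Theta)$ collapses the additive part of this invariance, leaving only the scaling $\lambda$, and that the strict positivity of all coefficients (no cancellation) is what rules out the spurious atoms and forces the cells to be singletons. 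Carrying out this disentanglement cleanly — rather than the routine linear‑independence bookkeeping — is where the real work lies; Assumption (A.4) enters precisely here, keeping the gate genuinely $X$‑dependent so that the exponential‑independence argument has traction and $\lambda$ is actually pinned down.
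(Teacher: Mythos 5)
Your two-stage plan is essentially the paper's argument carried out by hand: stage one (linear independence of Gaussian densities in $Y$ at a generic fixed $X$) correctly matches the expert triples and groups the atoms of $G$ into cells, and stage two replaces the paper's appeal to identifiability of location-scale Gaussian mixtures plus translation-invariance of the softmax by clearing denominators and matching exponentials in $X$, with (A.2) and the zero-normalization of the last component playing exactly the role they play in the paper's proof.

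The genuine gap is your claim that the exponential matching forces each cell $\mathcal{I}_j$ to be a singleton, with ``strict positivity / no cancellation'' as the justification. It does not. The relation you derive only says that the weighted exponent measures $\mu_j:=\sum_{i\in\mathcal{I}_j}e^{\beta_{0i}/\tau}\delta_{\beta_{1i}/\tau}$ satisfy $\mu_j=e^{\beta^*_{0j}/\tau^*}\,\mu_{k_*}(\cdot-\beta^*_{1j}/\tau^*)$, and this is satisfied by non-singleton cells: pick $m\geq 2$, exponents $c_1,\dots,c_m$ with $c_1=0$, weights $w_1=1,w_2,\dots,w_m>0$, and let cell $j$ consist of $m$ atoms with gating exponents $\beta^*_{1j}/\tau^*+c_l$, weights $e^{\beta^*_{0j}/\tau^*}w_l$, and the true expert parameters of expert $j$. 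Then every expert's numerator and the softmax denominator are multiplied by the same positive factor $\sum_l w_l e^{c_l^{\top}X}$, so $g_G=g_{G_*}$ exactly, all coefficients are strictly positive, and the last-component normalization can be preserved; yet $G$ has $m k_*$ distinct atoms. So positivity cannot rule out these configurations, and the deduction fails precisely at the point you yourself flag as the ``real work.'' When the proposition is invoked with exactly $k_*$ components ($G\in\mathcal{E}_{k_*}(\Theta)$, as in the exact-specified global argument), a pigeonhole count makes the cells singletons and the rest of your argument goes through and coincides with the paper's conclusion ($\beta_{1j}/\tau-\beta^*_{1j}/\tau^*$ and $\beta_{0j}/\tau-\beta^*_{0j}/\tau^*$ constant in $j$, killed by the normalization, giving $\lambda=\tau/\tau^*$). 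But for general $G\in\mathcal{O}_k(\Theta)$ with $k\geq 2k_*$ your bookkeeping alone cannot yield $G\equiv G_*(\lambda)$; the paper instead asserts $k'=k_*$ at the outset from Teicher-type identifiability of Gaussian mixtures before matching the softmax weights, and that is the step (or an extra hypothesis excluding duplicated expert parameters among the atoms of $G$) your proposal would need to supply.
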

\begin{proof}[Proof of Proposition~\ref{prop:standard_identifiability}]
    Firstly, let us recall that two mixing measures $G$ and $G_*$ admit the following forms:
    \begin{align*}
        G=\sum_{i=1}^{k'}\exp\Big(\frac{\beta_{0i}}{\tau}\Big)\delta_{(\beta_{1i},\tau,a_i,b_i,\nu_i)}, \qquad G_*=\sum_{i=1}^{k_*}\exp\Big(\frac{\bzi}{\tau^*}\Big)\delta_{(\boi,\tau^*,\ai,\bi,\vi)},
    \end{align*}
    where $k'\leq k$. Since $g_G(Y|X)=g_{G_*}(Y|X)$ for almost surely $(X,Y)$, we have
    \begin{align}
        \label{eq:identifiable_equation}
        \sum_{i=1}^{k}\softmax\Big(\frac{(\beta_{1i})^{\top}X+\beta_{0i}}{\tau}\Big)\cdot f(Y|a_i^{\top}X+b_i,\nu_i)=\sum_{i=1}^{k'}\softmax\Big(\frac{(\boi)^{\top}X+\bzi}{\tau^*}\Big)\cdot f(Y|(\ai)^{\top}+\bi,\vi).
    \end{align}
    As the mixture of location-scale Gaussian distributions is identifiable \cite{Teicher-1960,Teicher-1961,Teicher-63}, it follows that $k'=k_*$ and
    \begin{align*}
        \Big\{\softmax\Big(\frac{(\beta_{1i})^{\top}X+\beta_{0i}}{\tau}\Big):i\in[k']\Big\}=\Big\{\softmax\Big(\frac{(\boi)^{\top}X+\bzi}{\tau^*}\Big):i\in[k_*]\Big\},
    \end{align*}
    for almost surely $X$. WLOG, we may assume that 
    \begin{align}
        \label{eq:soft-soft}
        \softmax\Big(\frac{(\beta_{1i})^{\top}X+\beta_{0i}}{\tau}\Big)=\softmax\Big(\frac{(\boi)^{\top}X+\bzi}{\tau^*}\Big),
    \end{align}
    for almost surely $X$ for any $i\in[k_*]$. It is worth noting that the $\softmax$ function is invariant to translations, then equation~\eqref{eq:soft-soft} indicates that $\beta_{1i}/\tau=\boi/\tau^*+v_1$ and $\beta_{0i}/\tau=\bzi/\tau^*+v_0$ for some $v_1\in\mathbb{R}^d$ and $v_0\in\mathbb{R}$. However, from the assumptions $\beta_{1k}=\beta^*_{1k}=\zerod$ and $\beta_{0k}=\beta^*_{0k}=0$, we deduce that $v_1=\zerod$ and $v_0=0$. Consequently, we get that $\beta_{1i}/\tau=\boi/\tau^*$ and $\beta_{0i}/\tau=\bzi/\tau^*$ for any $i\in[k]$. Thus, we deduce that $\beta_{1i}=\lambda\boi$ and $\tau=\lambda\tau^*$, for some $\lambda\neq 0$.
    
    Then, equation~\eqref{eq:identifiable_equation} can be rewritten as
    \begin{align}
        \label{eq:new_identifiable_equation}
        \sum_{i=1}^{k_*}\exp\Big(\frac{\beta_{0i}}{\tau}\Big)\exp\Big(\frac{(\beta_{1i})^{\top}X}{\tau}\Big)f(Y|(a_i)^{\top}X+b_i,\nu_i)=\sum_{i=1}^{k_*}\exp\Big(\frac{\bzi}{\tau^*}\Big)\exp\Big(\frac{(\boi)^{\top}X}{\tau^*}\Big)f(Y|(\ai)^{\top}X+\bi,\vi),
    \end{align}
    for almost surely $(X,Y)$. Next, we denote $J_1,J_2,\ldots,J_m$ as a partition of the index set $[k_*]$, where $m\leq k$, such that $\exp(\beta_{0i}/\tau)=\exp(\beta^*_{0i'}/\tau^*)$ for any $i,i'\in J_j$ and $j\in[k_*]$. On the other hand, when $i$ and $i'$ do not belong to the same set $J_j$, we let $\exp(\beta_{0i}/\tau)\neq\exp(\beta_{0i'}/\tau^*)$. Thus, we can reformulate equation~\eqref{eq:new_identifiable_equation} as
    \begin{align*}
        \sum_{j=1}^{m}\sum_{i\in{J}_j}\exp\Big(\frac{\beta_{0i}}{\tau}\Big)\exp\Big(\frac{(\beta_{1i})^{\top}X}{\tau}\Big)f(Y|(a_i)^{\top}X+b_i,\nu_i)=\sum_{j=1}^{m}\sum_{i\in{J}_j}\exp\Big(\frac{\bzi}{\tau^*}\Big)\exp\Big(\frac{(\boi)^{\top}X}{\tau^*}\Big)f(Y|(\ai)^{\top}X+\bi,\vi),
    \end{align*}
    for almost surely $(X,Y)$. This results leads to $\{((a_i)^{\top}X+b_i,\nu_i):i\in J_j\}\equiv\{((\ai)^{\top}X+\bi,\vi):i\in J_j\}$, for almost surely $X$ for any $j\in[m]$. Therefore, we have
    \begin{align*}
        \{(a_i,b_i,\nu_i):i\in J_j\}\equiv\{(\ai,\bi,\vi):i\in J_j\},
    \end{align*}
    for any $j\in[m]$. As a consequence, 
    \begin{align*}
        G=\sum_{j=1}^{m}\sum_{i\in J_j}\exp\Big(\frac{\beta_{0i}}{\tau}\Big)\delta_{(\beta_{1i},\tau,a_i,b_i,\nu_i)}=\sum_{j=1}^{m}\sum_{i\in J_j}\exp\Big(\frac{\beta_{0i}}{\tau}\Big)\delta_{(\lambda\boi,\lambda\tau^*,\ai,\bi,\vi)}=G_*(\lambda).
    \end{align*}
    Hence, we reach the conclusion of this proposition.
\end{proof}

\begin{proposition}
    \label{prop:modified_identifiability}
    Assume that $G$ is a mixing measure in $\mathcal{O}_k(\Theta)$ that satisfy $p_{G}(Y|X)=p_{G_*}(Y|X)$ for almost surely $(X,Y)$. Then, we obtain that $G\equiv G_*$.
\end{proposition}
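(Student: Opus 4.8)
The plan is to follow the skeleton of the proof of Proposition~\ref{prop:standard_identifiability} and to show that the non-linearity of $\sigma$ eliminates the one piece of gauge freedom that survives there. First, expanding $p_G(Y|X)=p_{G_*}(Y|X)$ and invoking identifiability of finite mixtures of location--scale Gaussians (exactly as in the proof of Proposition~\ref{prop:standard_identifiability}), I would obtain that $G$ has $k'=k_*$ support points and that, after relabelling, $\softmax\big(\frac{\sigma(\beta_{1i}^\top X)+\beta_{0i}}{\tau}\big)=\softmax\big(\frac{\sigma((\beta_{1i}^*)^\top X)+\beta_{0i}^*}{\tau^*}\big)$ for almost every $X$ and every $i\in[k_*]$, with a compatible grouping of the expert triples $(a_i,b_i,\nu_i)$. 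Since softmax is invariant under a common additive shift of its logits, this forces $\frac{\sigma(\beta_{1i}^\top X)+\beta_{0i}}{\tau}-\frac{\sigma((\beta_{1i}^*)^\top X)+\beta_{0i}^*}{\tau^*}$ to be some function $c(X)$ independent of $i$; evaluating at the anchored component (Assumption~(A.2) for $G_*$, and the same normalisation $\beta_{1k_*}=\zerod$, $\beta_{0k_*}=0$ for $G$, as in Proposition~\ref{prop:standard_identifiability}) collapses $c(X)$ to the constant $\sigma(0)\big(\tfrac1\tau-\tfrac1{\tau^*}\big)$, so that for each $i$ the function $X\mapsto\frac{\sigma(\beta_{1i}^\top X)}{\tau}-\frac{\sigma((\beta_{1i}^*)^\top X)}{\tau^*}$ is constant on $\mathcal{X}$.

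The heart of the argument --- and the only genuinely new step relative to the linear-gate case --- is to deduce $\tau=\tau^*$ and $\beta_{1i}=\beta_{1i}^*$ from this. For the linear gate the analogous identity only yields $\beta_{1i}/\tau=\beta_{1i}^*/\tau^*$ and leaves the dilation $(\beta_{1i},\beta_{0i},\tau)\mapsto(\lambda\beta_{1i},\lambda\beta_{0i},\lambda\tau)$ free, which is exactly the residual $G_*(\lambda)$ ambiguity in Proposition~\ref{prop:standard_identifiability}. I would instead differentiate the constant function in $X$, obtaining $\frac{\sigma'(\beta_{1i}^\top X)}{\tau}\beta_{1i}=\frac{\sigma'((\beta_{1i}^*)^\top X)}{\tau^*}\beta_{1i}^*$; for a non-constant $\sigma$ this gives $\beta_{1i}=\zerod\Leftrightarrow\beta_{1i}^*=\zerod$, and for the indices with $\beta_{1i}^*\neq\zerod$ --- at least one of which exists by Assumption~(A.4) --- the two gradient vectors being everywhere proportional forces $\beta_{1i}=\mu\beta_{1i}^*$, reducing the problem to the scalar functional equation $\tfrac{\mu}{\tau}\,\sigma'(\mu z)=\tfrac1{\tau^*}\,\sigma'(z)$ on a non-degenerate interval, subject to $\tfrac1\tau\sigma(\mu z)-\tfrac1{\tau^*}\sigma(z)\equiv\mathrm{const}$. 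The crux is to argue that these admit only $\mu=1$: this is where the structure of $\sigma$ imposed by Definition~\ref{def:modified_function_exact} enters essentially (in particular that $\sigma$ is neither affine nor homogeneous under scaling --- precisely the features that fail for $z\mapsto z^p$, which is why power activations are excluded). Once $\mu=1$ one has $\beta_{1i}=\beta_{1i}^*$; since $\beta_{1i}^*\neq\zerod$ makes $\sigma(\beta_{1i}^\top X)$ non-constant in $X$, the identity then forces $\tau=\tau^*$, and its constant term gives $\beta_{0i}=\beta_{0i}^*$; the remaining components ($\beta_{1i}^*=\zerod$) inherit $\tau=\tau^*$ and hence also match.

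Having established $\tau=\tau^*$ and $(\beta_{1i},\beta_{0i})=(\beta_{1i}^*,\beta_{0i}^*)$ for all $i$ (up to permutation), I would conclude exactly as in Proposition~\ref{prop:standard_identifiability}: partition $[k_*]$ into blocks on which the gating weights $\exp(\beta_{0i}/\tau)$ coincide, cancel the common gating factors block by block, and apply Gaussian identifiability once more to get $\{(a_i,b_i,\nu_i):i\in J\}=\{(a_i^*,b_i^*,\nu_i^*):i\in J\}$ for each block $J$; reassembling the atoms yields $G\equiv G_*$. The main obstacle is the dilation step above: it is the sole ingredient absent from the linear-gate proof, the only point where the non-linearity of $\sigma$ is used in an essential way, and where Assumptions~(A.2), (A.4) and Definition~\ref{def:modified_function_exact} must be brought together.
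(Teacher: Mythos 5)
Your outline matches the paper up to the point where the scaling ambiguity of the linear gate has to be eliminated, and also at the end (block partition of $[k_*]$ plus identifiability of location--scale Gaussian mixtures for the expert triples). But at the pivotal step you take a different route from the paper and, more importantly, you do not actually complete it. You reduce matters to the two-scale functional equation $\tfrac{1}{\tau}\sigma(\mu z)-\tfrac{1}{\tau^*}\sigma(z)\equiv\mathrm{const}$ (together with its differentiated form) and then write that ``the crux is to argue that these admit only $\mu=1$,'' appealing to Definition~\ref{def:modified_function_exact}. That is precisely the claim that needs a proof, and the first-order independence condition does not obviously supply it: it is a linear-independence statement about $\{1,\bar{\sigma}(X,w),\partial\bar{\sigma}/\partial w^{(u)}(X,w)\}$ at a \emph{single} parameter value $w$, whereas your equation relates $\sigma$ evaluated along two different directions/scales ($\beta_{1i}$ versus $\beta^*_{1i}$). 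Ruling out nontrivial $(\mu,\tau,\tau^*)$ requires additional structure of $\sigma$ (for sigmoid one can argue via analyticity and the limits at $\pm\infty$, for instance), none of which is invoked or verified in your proposal; ``neither affine nor homogeneous'' is not by itself sufficient. So as written the argument has a genuine gap exactly at the step you yourself identify as the heart of the matter.

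The paper avoids this entirely with a more elementary mechanism: after using softmax translation invariance and the anchor normalization of the $k_*$-th component (Assumption (A.2), imposed for both $G$ and $G_*$) to remove the common shift, it evaluates the resulting identity at $X=\zerod$ and uses $\sigma(0)\neq 0$ to conclude $\tau=\tau^*$ directly, whence $\beta_{0i}=\beta^*_{0i}$ and then $\sigma(\beta_{1i}^{\top}X)=\sigma((\beta^*_{1i})^{\top}X)$ a.s., giving $\beta_{1i}=\beta^*_{1i}$ -- no differentiation, no functional equation, and no use of Definition~\ref{def:modified_function_exact} at this stage. In other words, what breaks the dilation gauge in the paper is the nonzero constant offset $\sigma(0)/\tau$ in the gating numerator, not the linear-independence condition on $\sigma$ and its derivatives. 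If you want to salvage your derivative-based route, you must actually solve the constrained system $\tfrac{\mu}{\tau}\sigma'(\mu z)=\tfrac{1}{\tau^*}\sigma'(z)$, $\tfrac{1}{\tau}\sigma(\mu z)-\tfrac{1}{\tau^*}\sigma(z)\equiv\mathrm{const}$ for the admissible activations (and justify differentiating an a.e.\ identity on the support of $X$); otherwise the simpler evaluation at $X=\zerod$ is the step to adopt.
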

\begin{proof}[Proof of Proposition~\ref{prop:modified_identifiability}]
    Firstly, let us recall that two mixing measures $G$ and $G'$ admit the following forms:
    \begin{align*}
        G=\sum_{i=1}^{k'}\exp\Big(\frac{\beta_{0i}}{\tau}\Big)\delta_{(\beta_{1i},\tau,a_i,b_i,\nu_i)}, \qquad G_*=\sum_{i=1}^{k_*}\exp\Big(\frac{\bzi}{\tau^*}\Big)\delta_{(\boi,\tau^*,\ai,\bi,\vi)},
    \end{align*}
    where $k'\leq k$. Since $p_G(Y|X)=p_{G_*}(Y|X)$ for almost surely $(X,Y)$, we have
    \begin{align}
        \label{eq:identifiable_equation_modified}
        \sum_{i=1}^{k}\softmax\Big(\frac{\sigma((\beta_{1i})^{\top}X)+\beta_{0i}}{\tau}\Big)\cdot f(Y|a_i^{\top}X+b_i,\nu_i)=\sum_{i=1}^{k'}\softmax\Big(\frac{\sigma((\boi)^{\top}X)+\bzi}{\tau^*}\Big)\cdot f(Y|(\ai)^{\top}+\bi,\vi).
    \end{align}
    As the mixture of location-scale Gaussian distributions is identifiable \cite{Teicher-1960,Teicher-1961,Teicher-63}, it follows that $k'=k_*$ and
    \begin{align*}
        \Big\{\softmax\Big(\frac{\sigma((\beta_{1i})^{\top}X)+\beta_{0i}}{\tau}\Big):i\in[k']\Big\}=\Big\{\softmax\Big(\frac{\sigma((\boi)^{\top}X)+\bzi}{\tau^*}\Big):i\in[k_*]\Big\},
    \end{align*}
    for almost surely $X$. WLOG, we may assume that 
    \begin{align}
        \label{eq:soft-soft_modified}
        \softmax\Big(\frac{\sigma((\beta_{1i})^{\top}X)+\beta_{0i}}{\tau}\Big)=\softmax\Big(\frac{\sigma((\boi)^{\top}X)+\bzi}{\tau^*}\Big),
    \end{align}
    for almost surely $X$ for any $i\in[k_*]$. It is worth noting that the $\softmax$ function is invariant to translations, then equation~\eqref{eq:soft-soft_modified} indicates that $[\sigma((\beta_{1i})^{\top}X)+\beta_{0i}]/\tau=[\sigma((\boi)^{\top}X)+\bzi]/\tau^*+v$ for some $v\in\mathbb{R}$. However, from the assumptions $\sigma((\beta^*_{1k_*})^{\top}X)+\beta^*_{0k_*}=0$, we deduce $v=0$. Consequently, we get that
    \begin{align*}
        [\sigma((\beta_{1i})^{\top}X)+\beta_{0i}]/\tau=[\sigma((\boi)^{\top}X)+\bzi]/\tau^*,
    \end{align*}
    for almost surely $X$ for any $i\in[k]$. Thus, when $X=\zerod$, we deduce that $\beta_{0i}/\tau=\bzi/\tau^*$, which implies that $\sigma((\beta_{1i})^{\top}X)/\tau=\sigma((\boi)^{\top}X)/\tau^*$ for almost surely $X$. Again, when $X=\zerod$, since $\sigma(0)\neq 0$, we obtain that $\tau=\tau^*$, and therefore, $\beta_{1i}=\boi$ for any $i\in[k_*]$.
    
    Then, equation~\eqref{eq:identifiable_equation_modified} can be rewritten as
    \begin{align}
        \label{eq:new_identifiable_equation_modified}
        \sum_{i=1}^{k_*}\exp\Big(\frac{\beta_{0i}}{\tau}\Big)\exp\Big(\frac{\sigma((\beta_{1i})^{\top}X)}{\tau}\Big)f(Y|(a_i)^{\top}X+b_i,\nu_i)=\sum_{i=1}^{k_*}\exp\Big(\frac{\bzi}{\tau^*}\Big)\exp\Big(\frac{\sigma((\boi)^{\top}X)}{\tau^*}\Big)f(Y|(\ai)^{\top}X+\bi,\vi),
    \end{align}
    for almost surely $(X,Y)$. Next, we denote $J_1,J_2,\ldots,J_m$ as a partition of the index set $[k_*]$, where $m\leq k$, such that $\exp(\beta_{0i}/\tau)=\exp(\beta^*_{0i'}/\tau^*)$ for any $i,i'\in J_j$ and $j\in[k_*]$. On the other hand, when $i$ and $i'$ do not belong to the same set $J_j$, we let $\exp(\beta_{0i}/\tau)\neq\exp(\beta_{0i'}/\tau^*)$. Thus, we can reformulate equation~\eqref{eq:new_identifiable_equation_modified} as
    \begin{align*}
        &\sum_{j=1}^{m}\sum_{i\in{J}_j}\exp\Big(\frac{\beta_{0i}}{\tau}\Big)\exp\Big(\frac{\sigma((\beta_{1i})^{\top}X)}{\tau}\Big)f(Y|(a_i)^{\top}X+b_i,\nu_i)\\
        &\hspace{3cm}=\sum_{j=1}^{m}\sum_{i\in{J}_j}\exp\Big(\frac{\bzi}{\tau^*}\Big)\exp\Big(\frac{\sigma((\boi)^{\top}X)}{\tau^*}\Big)f(Y|(\ai)^{\top}X+\bi,\vi),
    \end{align*}
    for almost surely $(X,Y)$. This results leads to $\{((a_i)^{\top}X+b_i,\nu_i):i\in J_j\}\equiv\{((\ai)^{\top}X+\bi,\vi):i\in J_j\}$, for almost surely $X$ for any $j\in[m]$. Therefore, we have
    \begin{align*}
        \{(a_i,b_i,\nu_i):i\in J_j\}\equiv\{(\ai,\bi,\vi):i\in J_j\},
    \end{align*}
    for any $j\in[m]$. As a consequence, 
    \begin{align*}
        G=\sum_{j=1}^{m}\sum_{i\in J_j}\exp\Big(\frac{\beta_{0i}}{\tau}\Big)\delta_{(\beta_{1i},\tau,a_i,b_i,\nu_i)}=\sum_{j=1}^{m}\sum_{i\in J_j}\exp\Big(\frac{\beta_{0i}}{\tau}\Big)\delta_{(\boi,\tau^*,\ai,\bi,\vi)}=G_*.
    \end{align*}
    Hence, we reach the conclusion of this proposition.
\end{proof}
\section{Experimental Details}
\label{appendix:experiments}

In this appendix, we provide additional details regarding the experimental setups.

\subsection{Expectation-Maximization (EM) Algorithm}
\label{appendix:exp-alg}
Our approach for parameter estimation in the dense-to-sparse gating Gaussian MoE model employs an Expectation-Maximization (EM) algorithm. We present the Expectation-Maximization (EM) algorithm used for parameter estimation in the context of a Mixture of Experts (MoE) model. The derivation and formulation closely follow \cite{JORDAN19951409}.

The EM algorithm is an iterative optimization technique used for finding maximum likelihood estimates of parameters in models with latent variables. In the context of MoE, the latent variables correspond to the assignment of data points to specific experts. Denote the latent variables as $Z_{ik}$, indicating whether expert $k$ is responsible for the observation $i$, i.e.,
\begin{align*}
    Z_{ij} = 
    \begin{cases}
        1 & \text{if $Y_i$ is generated from the $j$th expert},\\
        0 & \text{Otherwise.}
    \end{cases}
\end{align*}
The algorithm alternates between two key steps: Expectation and Maximization.
\subsubsection{Expectation Step}
In the E-step, the algorithm estimates the posterior probabilities of the latent variables given the observed data and the current parameters. This involves calculating the responsibility of each expert for each data point. We denote the responsibility of the $j$th expert for $(X_i, Y_i)$ at iteration $t$ by
\begin{align*}
    \tau_{ij}^{(t)} 
    &= \mathbb{E}[Z_{i, j}|(X_i, Y_i); \Theta^{(t)}] \\
    &= \frac{\mathrm{Softmax}\left(\frac{\beta_{1j}^{(t)\top} X_i + \beta_{0j}^{(t)}}{\tau^{(t)}}\right) \cdot f\left(Y_i \Bigm\vert a_j^{(t)\top} X_i + b_j^{(t)}, \nu_j^{(t)}\right)}{\sum_{l = 1}^k \mathrm{Softmax}\left(\frac{\beta_{1l}^{(t)\top} X_i + \beta_{0l}^{(t)}}{\tau^{(t)}}\right) \cdot f\left(Y_i \Bigm\vert a_l^{(t)\top} X_i + b_l^{(t)}, \nu_l^{(t)}\right)}.
\end{align*}
Here, $\Theta^{(t)}$ represents the set of all parameters in the Mixture of Experts (MoE) model at iteration $t$.
\subsubsection{Maximization Step}
In the Maximization step, the model parameters are updated to maximize the expected log-likelihood obtained from the latent variable distribution derived in the Expectation step. This involves updating the parameters of both the expert models and the gating network based on the responsibilities computed in the E-step.

\textbf{M-step for expert parameters.} The update equations for expert parameters are given by:
\begin{align*}
    \nu_j^{(t+1)} &= \frac{1}{\sum_{i=1}^n \tau_{ij}^{(t)}} \sum_{i=1}^n \tau_{ij}^{(t)} \left(Y_i - (a_j^{(t)\top}X_i + b_j^{(t)})\right)^2, \\
    \theta_j^{(t+1)} &= \left(\sum_{i=1}^n \tau_{ij}^{(t)} \cdot \frac{\widetilde{X}_i \widetilde{X}_i^\top}{\nu_j^{(t)}}\right)^{-1} \left(\sum_{i=1}^n \tau_{ij}^{(t)} \cdot \frac{Y_i \widetilde{X}_i}{\nu_j^{(t)}}\right),
\end{align*}
where $\widetilde{X}_i^\top = (X_i^\top, 1)$, and $\theta_j^{(t)\top} = (a_j^{(t)\top}, b_j^{(t)})$ for $j = 1, 2, ..., k$.

\textbf{M-step for gating parameters.} Finally, the update for gating parameters can be viewed as a specific form of a generalized linear model, specifically a multinomial logit model, as observed by \cite{Jordan-1994}. Efficient fitting of such models is achieved through iteratively reweighted least squares (IRLS), a variant of Newton's method. First, let's denote $\theta_{0j}^{(t)\top} := (\beta_{1j}^{(t)\top}, \beta_{0j}^{(t)}, \tau^{(t)})$. Then, the update rule for the gating network parameters is given by:
\begin{align*}
    \theta_{0j}^{(t+1)} = \theta_{0j}^{(t)} + \eta (R_{j}^{(t)})^{-1} e_j^{(t)},
\end{align*}
where
\begin{align*}
    e_j^{(t)} &= \sum_{i=1}^n \sum_{j=1}^k \left(\tau_{ij}^{(t)} - \mathrm{Softmax}\left(\frac{\beta_{1j}^{(t)\top}X_i + \beta_{0j}^{(t)}}{\tau^{(t)}}\right)\right) g_{ij}^{(t)}, \\
    R_j^{(t)} &= \sum_{i=1}^n \sum_{j=1}^k \mathrm{Softmax}\left(\frac{\beta_{1j}^{(t)\top}X_i + \beta_{0j}^{(t)}}{\tau^{(t)}}\right) \left(1 - \mathrm{Softmax}\left(\frac{\beta_{1j}^{(t)\top}X_i + \beta_{0j}^{(t)}}{\tau^{(t)}}\right)\right) g_{ij}^{(t)}g_{ij}^{(t)\top},\\
    g_{ij}^{(t)} &= \nabla_{\theta_{0j}}\left(\frac{\beta_{1j}^{(t)\top}X_i + \beta_{0j}^{(t)}}{\tau^{(t)}}\right) = \left(X_i^\top, 1, -\frac{\beta_{1j}^{(t)\top}X_i + \beta_{0j}^{(t)}}{(\tau^{(t)})^2}\right)^\top.
\end{align*}

These two steps are iteratively repeated until convergence, providing a framework for estimating the model parameters that maximize the likelihood of the observed data. Furthermore, it is noteworthy to highlight that we choose the convergence criterion as $\epsilon = 10^{-6}$ and execute a maximum of $1000$ iterations for the EM algorithm, with an $100$ iterations for the Iteratively Reweighted Least Squares (IRLS) algorithm at each EM iteration, employing a learning rate of $\eta = 0.01$.

\subsection{Experimental Setup}
\textbf{Synthetic Data.}
We conducted experiments using synthetic datasets generated with the true mixing measure $G_{*}=\sum_{i=1}^{2}\exp(\bzi/\tau^*)\delta_{(\boi,\tau^*,\ai,\bi,\vi)}$ of order $k^* = 2$. We generated i.i.d samples $\{(X_i, Y_i)\}_{i=1}^n$ by initially sampling $X_i$ values from a zero-mean Gaussian distribution with unit variance, followed by sampling $Y_i$ values from the true conditional density $g_{G_*}(Y | X)$ of a Gaussian mixture with softmax gating, comprising $k_{*}=2$ experts:
\begin{align}
    \label{eq:standard_density_exp}
    g_{G_{*}}(Y | X) = \sum_{i=1}^{2}\softmax\Big(\dfrac{(\beta^*_{1i})^{\top}X+\beta^*_{0i}}{\tau^*}\Big)\cdot f(Y|(a^*_i)^{\top}X+b^*_i,\nu^*_i).
\end{align}
The values corresponding to the true parameters are detailed in Table~\ref{tab:true-params}.
\begin{table}[h]
    \centering
    \begin{tabular}{cccc}
    \toprule
                    & Gating parameters & 
                    \multicolumn{2}{c}{Expert parameters}\\
    \midrule
        Expert $1$  & $(\beta_{01}^{*}, \beta_{11}^{*}, \tau^{*}) = (0,-10, 0.5)$ & 
        
                   $(a_{1}^{*}, b_{1}^{*}, \nu_{1}^{*}) = (-1, 2, 0.3)$ &\\

        Expert $2$ & $(\beta_{02}^{*}, \beta_{12}^{*}, \tau^{*}) = (0, 0, 0.5)$ & $ (a_{2}^{*}, b_{2}^{*}, \nu_{2}^{*}) = (1, 2, 0.4)$&\\
    \bottomrule
    \end{tabular}
    \caption{Parameter values of the true model.}
    \label{tab:true-params}
\end{table}

\begin{figure}[!ht]
    \centering
    \includegraphics[scale=0.43]{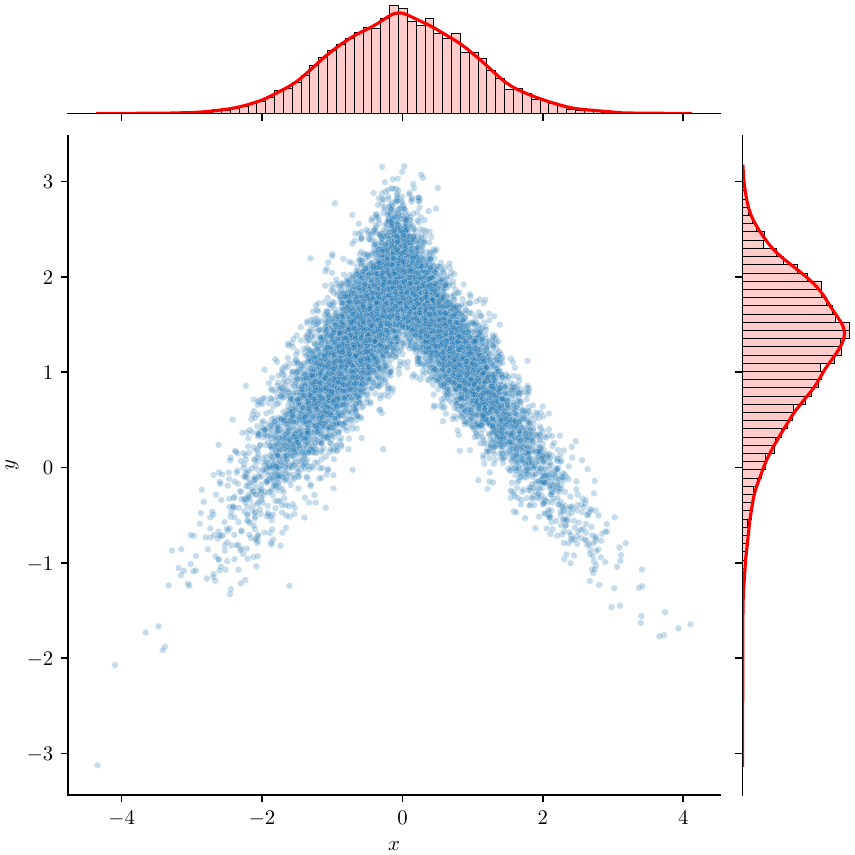}
    \caption{A visual illustration depicting the correlation between variables $X$ and $Y$, along with their individual marginal distributions.}
    \label{fig:regression-plot}
\end{figure}

\textbf{Initialization.}
For each $k\in\{k_{*},k_{*}+1\}$, elements from the set $\{1, 2, ..., k\}$ are randomly distributed among $k_{*}$ Voronoi cells denoted as $\mathcal{C}_1, \mathcal{C}_2, \ldots, \mathcal{C}_{k}$, ensuring that each cell contains at least one element. This process is repeated for each replication. Subsequently, for each $j\in[k]$, all parameters are initialized by sampling from a Gaussian distribution centered around their true counterparts with a small variance, where $i\in \mathcal{C}_j$.

\end{document}